\title[$(\epsilon, u)$-Adaptive Regret Minimization in Heavy-Tailed Bandits]{$(\epsilon, u)$-Adaptive Regret Minimization in Heavy-Tailed Bandits}
\newcommand{\cmark}{\textcolor{green!50!black}{\ding{51}}}%
\newcommand{\xmark}{\textcolor{red!70!black}{\ding{55}}}%
\newcounter{daggerfootnote}
\newcommand{\E}{\mathop{\mathbb{E}}}
\pgfplotsset{compat=newest}
\DeclareRobustCommand{\ie}{\emph{i.e.},\@\xspace}
\newcommand{\vnu}{\bm{\nu}}
\definecolor{citrine}{rgb}{1, 0.95, 0.5}
\newcommand{\dsb}[1]{\llbracket #1 \rrbracket}
\DeclareMathOperator*{\argmax}{arg\,max}
\newcommand{\namelabel}[2]{%
  \phantomsection
  \renewcommand{\@currentlabel}{#1}
  \label{#2}
}
\newenvironment{proof}{\par\noindent{\bf Proof\ }}{\hfill$\blacksquare$\\[2mm]}
\begin{document}

\setlength{\abovedisplayskip}{4pt}
\setlength{\belowdisplayskip}{4pt}
\setlength{\textfloatsep}{8pt}

\maketitle

\begin{abstract}
  Heavy-tailed distributions naturally arise in several settings, from finance to telecommunications. 
  While regret minimization under subgaussian or bounded rewards has been widely studied, learning with heavy-tailed distributions only gained popularity over the last decade. In this paper, we consider the setting in which the reward distributions have finite absolute raw moments of maximum order $1+\epsilon$, uniformly bounded by a constant $u<+\infty$, for some $\epsilon \in (0,1]$. In this setting, we study the regret minimization problem when $\epsilon$ and $u$ are unknown to the learner and it has to adapt. First, we show that adaptation comes at a cost and derive two negative results proving that the same regret guarantees of the non-adaptive case cannot be achieved with no further assumptions. Then, we devise and analyze a fully data-driven trimmed mean estimator and propose a novel adaptive regret minimization algorithm, \algnameshort, that leverages such an estimator. Finally, we show that \algnameshort is the first algorithm that, under a known distributional assumption, enjoys regret guarantees nearly matching those of the non-adaptive heavy-tailed case.
\end{abstract}

\begin{keywords}%
  bandits, heavy-tailed distributions, adaptivity%
\end{keywords}

\section{Introduction}
In this paper, we investigate the stochastic \emph{multi-armed bandit} problem~\citep[MAB,][]{auer2002finite, lattimore2020bandit} under the assumption of \emph{heavy-tailed} (HT) reward distributions.
In the stochastic MAB setting~\citep{robbins1952some}, a learner has access to a set of $K \in \mathbb{N}_{\ge 2}$ actions (\emph{i.e.},~\emph{arms}). Each arm $i\in[K] \coloneqq \{1,\dots,K\}$ is associated with a reward probability distribution $\nu_i \in \Delta(\mathbb{R})$,\footnote{Given a measurable space $(\mathcal{X},\mathcal{F})$, we denote with $\Delta(\mathcal{X})$ the set of probability measures over $(\mathcal{X},\mathcal{F})$.} having finite expected value $\mu_i \coloneqq \E_{X \sim \nu_i}[X]$ (\emph{i.e.},~\emph{expected reward}). We denote with $\vnu = (\nu_1,\dots,\nu_K) \in \Delta(\mathbb{R})^K$ a specific bandit instance. Let $T \in \mathbb{N}$ be the learning horizon, at every round $t\in[T]$, the learner selects an arm $I_t \in [K]$ and, in response, the environment reveals the $X_t \sim \nu_{I_t}$ (\emph{i.e.}, \emph{reward}) sampled from the distribution $\nu_{I_t}$. The performance of a learner running an algorithm $\texttt{Alg}$ is quantified by the \textit{(expected cumulative) regret} over $T$ rounds, defined as:
\begin{equation}
\label{eq:regret}
    R_T(\texttt{Alg},\vnu) \coloneqq  T \mu_1 - \mathbb{E}\left[\sum_{t=1}^T \mu_{I_t}\right] = \mathbb{E}\left[\sum_{t=1}^T \Delta_{I_t}\right],
\end{equation}
where we assume, without loss of generality, that $1$ is the optimal arm and $\Delta_i \coloneqq \mu_1 - \mu_i$ is the suboptimality gap of arm $i \in [K]$, and the expectation is taken w.r.t. the randomness of the reward and the possible randomness of the algorithm.



Although most of the literature in stochastic MABs usually assumes a convenient tail property of the reward distributions, like \emph{subgaussian}~\citep{lattimore2020bandit} or \emph{bounded}~\citep{auer2002finite} rewards,
in many practical scenarios, such as financial environments ~\citep{gagliolo2011algorithm} or network routing problems ~\citep{liebeherr2012delay}, where uncertainty has a significant impact, {heavy-tailed}  distributions naturally arise. In these cases, the tails decay slower than a Gaussian, the moment-generating function is no longer finite, and the moments of all finite orders might not exist. This prevents the application of standard concentration tools, such as the Hoeffding's inequality~\citep{boucheron2013concentration}, calling for more complex technical tools.

This work investigates the regret minimization problem for MAB with HT reward distributions, according to the setting introduced in the seminal work \citep{bubeck2013bandits}. We assume that the \emph{absolute raw moments} of the reward distributions of order up to $1+\epsilon$, with $\epsilon \in (0,1]$ (\emph{i.e.,} moment order) is finite and uniformly bounded by a constant $u\in\mathbb{R}_{\ge 0}$ (\emph{i.e.,} moment bound), namely:
\begin{align}
    \boldsymbol{\nu} \in \htsetk \coloneqq \left\{\boldsymbol{\nu} \in \Delta(\mathbb{R})^K \,:\,  \E_{X \sim \nu_i}\left[|X|^{1+\epsilon}\right] \le u,~~\forall i \in [K] \right\},
\end{align}
In \cite{bubeck2013bandits}, the authors assume that \emph{$\epsilon$ and $u$ are known to the learner}. They show that if the variance is finite (\emph{i.e.}, $\epsilon = 1$), but the higher order moments are not, the same (apart from constants)  instance-dependent regret guarantees of order $O\big(\sum_{i : \Delta_i > 0}\frac{\sigma^2}{\Delta_i}\log T\big)$
attained in the subgaussian setting ~\citep{lattimore2020bandit} can be achieved. However, for general $\epsilon\in(0,1)$, the instance-dependent regret becomes of order
$
    O \big(\sum_{i: \Delta_i > 0}\big(\frac{u}{\Delta_i}\big)^{1/\epsilon}\log T\big),
$
showing the detrimental effect of $\epsilon$ on the dependence of the suboptimality gaps.
Moreover, they show that these regret guarantees are tight (up to constant terms) by deriving the corresponding asymptotic lower bound. From a worst-case regret perspective, the presented results translate into a regret bound of order
$
    \widetilde{O}\big(K^{\frac{\epsilon}{1+\epsilon}} (uT)^{\frac{1}{1+\epsilon}}\big),
$
for sufficiently large $T$, matching the lower bound, up to logarithmic terms. This regret bound degenerates to linear when $\epsilon \rightarrow 0$, \emph{i.e.,} when only absolute moment of order $1$ exists.
However, these matching results are obtained thanks to the knowledge of both $\epsilon$ and $u$, \emph{i.e.,} by \emph{non-adaptive} algorithms. Indeed, $\epsilon$ and $u$ are needed by the \emph{algorithm} to drive exploration via the optimistic index, and, in some cases, to construct the expected reward \emph{estimator} too.

Nevertheless, recent works have shed light on the possibility of removing this knowledge at the cost of additional assumptions~\citep[e.g.,][]{lee2020optimal,ashutosh2021bandit,huang2022adaptive}. In particular, \cite{huang2022adaptive} introduce
the \textit{truncated non-negativity} assumption designed for losses, that, by converting it for rewards, leads to the following \emph{truncated non-positivity} assumption.
\begin{restatable}[Truncated Non-Positivity]{ass}{Assumption}\label{ass:truncated_np}
Let $\bm{\nu}$ be a bandit. For the optimal arm $1$, we have: 
\begin{equation}
    \label{eq:assumption}
    \mathbb{E}_{X \sim \nu_1}\left[X \mathbbm{1}_{\{|X|>M\}}\right] \le 0, \quad \forall M \ge 0.
\end{equation}
\end{restatable}
This assumption requires that the optimal arm $1$ has a larger probability mass on the negative semi-axis but still allows the distribution to have an arbitrary support covering, potentially, the whole $\mathbb{R}$. 
To the best of the authors' knowledge, this is the only assumption in literature truly independent of the values of $\epsilon$ and $u$. Additionally, as discussed by~\citet{huang2022adaptive}, it is relatively weak if compared to other standard assumptions in the bandit literature. Under Assumption~\ref{ass:truncated_np}, \emph{without the knowledge of $\epsilon$ and $u$}, \citet{huang2022adaptive} provide an $(\epsilon,u)$-\emph{adaptive}\footnote{We use the word \emph{adaptive} to qualify algorithms that do not know the values of $\epsilon$ and/or $u$.} regret minimization algorithm, \texttt{AdaTINF}, that succeeds in matching the worst-case regret lower bound of~\citep{bubeck2013bandits} derived for the non-adaptive case. However, no instance-dependent analysis is provided of \texttt{AdaTINF}\footnote{\citet{huang2022adaptive} actually provide algorithm \opthtinf with an instance-dependent analysis that, however, does not match the asymptotic lower bound of~\citep{bubeck2013bandits}.} and the following research questions remain open:

\begin{enumerate}[label={\textbf{Research Question \arabic*}~}, leftmargin=3.7cm, ref={{Research Question \arabic*}}, topsep=-1pt, noitemsep]
    \item \label{question1} \emph{Is Assumption~\ref{ass:truncated_np} needed to devise $(\epsilon,u)$-adaptive algorithms (with unknown $(\epsilon,u)$) matching the worst-case lower bound of order $\Omega\big(K^{\frac{\epsilon}{1+\epsilon}} (uT)^{\frac{1}{1+\epsilon}}\big)$ (\emph{i.e.,} as if we knew $(\epsilon,u)$)?}
    \item \label{question2} \emph{Is it possible, under Assumption~\ref{ass:truncated_np}, to devise $(\epsilon,u)$-adaptive algorithms (with unknown $(\epsilon,u)$) matching   the instance-dependent regret lower bound of order ~$\Omega \big(\sum_{i: \Delta_i > 0}\big(\frac{u}{\Delta_i}\big)^{1/\epsilon}\log T\big) $ (\emph{i.e.,} as if we knew $(\epsilon,u)$)? }
\end{enumerate}


\noindent\textbf{Original Contributions.}~~In this paper, we investigate the regret minimization problem in heavy-tailed bandits giving up the knowledge of $\epsilon$ and $u$. Specifically, we address \ref{question1} and \ref{question2}. The original contributions of the paper are summarized as follows:
\begin{itemize}[noitemsep, leftmargin=*, topsep=0pt]
    \item In Section~\ref{sec:lower_bounds}, we address \ref{question1}, by characterizing the challenges of the regret-minimization problem in HT bandits without knowing $\epsilon$ and $u$. In particular, we provide two \emph{negative results} (Theorems~\ref{thr:u_adaptive_lb} and~\ref{thr:epsilon_adaptive_lb}), showing that, without any additional assumption, there exists no $(\epsilon,u)$-adaptive algorithm that archives the same worst-case regret guarantees as if $\epsilon$ or $u$ were known~\citet{bubeck2013bandits}. These results provide a first justification of Assumption~\ref{ass:truncated_np}. Furthermore, we show how Assumption~\ref{ass:truncated_np} does not reduce the complexity of the regret minimization problem even in the non-adaptive case (Theorem~\ref{thr:AssLowerBound}). These results rely on accurately defined HT bandit instances and information theory tools for deriving the lower bounds.
    \item In Section~\ref{sec:tmest}, we enhance the \emph{trimmed mean} estimator, commonly used in HT bandits, to make it fully data-driven. Indeed, in the seminal paper~\citep{bubeck2013bandits}, both ($i$) the trimming threshold and ($ii$) the upper confidence bound were computed thanks to the knowledge of $\epsilon$ and $u$. Taking inspiration from Huber regression \citep{wang2021new}, we overcome  ($i$) the need for $\epsilon$ and $u$ in the estimator by developing a novel approach to recover an estimated threshold via \emph{root-finding}. Leveraging an analysis based on the \emph{self-bounding functions}~\citep{maurer2006concentration, maurer2009empirical}, we control the accuracy of the estimated threshold (Lemma~\ref{thr:M_hat_bound}). In particular, we show that our threshold underestimates (in high probability) the one proposed by~\citet{bubeck2013bandits}. Furthermore, we overcome ($ii$) by resorting to \emph{empirical Bernstein inequality} \citep{maurer2009empirical}. This way, differently from~\citep{bubeck2013bandits}, we use the empirical variance to eliminate the dependence on $\epsilon$ and $u$ in the upper confidence bound (Lemma~\ref{thr:concentration_ineq}), preserving the desirable concentration properties of the delivered estimate (Theorem~\ref{thr:conc_ineq_2}).
    \item In Section~\ref{sec:algo}, we address \ref{question2}, by devising and analyzing a novel $(\epsilon,u)$-adaptive regret minimization algorithm, \algname (\algnameshort, Algorithm~\ref{alg:alg}), that operates without the knowledge of $\epsilon$ and $u$. \algnameshort is an \emph{optimistic anytime} algorithm that builds upon \robustucb of~\citet{bubeck2013bandits}, leveraging our trimmed mean estimator with estimated threshold. First, we show that, under Assumption~\ref{ass:truncated_np}, \algnameshort attains an instance-dependent regret bound of order $O\big( \sum_{i :\Delta_i > 0} \big(\big(\frac{u}{\Delta_i}\big)^{1/\epsilon} + \frac{\Delta_i}{\mathbb{P}_{\nu_i}(X\neq 0)}\big)\log T \big)$ (Theorem~\ref{thr:upper_bound}). This result shows that \algnameshort nearly matches the instance-dependent lower bound of~\cite{bubeck2013bandits} for the non-adaptive case, apart from the second logarithmic term, which, however, does not depend on the reciprocal of the suboptimality gaps, and originates from an additional forced exploration needed for computing the empirical threshold.\footnote{A similar additional term appears in the instantiation of \robustucb with Catoni estimator~\citep{bubeck2013bandits}.} Moreover, we show that \algnameshort suffers a worst-case regret bound of order $\widetilde{O}\big( K^{\frac{\epsilon}{1+\epsilon}} (uT)^{\frac{1}{1+\epsilon}} \big)$ (Theorem~\ref{thr:inst_indep_regret}), matching, up to logarithmic terms, the minimax lower bound of the non-adaptive case~\citep{bubeck2013bandits}. To the best of authors' knowledge, \algnameshort is the first $(\epsilon,u)$-adaptive algorithm for HT bandits that nearly matches both the instance-dependent and worst-case lower bounds of the non-adaptive case, under conditions (Assumption~\ref{ass:truncated_np}) not explicitly formulated in terms of $\epsilon$ and $u$.
\end{itemize}

\noindent In Section \ref{sec:related_works} we provide an up-to-date literature review on \textit{adaptivity} in heavy-tailed bandits. The proofs of the results presented in the main paper are reported in Appendix~\ref{apx:proofs}.

\section{Related Works}
\label{sec:related_works}
During the last ten years, the stochastic heavy-tailed bandit problem has been steadily increasing in popularity. In this section, we summarize the main contributions, with a particular focus on partially adaptive approaches. Table~\ref{tab:comparison} provides a comprehensive comparison.

\newcommand{\thickhline}{
    \noalign {\ifnum 0=`}\fi \hrule height 1pt
    \futurelet \reserved@a \@xhline
}
\begin{table}[!h]
\label{tab:comparison}
 \centering
 \renewcommand{\arraystretch}{1.2}
  \setlength{\tabcolsep}{2pt}
  \resizebox{.95\textwidth}{!}{
  \thinmuskip=1mu
\medmuskip=1mu
\thickmuskip=1mu
  \small
  \smallskip
  \begin{tabular}{|l|c:c|c:c|c:c|c:c|c|}
  \thickhline
   \textbf{Algorithm} & \multicolumn{4}{c|}{\textbf{Regret Bounds}} & \multicolumn{2}{c|}{$\boldsymbol{\epsilon}$\textbf{-adaptive}} & \multicolumn{2}{c|}{$\boldsymbol{u}$\textbf{-adaptive} } & \textbf{Assumption} \\
   \cline{2-9}
   & Instance-dependent & \rotatebox{90}{
   \hspace{-.2cm}$\begin{array}{l}
   \vspace{-.5cm}\\\text{Matching?{$^\mathsection$}}\\\vspace{-.5cm}
   \end{array}$
   } & Worst-case &  \rotatebox{90}{
   \hspace{-.2cm}$\begin{array}{l}
   \vspace{-.5cm}\\\text{Matching?{$^\mathparagraph$}}\\\vspace{-.5cm}
   \end{array}$
   } & \rotatebox{90}{
   \hspace{-.2cm}$\begin{array}{l}
   \vspace{-.4cm}\\\text{Estimator}\\\vspace{-.4cm}
   \end{array}$
   } & \rotatebox{90}{
   \hspace{-.2cm}$\begin{array}{l}
   \vspace{-.4cm}\\\text{Algorithm}\\\vspace{-.4cm}
   \end{array}$
   } & \rotatebox{90}{
   \hspace{-.2cm}$\begin{array}{l}
   \vspace{-.4cm}\\\text{Estimator}\\\vspace{-.4cm}
   \end{array}$
   } & \rotatebox{90}{
   \hspace{-.2cm}$\begin{array}{l}
   \vspace{-.4cm}\\\text{Algorithm}\\\vspace{-.4cm}
   \end{array}$
   } & \\
   \hline
    $\begin{array}{l}
         \text{\robustucb~- TM}  \vspace{-.2cm}\\
         \text{\citep{bubeck2013bandits}} 
    \end{array}$
   & $\displaystyle\sum_{i: \Delta_i >0} \left(\frac{u}{\Delta_i}\right)^{1/\epsilon}\log T$ & \cmark 
   & $\displaystyle K^\frac{\epsilon}{1+\epsilon}u^\frac{1}{1+\epsilon}T^\frac{1}{1+\epsilon}(\log T)^\frac{\epsilon}{1+\epsilon}$ & \cmark
   & \xmark
   & \xmark
   & \xmark
   & \xmark
   & --- \\
   \hline
   $\begin{array}{l}
         \text{\robustucb~- MoM $^\star$}  \vspace{-.2cm}\\
         \text{\citep{bubeck2013bandits}} 
    \end{array}$
   & $\displaystyle\sum_{i: \Delta_i >0} \left(\frac{v}{\Delta_i}\right)^{1/\epsilon}\log T$ & \cmark 
   & $\displaystyle K^\frac{\epsilon}{1+\epsilon}v^\frac{1}{1+\epsilon}T^\frac{1}{1+\epsilon}(\log T)^\frac{\epsilon}{1+\epsilon}$ & \cmark 
   & \cmark
   & \xmark
   & \cmark
   & \xmark
   & --- \\
   \hline
   $\begin{array}{l}
         \text{\robustucb~- Catoni $^\star$}  \vspace{-.2cm}\\
         \text{\citep{bubeck2013bandits}} 
    \end{array}$
   & $\displaystyle\sum_{i: \Delta_i >0} \left(\frac{v}{\Delta_i}+\Delta_i\right)\log T$ & \cmark 
   & $\displaystyle\sqrt{vKT\log T}+ \sum_{i:\Delta_i>0} \Delta_i\log T$ & \cmark 
   & \xmark
   & \xmark 
   & \cmark
   & \xmark 
   & $\epsilon = 1$ only \\
   \hline
   $\begin{array}{l}
         \text{\rmoss}  \vspace{-.2cm}\\
         \text{\citep{wei2020minimax}} 
    \end{array}$
   & $\displaystyle\sum_{i: \Delta_i>0} \log \Big(\frac{T\Delta_i^\frac{1+\epsilon}{\epsilon}}{K}\Big)\frac{1}{\Delta_i^{1/\epsilon}}$ & \cmark 
   & $\displaystyle K^\frac{\epsilon}{1+\epsilon}u^\frac{1}{1+\epsilon}T^\frac{1}{1+\epsilon}$ & \cmark 
   & \xmark
   & \xmark
   & \xmark
   & \xmark
   & ---\\
   \hline
    $\begin{array}{l}
         \text{\klinfucb $^\dagger$}  \vspace{-.2cm}\\
         \text{\citep{agrawal2021regret}} 
    \end{array}$
   & $\displaystyle\sum_{i : \Delta_i>0} \frac{\log T}{D_{\text{KL}}^{\text{inf}}(\nu_i, \mu_1)}$ & \cmark 
   & --- & ---
   & \cmark
   & \xmark
   & \cmark
   & \xmark
   & --- \\
   \hline
   $\begin{array}{l}
         \text{\apetwo}  \vspace{-.2cm}\\
         \text{\citep{lee2020optimal}} 
    \end{array}$
   & $\displaystyle\sum_{i : \Delta_i >0} \left(e^u + (T\Delta_i^\frac{1+\epsilon}{\epsilon} \log K  )^\frac{1+\epsilon}{\epsilon \log K }\right)\frac{1}{\Delta_i^{1/\epsilon}} $ & \xmark
   & $\displaystyle K^\frac{\epsilon}{1+\epsilon}u^\frac{1}{1+\epsilon}T^\frac{1}{1+\epsilon}\log T e^u$ & \xmark
   & \xmark
   & \xmark
   & \cmark
   & \cmark
   & --- \\
   \hline
    $\begin{array}{l}
         \text{\mrape}  \vspace{-.2cm}\\
         \text{\citep{lee2022minimax}} 
    \end{array}$
   & $\displaystyle \sum_{i : \Delta_i >0} \Big(Ke^u + \log \Big(\frac{T\Delta_i^\frac{1+\epsilon}{\epsilon}}{K}\Big)^\frac{1+\epsilon}{\epsilon} \Big)\frac{1}{\Delta_i^{1/\epsilon}} $ & \xmark
   & $\displaystyle K^\frac{\epsilon}{1+\epsilon}u^\frac{1}{1+\epsilon}T^\frac{1}{1+\epsilon}e^u$ & \xmark
   & \xmark
   & \xmark
   & \cmark
   & \cmark
   & ---\\
   \hline
   $\begin{array}{l}
         \text{\htinf}  \vspace{-.2cm}\\
         \text{\citep{huang2022adaptive}} 
    \end{array}$
   & $\displaystyle\sum_{i: \Delta_i >0} \left(\frac{u}{\Delta_i}\right)^{1/\epsilon}\log T$ & \cmark
   & $ \displaystyle K^\frac{\epsilon}{1+\epsilon}u^\frac{1}{1+\epsilon}T^\frac{1}{1+\epsilon}$ & \cmark
   & \xmark
   & \xmark
   & \xmark
   & \xmark
   & Assumption \ref{ass:truncated_np}\\
   \hline
   $\begin{array}{l}
         \text{\opthtinf}  \vspace{-.2cm}\\
         \text{\citep{huang2022adaptive}} 
    \end{array}$
   & $\displaystyle\sum_{i: \Delta_i >0} \left(\frac{u^2}{\Delta_i^{2-\epsilon}}\right)^{1/\epsilon}\log T$ & \xmark
   & $\displaystyle K^\frac{\epsilon}{2}u^\frac{1}{1+\epsilon}T^\frac{2-\epsilon}{2}$ & \xmark
   & \cmark
   & \cmark
   & \cmark
   & \cmark
   & Assumption \ref{ass:truncated_np}\\
    \hline
    $\begin{array}{l}
         \text{\adatinf}  \vspace{-.2cm}\\
         \text{\citep{huang2022adaptive}} 
    \end{array}$
   & --- & ---
   & $\displaystyle K^\frac{\epsilon}{1+\epsilon}u^\frac{1}{1+\epsilon}T^\frac{1}{1+\epsilon}$ & \cmark
   & \cmark
   & \cmark
   & \cmark
   & \cmark
   & Assumption \ref{ass:truncated_np}\\
   \hline
    $\begin{array}{l}
         \text{\rucbtea $^\ddagger$}  \vspace{-.2cm}\\
         \text{\citep{ashutosh2021bandit}} 
    \end{array}$
   & $\displaystyle\sum_{i : \Delta_i >0} \frac{f(T)}{1-\frac{2}{\Delta_i \log f(t)}}\log T$ & \xmark
   & --- & ---
   & \cmark
   & \cmark
   & \cmark
   & \cmark
   & 
   $\begin{array}{c}
   T \text{ s.t. } \\ 3u\log f(T) < 2f(T)^\epsilon
   \end{array}$
   \\
   \hline
   $\begin{array}{l}
         \text{ \rucbmom $^\ddagger$}  \vspace{-.2cm}\\
         \text{\citep{ashutosh2021bandit}} 
    \end{array}$
   & $\displaystyle\sum_{i: \Delta_i >0} \Delta_i\left(\frac{2f(T)}{\Delta_i}\right)^\frac{1}{g(T)}\log T$ & \xmark
   & --- & ---
   & \cmark
   & \cmark
   & \cmark
   & \cmark
   & $T$ s.t. $\substack{ g(T) < \frac{\epsilon}{1+\epsilon} \\ f(T)>(12u)^\frac{1}{1+\epsilon}}$\\
   \hline
   \cellcolor{gray!15} $\begin{array}{l} \text{\algnameshort}\vspace{-.2cm} \\ \text{\textbf{(ours)}} \end{array}$
   &  \cellcolor{gray!15} $\displaystyle\sum_{i: \Delta_i >0} \left(\left(\frac{u}{\Delta_i}\right)^{1/\epsilon}+ \frac{\Delta_i}{\mathbb{P}_{\nu_i}(X\neq 0)}\right)\log T$ & \cellcolor{gray!15}  \cmark
   &  \cellcolor{gray!15} $\displaystyle
   \begin{array}{l}
         \displaystyle K^\frac{\epsilon}{1+\epsilon}u^\frac{1}{1+\epsilon}T^\frac{1}{1+\epsilon}(\log T)^\frac{\epsilon}{1+\epsilon} \\
         \displaystyle\quad + \sum_{i: \Delta_i>0} \frac{\Delta_i}{\mathbb{P}_{\nu_i}(X\neq 0)}\log T
   \end{array}
    $ & \cellcolor{gray!15}  \cmark
   &  \cellcolor{gray!15}\cmark
   &  \cellcolor{gray!15}\cmark
   &  \cellcolor{gray!15}\cmark
   &  \cellcolor{gray!15}\cmark
   & \cellcolor{gray!15} Assumption \ref{ass:truncated_np}\\
   \thickhline
  \end{tabular}}
  
  \vspace{-.2cm}\begin{flushleft}\footnotesize{$^\star$ The bound depends on the centered absolute moment $v \coloneqq \max_{i\in[K]}\mathbb{E}_{X \sim \nu_i}[|X-\mu_i|^{1+\epsilon}]$ of order $1+\epsilon$.   \\ $^\dagger$ $D_{\text{KL}}^{\text{inf}}(\eta, x) \coloneqq \inf\{D_\text{KL}(\eta, \kappa): \kappa \in \mathcal{P}_{\text{HT}}(\epsilon,u) \; \text{and}\; \mathbb{E}_{X \sim \kappa}[X] \ge x\}$. \\ $^\ddagger$ $f$ and $g$ are to be given in input. Choosing an optimal value of those would require knowing $\epsilon$ and $u$.\\
  {$^\mathsection$} Matching the instance-dependent lower bound for the non-adaptive case w.r.t. $T$, $1/\Delta_i$, $u$ (or $v$), and $\epsilon$, up to constants. \\
  {$^\mathparagraph$} Matching worst-case lower bound for the non-adaptive case w.r.t. $T$, $K$, $u$ (or $v$), and $\epsilon$, up to logarithmic terms.}\end{flushleft} 
 \centering\vspace{-.5cm}\caption{Comparison with the state-of-the-art. The regret bounds are deprived by constants.}
 \label{tab:comparison}
\end{table}

\cite{bubeck2013bandits} represents the most influential work in this area, formally introducing the setting, deriving both instance-dependent and worst-case lower bounds, and proposing the first non-adaptive algorithm, namely \robustucb. Such an algorithm can be instanced with three robust estimators: \emph{trimmed mean} (TM), \emph{median of means} (MoM), and \emph{Catoni estimator} (Catoni) achieving near-optimal regret guarantees from both instance-dependent and worst-case cases. The first minimax optimal algorithm was proposed in \cite{wei2020minimax}, namely \rmoss, removing the $(\log T)^\frac{\epsilon}{1+\epsilon}$. Instead, in \cite{agrawal2021regret}, the authors propose \klinfucb attaining an asymptotically-optimal instance-dependent upper bound, highlighting the dependence on the instance with the KL-divergence, similarly as \cite{garivier2011kl} for non-heavy-tailed bandits. These algorithms, however, require the knowledge of $\epsilon$ and $u$, \emph{i.e.,} they are non-adaptive.\footnote{The \emph{truncated mean} requires the knowledge of $\epsilon$ and $u$ in the construction of the expected reward estimator too.}

In \cite{ashutosh2021bandit}, the authors show that \emph{adaptivity} comes at a cost in both subgaussian and heavy-tailed bandits. In particular, logarithmic instance-dependent regret is unachievable when no further information on the environment is available. They introduce two algorithms, namely \rucbtea and \rucbmom, exploiting the TM and the MoM estimators, respectively. Although, in principle, they do not require the knowledge of $\epsilon$ or $u$ for execution, logarithmic regret cannot be achieved but only approached with arbitrary precision. Moreover, the bounds hold only for a learning horizon $T$ larger than a threshold depending on $\epsilon$ and $u$. No worst-case analysis is presented. 

The closest work to ours is \cite{huang2022adaptive} where the authors introduce the \textit{adversarial heavy-tailed bandits}, in which an adversary chooses HT distributions for the losses. They first introduce the \textit{truncated non-negativity} (analogous to our Assumption \ref{ass:truncated_np} for rewards) representing, to the best of authors' knowledge, the only assumption not explicitly related to $\epsilon$ and $u$. Three algorithms are provided: \htinf, \opthtinf, and \adatinf, all analyzed under this assumption. \htinf requires knowledge of both $\epsilon$ and $u$ and it is nearly optimal. Differently, both \opthtinf and \adatinf are $(\epsilon,u)$-adaptive. However, the instance-dependent bound of \opthtinf exposes an inconvenient dependence on $\big(\frac{u^2}{\Delta_i^{2-\epsilon}}\big)^{1/\epsilon}$ and the worst-case bound scales with $T^\frac{2-\epsilon}{2}$, both failing to match the lower bounds of the non-adaptive setting. Finally, the worst-case bound of \adatinf matches the non-adaptive lower bound, unfortunately, no instance-dependent analysis is provided.

Finally, \cite{lee2020optimal} introduce the \apetwo algorithm, adaptive in $u$, that, unfortunately, does not achieve logarithmic instance-dependent regret that, instead, scales with $T^\frac{1}{1+\epsilon}\log T$ and displays an inconvenient exponential dependence $e^u$. A modified version of this algorithm, namely \mrape, introduced in \cite{lee2022minimax}, succeeds in removing the polynomial dependence on $T$, now poly-logarithmic $(\log T)^{\frac{1+\epsilon}{\epsilon}}$, but maintains the dependence on $e^u$. 



\section{Minimax Lower Bounds for Adaptive Heavy-Tailed Bandits}
\label{sec:lower_bounds}
In this section, we address \ref{question1}, by analyzing the challenges of the $(\epsilon,u)$-adaptive regret minimization problem, \emph{i.e.,} without the knowledge of $\epsilon$ and $u$. We start by revising the minimax regret lower bound derived in \citet{bubeck2013bandits} for the \emph{non-adaptive} case, \emph{i.e.,} when $\epsilon$ and $u$ are known (Theorem~\ref{thr:standard_lb}). Then, in Section~\ref{sec:negative}, we provide two novel \emph{negative results} showing that achieving the same worst-case regret guarantees when either $u$ (Theorem~\ref{thr:u_adaptive_lb}) or $\epsilon$ (Theorem~\ref{thr:epsilon_adaptive_lb}) are unknown is not possible. Finally, in Section~\ref{sec:newLB}, we derive a new minimax regret lower bound under the \emph{truncated non-positivity} (Assumption~\ref{ass:truncated_np}), illustrating how, even in the non-adaptive case, this assumption does not lead to smaller regret lower bounds.\footnote{We remark that from the instance-dependent regret perspective, a negative answer to the possibility of achieving logarithmic regret with adaptive algorithms for heavy-tailed bandits has been already provided in~\citep[][Theorem 1]{ashutosh2021bandit}. Thus, our focus is on the minimax regret perspective.}

Let us start by recalling the minimax regret lower bound for the non-adaptive case.
\begin{restatable}[Minimax lower bound -- non-adaptive, \cite{bubeck2013bandits}]{thr}{StandardLowerBound}\label{thr:standard_lb}
Fix $\epsilon \in (0,1]$ and $u \ge 0$. For every algorithm \texttt{Alg}, sufficiently large learning horizon $T \in \mathbb{N}$, and number of arms $K \in \mathbb{N}_{\ge 2}$, it holds that:
\begin{equation}
    \label{eq:standard_lb}
   \sup_{\bm{\nu} \in \mathcal{P}_{\text{HT}}(\epsilon,u)^K} R_T(\texttt{Alg},\bm{\nu}) \ge c_0 K ^{\frac{\epsilon }{1+\epsilon}}(uT)^{\frac{1}{1+\epsilon}},
\end{equation}
where $c_0>0$ is a constant independent of $u$, $\epsilon$, $K$, and $T$.
\end{restatable}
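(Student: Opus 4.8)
The plan is to reduce the heavy-tailed minimax lower bound to a standard two-armed (or $K$-armed) stochastic bandit lower bound via an information-theoretic argument. The construction is a "needle in a haystack": I would build a family of bandit instances indexed by the identity of the optimal arm, such that (i) every instance lies in $\mathcal{P}_{\text{HT}}(\epsilon,u)^K$, (ii) the instances are hard to distinguish after a limited number of pulls, and (iii) distinguishing them is necessary to avoid large regret. The delicate point, compared to the subgaussian case, is that to keep the instances in the heavy-tailed class while still making them hard to tell apart, the gap $\Delta$ must be taken small enough — of order $(u/n)^{\epsilon/(1+\epsilon)}$ where $n$ is a per-arm pull budget — because a heavy-tailed distribution can only encode a bounded amount of KL-information per sample when its $(1+\epsilon)$-moment is capped at $u$.

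First I would fix the candidate hard instances. A clean choice is Bernoulli-type rewards scaled so the $(1+\epsilon)$-moment constraint binds: let each arm have reward $0$ with probability $1-p$ and reward $b$ with probability $p$, where $b$ is chosen so that $p\, b^{1+\epsilon} = u$ (so the moment bound is met with equality), giving mean $pb$. For the "good" arm I perturb $p$ upward by a small amount $\delta$. One then computes the per-pull KL divergence between the good and bad versions of an arm, which for these sparse Bernoulli distributions is of order $\delta^2/p$, and the induced gap is of order $\delta b = \delta (u/p)^{1/(1+\epsilon)}$. I would then invoke the standard machinery (Bretagnolle–Huber inequality / Pinsker, or the Garivier–Ménard–Stoltz lemma as in Lattimore–Szepesvári Chapter 15): if the optimal arm is chosen uniformly among the $K$ arms, any algorithm pulls the truly-optimal arm at most $T/K$ times in expectation on average, so on a typical instance the number of suboptimal pulls is $\Omega(T)$, and the divergence budget $\sum_a \mathbb{E}[N_a]\,\mathrm{KL} \lesssim (T/K)\cdot(\delta^2/p)$ must be $\Omega(1)$ for the algorithm to identify the needle. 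Optimizing $\delta$ and $p$ subject to these constraints — taking $\delta^2/p \asymp K/T$ and then choosing $p$ to make $b$ as large as the moment budget allows — yields the regret lower bound $\Omega(K^{\epsilon/(1+\epsilon)} (uT)^{1/(1+\epsilon)})$. I would present the optimization cleanly rather than grinding the algebra: the exponents are forced by balancing "information per sample $\propto u^{-2/(1+\epsilon)}\cdot(\text{gap})^2$" against "number of informative samples $\propto T/K$".

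**The main obstacle** I anticipate is twofold. First, the bookkeeping to make the $K$-arm reduction tight in $K$ (getting the $K^{\epsilon/(1+\epsilon)}$ rather than a weaker power of $K$) requires the averaging-over-which-arm-is-optimal trick done carefully, since a naive two-arm bound only gives $\Omega((uT)^{1/(1+\epsilon)})$ with no $K$ dependence; one must let the gap scale with the per-arm budget $T/K$, not $T$. Second, I must verify the "sufficiently large $T$" caveat: the construction requires $p\le 1$ and $p+\delta\le 1$ and $b$ finite, which translates to a lower bound on $T$ in terms of $K,u,\epsilon$ — precisely why the theorem statement only claims the bound for sufficiently large $T$. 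A secondary subtlety is confirming the perturbed instance also satisfies the $(1+\epsilon)$-moment bound (increasing $p$ to $p+\delta$ increases the moment, so I either perturb $b$ slightly downward in compensation, or start from $p$ strictly below the binding value so there is slack); I would handle this with an explicit slack factor absorbed into the constant $c_0$. Since this is exactly the lower bound of \citet{bubeck2013bandits}, I would also remark that the full details can be found there, and the sketch above is the standard route.
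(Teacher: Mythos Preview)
The paper does not actually supply its own proof of Theorem~\ref{thr:standard_lb}; the statement is simply recalled from \citet{bubeck2013bandits} and used as a benchmark. Your sketch is the standard argument and is consistent with that source: a Bernoulli-type ``needle in a haystack'' construction, Bretagnolle--Huber, and tuning of the gap so that the per-arm information budget $T/K$ times the KL is $\Theta(1)$. Indeed, the paper's proof of the closely related Theorem~\ref{thr:AssLowerBound} (the same minimax bound under Assumption~\ref{ass:truncated_np}) follows exactly this template with the distributions $\rho_y=(1-y^{1+1/\epsilon}u^{-1/\epsilon})\delta_0+y^{1+1/\epsilon}u^{-1/\epsilon}\delta_{-u^{1/\epsilon}\Delta^{-1/\epsilon}}$, so your approach matches both the original source and the paper's own technique for the neighbouring result.

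One small refinement: in your optimization you say ``choose $p$ to make $b$ as large as the moment budget allows,'' but the actual constraint that pins down $p$ is $\delta\lesssim p$ (otherwise the $\mathrm{KL}\approx\delta^2/p$ approximation breaks and the KL becomes order $p$ rather than $\delta^2/p$); combining this with $\delta^2/p\asymp K/T$ forces $p\asymp K/T$, which is what recovers the correct exponents $K^{\epsilon/(1+\epsilon)}(uT)^{1/(1+\epsilon)}$. The paper's Theorem~\ref{thr:AssLowerBound} construction sidesteps this by parameterizing directly in $\Delta$ and setting $\Delta^{(1+\epsilon)/\epsilon}u^{-1/\epsilon}\asymp K/T$, which is equivalent. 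Your handling of the ``sufficiently large $T$'' caveat and the moment-slack issue is correct.
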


This result shows how the dependency on $T$ deteriorates as $\epsilon$ approaches $0$ and, instead, when the variance is finite, \emph{i.e.,} $\epsilon = 1$, the lower bound displays the same order as the one for stochastic MABs with subgaussian rewards~\citep{lattimore2020bandit}.

\subsection{Negative Results about Adaptivity}\label{sec:negative}
We now move to our negative results about the possibility of matching the minimax regret lower bound of the non-adaptive setting using $(\epsilon,u)$-adaptive algorithms. The following result shows that any $u$-adaptive algorithm cannot achieve the same regret as in the non-adaptive case in  Theorem \ref{thr:standard_lb}.

\begin{restatable}[Minimax lower bound -- $u$-adaptive]{thr}{uAdaptiveLowerBound}\label{thr:u_adaptive_lb}
Fix $\epsilon \in (0,1]$. For every algorithm \texttt{Alg}, sufficiently large learning horizon $T \in \mathbb{N}$, and number of arms $K \in \mathbb{N}_{\ge 2}$, it holds that:
\begin{equation}\label{eq:adaptive_u_lb0}
    \sup_{u \ge 0}  \sup_{\bm{\nu} \in \mathcal{P}_{\text{HT}}(\epsilon,u)^K} \frac{R_T(\texttt{Alg},\bm{\nu})}{u^{\frac{1}{1+\epsilon}}} = +\infty.
\end{equation}
More precisely, for every $u'\ge u\ge 0$, under the same conditions above, there exist two instances $\bm{\nu} \in \mathcal{P}_{\text{HT}}(\epsilon,u)$ and $\bm{\nu}' \in \mathcal{P}_{\text{HT}}(\epsilon,u')$ such that:
\begin{equation}
    \label{eq:adaptive_u_lb}
    \max\left\{\frac{R_T(\texttt{Alg},\boldsymbol \nu)}{u^{\frac{1}{1+\epsilon}}}, \frac{R_T(\texttt{Alg},\boldsymbol \nu')}{(u')^{\frac{1}{1+\epsilon}}}\right\} \ge c_1 \left(\frac{u'}{u}\right)^{\frac{\epsilon}{(1+\epsilon)^2}} T^{\frac{1}{1+\epsilon}},
\end{equation}
where $c_1>0$ is a constant independent of $u$, $u'$, and $T$.
\end{restatable}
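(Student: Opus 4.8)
The plan is to construct, for any candidate algorithm \texttt{Alg}, a pair of heavy-tailed instances—one with moment bound $u$ and one with a much larger moment bound $u'$—that are statistically hard to distinguish, and then apply a standard change-of-measure / information-theoretic argument to force at least one of the two normalized regrets to be large. The key obstruction an adaptive algorithm faces here is that it does not know $u$ in advance, so it cannot calibrate its exploration aggressiveness: the amount of forced exploration that is safe under $\mathcal{P}_{\text{HT}}(\epsilon,u)$ is ruinously expensive under $\mathcal{P}_{\text{HT}}(\epsilon,u')$, and vice versa.

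Concretely, I would work with two arms (the extra $K-2$ arms can be made trivially bad or copies). Fix a small gap parameter $\Delta$ and build the $u$-instance $\bm\nu$ where arm $1$ is slightly better than arm $2$, each arm being a point mass near $0$ perturbed by a rare large-deviation "spike" at a value of order $u^{1/(1+\epsilon)}\Delta^{-1/\epsilon}$-scale so that $\mathbb{E}[|X|^{1+\epsilon}] \le u$ is tight—exactly the construction underlying Theorem~\ref{thr:standard_lb}. Then build $\bm\nu'$ by taking the same base distributions but with the spikes rescaled and reprobabilized so that the roles of the two arms are swapped and the moment bound becomes $u' \ge u$; the spike magnitude now scales with $(u')^{1/(1+\epsilon)}$. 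Choosing $\Delta$ and the spike probabilities appropriately, the KL divergence between the laws of the full interaction transcripts under $\bm\nu$ and $\bm\nu'$ is controlled by (number of pulls of the "informative" arm) $\times$ (per-sample KL), and the per-sample KL is tiny because the spikes are rare. Pinsker / Bretagnolle–Huber then gives that \texttt{Alg} cannot have small regret on both instances simultaneously, yielding
\begin{equation*}
  \max\left\{ \frac{R_T(\texttt{Alg},\bm\nu)}{u^{1/(1+\epsilon)}}, \frac{R_T(\texttt{Alg},\bm\nu')}{(u')^{1/(1+\epsilon)}}\right\} \ge c_1 \left(\frac{u'}{u}\right)^{\epsilon/(1+\epsilon)^2} T^{1/(1+\epsilon)}.
\end{equation*}
The $(u'/u)^{\epsilon/(1+\epsilon)^2}$ factor should fall out of optimizing the free parameters ($\Delta$ and the spike probability) to equalize the two terms in the max, given that the "cost" of exploration scales differently in $u$ on the two instances. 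Taking $u' \to \infty$ (with $u$ fixed, or letting the ratio diverge) immediately yields the divergence statement~\eqref{eq:adaptive_u_lb0}.

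The main obstacle I anticipate is the bookkeeping in the two-instance construction: one must simultaneously (i) keep both instances inside their respective $\mathcal{P}_{\text{HT}}$ classes with the moment bounds tight, (ii) make the gaps $\Delta,\Delta'$ and spike locations line up so that a single algorithm's pull counts can be played against it in both directions, and (iii) get the per-sample KL small enough that $T$ pulls still do not reveal which instance is active. The delicate point is that, unlike the classical Bernoulli lower bound where both instances live in the same class, here the two instances live in \emph{different} moment classes, so the construction has to be asymmetric—the algorithm's behavior on $\bm\nu$ (where it must explore little to be competitive against the $u^{1/(1+\epsilon)}$ normalization) is exactly what gets it punished on $\bm\nu'$. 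I would structure the proof by first stating the explicit densities, then bounding $D_{\text{KL}}(\mathbb{P}_{\bm\nu}\,\|\,\mathbb{P}_{\bm\nu'})$ via the chain rule and the divergence decomposition lemma (e.g.\ Lemma 15.1 in \citealp{lattimore2020bandit}), then invoking Bretagnolle–Huber, and finally optimizing the parameters; the "sufficiently large $T$" hypothesis is used to ensure the chosen $\Delta$ stays in the valid regime and the spike probabilities remain in $[0,1]$.
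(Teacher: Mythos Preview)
Your proposal is correct and matches the paper's approach: two-arm spike instances in $\mathcal{P}_{\text{HT}}(\epsilon,u)$ and $\mathcal{P}_{\text{HT}}(\epsilon,u')$ with swapped optimal arms, KL control via the divergence decomposition, Bretagnolle--Huber, then optimization of the gap $\Delta$. The one step you describe only intuitively is made explicit in the paper: it pairs the Bretagnolle--Huber bound (decreasing in $\mathbb{E}_{\bm\nu}[N_1(T)]$) with the direct bound $R_T(\texttt{Alg},\bm\nu) \ge \Delta\,\mathbb{E}_{\bm\nu}[N_1(T)]$ on the base instance (increasing in it), minimizes their maximum over the pull count, and only then chooses $\Delta$---this two-stage optimization is what produces the exponent $\epsilon/(1+\epsilon)^2$.
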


Some remarks are in order. First, let us observe that for proving the negative result, we have studied the ratio $\frac{R_T(\texttt{Alg},\boldsymbol \nu)}{u^{\frac{1}{1+\epsilon}}}$. Indeed, if there exists a $u$-adaptive regret minimization algorithm matching the lower bound for the non-adaptive case (Theorem~\ref{thr:standard_lb}), this ratio would not depend on $u$ anymore. It is convenient to start commenting on Theorem~\ref{thr:u_adaptive_lb} from the lower bound in Equation~\eqref{eq:adaptive_u_lb}. Here, we show the existence of two heavy-tailed bandit instances $\bm{\nu}$ and $\bm{\nu}'$, characterized by the same moment order $\epsilon$ but possibly different moment bounds  $u'\ge u$, for which any algorithm suffers (apart from constants)  a regret that preserves the dependence on $T$ but introduces a dependence on the ratio $u'/u$. Since we can make the ratio arbitrarily large, by varying $u,u' \in \mathbb{R}_{\ge 0}$, we conclude the statement in Equation~\eqref{eq:adaptive_u_lb0} showing that the mimimax lower bound degenerates to infinity. This shows that, with no additional assumptions, there exists no $u$-adaptive algorithm matching the lower bound for the non-adaptive case (Theorem~\ref{thr:standard_lb}).

We now present the counterpart negative result concerning adaptivity to the moment order $\epsilon$.
\begin{restatable}[Minimax lower bound -- $\epsilon$-adaptive]{thr}{epsAdaptiveLowerBound}\label{thr:epsilon_adaptive_lb}
Fix $u=1$. For every algorithm \texttt{Alg}, sufficiently large learning horizon $T \in \mathbb{N}$, and number of arms $K \in \mathbb{N}_{\ge 0}$, it holds that:
\begin{equation}\label{eq:adaptive_e_lb0}
    \sup_{\epsilon \in (0,1]}  \sup_{\bm{\nu} \in \mathcal{P}_{\text{HT}}(\epsilon,u)^K} \frac{R_T(\texttt{Alg},\bm{\nu})}{T^{\frac{1}{1+\epsilon}}} \ge c_2 T^{\frac{1}{16}}.
\end{equation}
More precisely, for every $\epsilon,\epsilon' \in (0,1]$ with $\epsilon'\le \epsilon$, under the same conditions above, there exist two instances $\bm{\nu} \in \mathcal{P}_{\text{HT}}(\epsilon,u)$ and $\bm{\nu}' \in \mathcal{P}_{\text{HT}}(\epsilon',u)$ such that:
\begin{equation}
    \label{eq:adaptive_e_lb}
    \max\left\{\frac{R_T(\texttt{Alg},\boldsymbol \nu)}{T^{\frac{1}{1+\epsilon}}}, \frac{R_T(\texttt{Alg},\boldsymbol \nu')}{T^{\frac{1}{1+\epsilon'}}}\right\} \ge c_2 T^{\frac{\epsilon'(\epsilon-\epsilon')}{(1+\epsilon)(1+\epsilon')^2}},
\end{equation}
where $c_2>0$ is a constant independent of $\epsilon$, $\epsilon'$, and $T$.
\end{restatable}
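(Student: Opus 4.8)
I would run a two-environment change-of-measure argument, in the spirit of the minimax construction of \citet{bubeck2013bandits} but with the two hard instances living in \emph{different} tail classes, designed so that a tiny probability mass placed very far out in the heavier-tailed instance is at once (a) almost statistically invisible and (b) enough to flip which arm is optimal. Fix $\epsilon'\le\epsilon$ and a small $\Delta\in(0,1]$ to be tuned, and build two $K$-armed instances that agree on all arms except arm $2$, the remaining $K-2$ arms (if any) being frozen to the deterministic reward $0$ in both --- these cancel in every divergence computation and can only inflate the regret. Arm $1$ is the deterministic reward $m=\Delta$ in both instances, admissible since $\Delta^{1+\epsilon}\le\Delta^{1+\epsilon'}\le1$. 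In $\bm{\nu}$ (moment order $\epsilon$), arm $2$ is the point mass $\delta_0$, so arm $1$ is optimal with gap $\Delta$. In $\bm{\nu}'$ (moment order $\epsilon'$), arm $2$ is $(1-q)\delta_0+q\delta_{B'}$ with $B'\coloneqq(2\Delta)^{-1/\epsilon'}$ and $q\coloneqq 2\Delta/B'=(2\Delta)^{(1+\epsilon')/\epsilon'}$, so its mean is $qB'=2\Delta$, it exactly saturates the moment budget $q(B')^{1+\epsilon'}=1=u$, and now arm $2$ is optimal with gap $\Delta$. Since the supports are nested, $\{0\}\subseteq\{0,B'\}$, the per-pull divergence $\kappa\coloneqq\mathrm{KL}(\delta_0\,\|\,(1-q)\delta_0+q\delta_{B'})=-\log(1-q)\le 2q$ is finite; the key structural fact is that buying a mean shift of order $\Delta$ costs only $\kappa=\Theta\big(\Delta^{(1+\epsilon')/\epsilon'}\big)$ in divergence, a cost that collapses as $\epsilon'\to0$ --- precisely how heavier tails destroy detectability.

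The argument then follows the standard chain. Let $N_2(T)$ be the number of pulls of arm $2$; the bandit divergence decomposition gives $\mathrm{KL}(\mathbb{P}_{\bm{\nu}}\|\mathbb{P}_{\bm{\nu}'})=\E_{\bm{\nu}}[N_2(T)]\,\kappa$. If $\E_{\bm{\nu}}[N_2(T)]\ge1/\kappa$, then $R_T(\texttt{Alg},\bm{\nu})=\Delta\,\E_{\bm{\nu}}[N_2(T)]\ge\Delta/\kappa$ at once. Otherwise $\mathrm{KL}(\mathbb{P}_{\bm{\nu}}\|\mathbb{P}_{\bm{\nu}'})<1$, and Bretagnolle--Huber applied to $A=\{N_2(T)\ge T/2\}$ yields $\mathbb{P}_{\bm{\nu}}(A)+\mathbb{P}_{\bm{\nu}'}(A^c)\ge\tfrac12 e^{-1}$; since on $A$ instance $\bm{\nu}$ pays at least $\Delta T/2$ and on $A^c$ instance $\bm{\nu}'$ pays at least $\Delta T/2$ (there every non-arm-$2$ pull costs at least $\Delta$), this forces $\max\{R_T(\texttt{Alg},\bm{\nu}),R_T(\texttt{Alg},\bm{\nu}')\}\ge\Delta T/(8e)$. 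Using $T^{1/(1+\epsilon)}\le T^{1/(1+\epsilon')}$, the worst sub-case of this branch is when only $R_T(\texttt{Alg},\bm{\nu}')$ is forced large, so the two branches combine into
\[ \max\!\left\{\frac{R_T(\texttt{Alg},\bm{\nu})}{T^{1/(1+\epsilon)}},\ \frac{R_T(\texttt{Alg},\bm{\nu}')}{T^{1/(1+\epsilon')}}\right\}\ \ge\ c_2\,\min\!\left\{\frac{\Delta}{\kappa\,T^{1/(1+\epsilon)}},\ \Delta\,T^{\epsilon'/(1+\epsilon')}\right\} \]
for a constant $c_2>0$.

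It remains to tune $\Delta$. Writing $\Delta=T^{-\beta}$ and $\kappa\asymp\Delta^{(1+\epsilon')/\epsilon'}$, the two terms in the minimum carry $T$-exponents $\beta/\epsilon'-1/(1+\epsilon)$ and $\epsilon'/(1+\epsilon')-\beta$; equating them gives $\beta=\tfrac{\epsilon'}{1+\epsilon'}\big(\tfrac{\epsilon'}{1+\epsilon'}+\tfrac{1}{1+\epsilon}\big)$, and substituting back yields the common exponent $\tfrac{\epsilon'}{1+\epsilon'}\big(\tfrac{1}{1+\epsilon'}-\tfrac{1}{1+\epsilon}\big)=\tfrac{\epsilon'(\epsilon-\epsilon')}{(1+\epsilon)(1+\epsilon')^2}$, which is \eqref{eq:adaptive_e_lb}. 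The side conditions ($\Delta\le1$, $q\le\tfrac12$, $B'\ge m$ so the far mass really is a ``jackpot'') all hold once $T$ is large, which is the ``sufficiently large $T$'' hypothesis. For \eqref{eq:adaptive_e_lb0} I would specialize to $\epsilon=1$, $\epsilon'=\tfrac13$, for which the exponent is $\tfrac{(1/3)(2/3)}{2\,(4/3)^2}=\tfrac{1}{16}$, and note that the outer suprema (over $\epsilon\in(0,1]$ and over admissible instances) dominate the maximum just bounded.

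The delicate part is not the information theory (the divergence decomposition and Bretagnolle--Huber are routine) but the \emph{design} of the instance pair, which must satisfy three requirements simultaneously: $(i)$ the arm-$2$ law of the lighter instance must be a sub-distribution of that of the heavier one --- forcing the degenerate choice $\delta_0$ --- so that the relevant KL is finite; $(ii)$ the far mass must sit exactly at the radius that saturates the $(1+\epsilon')$-moment budget, since this minimizes the detection cost $\kappa$ for a prescribed mean shift and hence governs the largest extractable exponent; and $(iii)$ the joint scaling of $\Delta$, $q$, $B'$ must be threaded so that, after the case split and after matching each regret term to the right power of $T$, the two competing exponents meet exactly at $\tfrac{\epsilon'(\epsilon-\epsilon')}{(1+\epsilon)(1+\epsilon')^2}$. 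A minor bookkeeping point: in $\bm{\nu}'$ the optimal arm is arm $2$, so the regret there must be written through $T-N_2(T)$ rather than $N_2(T)$.
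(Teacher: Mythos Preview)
Your proof is correct and follows essentially the same approach as the paper's: a two-instance change-of-measure in which the perturbed arm moves from $\delta_0$ to a Bernoulli mass at radius $(2\Delta)^{-1/\epsilon'}$ saturating the $(1+\epsilon')$-moment budget, combined with the divergence decomposition, Bretagnolle--Huber, and tuning $\Delta$ to balance the two branches at the exponent $\frac{\epsilon'(\epsilon-\epsilon')}{(1+\epsilon)(1+\epsilon')^2}$. The only differences are cosmetic --- the paper swaps the roles of arms~1 and~2, uses a nontrivial two-point mixture rather than a Dirac mass for the unchanged arm, and merges the two regret lower bounds via $\max\{a,b\}\ge\tfrac12(a+b)$ followed by a convex minimization over $\E_{\bm{\nu}}[N_1(T)]$ instead of your case split on $\E_{\bm{\nu}}[N_2(T)]\gtrless 1/\kappa$.
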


Differently from Theorem \ref{thr:u_adaptive_lb}, here we target the ratio $\frac{R_T(\texttt{Alg},\bm{\nu})}{T^{\frac{1}{1+\epsilon}}}$ for deriving the negative result. Indeed, having fixed $u=1$, if an $\epsilon$-adaptive algorithm exists matching the lower bound of Theorem~\ref{thr:standard_lb}, then, the considered ratio would not depend on $T$ anymore. Starting from the lower bound of Equation~\eqref{eq:adaptive_e_lb}, we observe that there exist two instances $\bm{\nu}$ and $\bm{\nu}'$, with $\epsilon$ and $\epsilon'$ as moment orders, for which the ratio is lower bounded by a function dependent on $T$. Since $\epsilon \ge \epsilon'$, the exponent to which $T$ is raised is non-negative and, consequently, the lower bound is a non-decreasing function of $T$. By letting $\epsilon$ and $\epsilon'$ range in $[0,1)$, we obtain the minimax bound of Equation~\eqref{eq:adaptive_e_lb0}, displaying a gap of order $T^{\frac{1}{16}}$, which is attained by taking $\epsilon=1$ and $\epsilon'=1/3$. This result shows that there exists no $\epsilon$-adaptive algorithm able to suffer the same regret as in the non-adaptive case of Theorem~\ref{thr:standard_lb}. 

Combining Theorem~\ref{thr:u_adaptive_lb} with Theorem~\ref{thr:epsilon_adaptive_lb}, we conclude the non-existence of an $(\epsilon,u)$-adaptive algorithm suffering the same regret guarantees as in the non-adaptive case. It is worth noting that the constructions employed for deriving the lower bounds presented in this section violate Assumption~\ref{ass:truncated_np}.

\subsection{Minimax Lower Bound under Assumption~\ref{ass:truncated_np}}\label{sec:newLB}
The results presented above show that, if our goal is to match the worst-case bound of the non-adaptive case of Theorem~\ref{thr:standard_lb}, we surely need to enforce additional assumptions. \citet{huang2022adaptive} succeeds in this task by using the \emph{truncated non-positivity} (Assumption~\ref{ass:truncated_np}, actually, its dual version for losses). We may wonder whether enforcing Assumption \ref{ass:truncated_np} radically simplifies the problem. In the following, we show that this is not the case, by deriving a novel minimax lower bound for the non-adaptive case under this assumption of the same order as that of Theorem~\ref{thr:standard_lb}.

\begin{restatable}[Minimax lower bound under Assumption~\ref{ass:truncated_np} - non-adaptive]{thr}{AssLowerBound}\label{thr:AssLowerBound}
Fix $\epsilon \in (0,1]$ and $u \ge 0$. For every algorithm \texttt{Alg}, sufficiently large learning horizon $T \in \mathbb{N}$, and number of arms $K \in\mathbb{N}_{\ge 2}$, it holds that:
\begin{equation}
    \label{eq:standard_lb}
   \sup_{\substack{\bm{\nu} \in \mathcal{P}_{\text{HT}}(\epsilon,u)^K \\ \bm{\nu} ~\text{fulfills Assumption~\ref{ass:truncated_np}}}} R_T(\texttt{Alg},\bm{\nu}) \ge c_3 K ^{\frac{\epsilon }{1+\epsilon}}(uT)^{\frac{1}{1+\epsilon}},
\end{equation}
where $c_3>0$ is a constant independent of $u$, $\epsilon$, $K$ and $T$.
\end{restatable}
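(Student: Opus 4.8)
The plan is to prove Theorem~\ref{thr:AssLowerBound} by reduction: we take the lower-bound construction of Theorem~\ref{thr:standard_lb} and modify it so that every instance in the family satisfies Assumption~\ref{ass:truncated_np}, while preserving the information-theoretic separation and the heavy-tailed moment constraint. Recall that the classical construction behind Theorem~\ref{thr:standard_lb} (following \citet{bubeck2013bandits}) uses $K$ near-identical instances, each placing a tiny extra mass $\delta$ on a large value $B$ on one distinguished arm, with $B$ and $\delta$ chosen so that $\delta B^{1+\epsilon} \asymp u$ (to respect the moment bound) and $\delta B \asymp \Delta$ (the gap). The key observation is that Assumption~\ref{ass:truncated_np} only constrains the \emph{optimal} arm, and it asks that the optimal arm's reward distribution have a nonpositive truncated-from-above mean for every threshold $M$. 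So the modification I would make is: keep the suboptimal arms exactly as in the standard construction (they are unconstrained by the assumption), and redefine the optimal arm so that its positive tail mass is counterbalanced by a symmetric (or heavier) negative mass, \emph{e.g.}, put the optimal arm's distribution as a symmetric two-point perturbation around a small positive mean, or — cleaner — let the optimal arm be deterministic at a small positive value $\mu_1$ (or a value with compactly supported, left-skewed noise) so that $\mathbb{E}[X\mathbbm 1_{\{|X|>M\}}]$ is either exactly $0$ for all $M$ past the support or manifestly $\le 0$.

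Concretely, the first step is to fix the instance family. I would take the suboptimal arms to all be a common base distribution $\nu_0$ that is itself heavy-tailed and carries the $u$-budget (so the minimax price is paid there), and the optimal arm to be a distribution $\nu_1$ with mean $\mu_1 = \mu_0 + \Delta$ that is supported on, say, $(-\infty, c]$ for some constant $c$, or is a mixture of a point mass at a small value and a left tail, arranged so that $\mathbb{E}_{\nu_1}[X\mathbbm 1_{\{|X|>M\}}] \le 0$ for all $M \ge 0$ — for instance by making the large-$|X|$ part of the support lie entirely on the negative axis. The second step is to verify the two constraints: (i) $\nu_1, \nu_0 \in \mathcal{P}_{\text{HT}}(\epsilon,u)$, which is routine since $\nu_0$ is the standard heavy-tailed gadget and $\nu_1$ can be taken bounded or with a controlled tail, and (ii) Assumption~\ref{ass:truncated_np} holds for arm $1$ in every instance of the family — this is where the design of $\nu_1$'s support does the work, and it should reduce to a one-line check. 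The third step is the information-theoretic lower bound itself: I would run the standard change-of-measure / Bretagnolle–Huber argument on the family where the identity of the optimal arm is permuted among the $K$ arms, exactly as in Theorem~\ref{thr:standard_lb}, because the suboptimal arms and the gap structure are unchanged; the KL divergences between neighboring instances are controlled by the same quantities ($\asymp \Delta^{1+\epsilon}/u^{?}$ per pull), so the same pigeonhole over arms and the same optimization of $\Delta$ in terms of $K, u, T$ yields the bound $c_3 K^{\epsilon/(1+\epsilon)}(uT)^{1/(1+\epsilon)}$.

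The main obstacle I anticipate is \emph{jointly} satisfying Assumption~\ref{ass:truncated_np} on the optimal arm and keeping the optimal arm sufficiently "indistinguishable" from the suboptimal ones that the KL divergence between neighboring instances stays small — in the standard construction the hard-to-distinguish part is precisely a rare large \emph{positive} reward, and Assumption~\ref{ass:truncated_np} forbids that on arm $1$. The fix is to place the statistically-informative rare event on the negative side of arm $1$'s distribution (a rare large \emph{negative} reward still shifts the mean and is still hard to detect), or, alternatively, to keep arm $1$'s distribution fixed across the whole family and put all the variation on which \emph{suboptimal} arm is "slightly better" — but then one must re-examine whether the reindexing still gives a genuine $K$-way hard instance; I expect the negative-tail construction on arm $1$ to be the clean route. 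A secondary technical point is ensuring the sign condition $\mathbb{E}[X\mathbbm 1_{\{|X|>M\}}]\le 0$ holds for \emph{all} thresholds $M$, including small $M$ where the positive bulk of arm $1$'s mass (near $\mu_1 > 0$) contributes positively; this forces arm $1$ to carry enough negative mass at moderate $|X|$, which must be done without blowing the $u$-budget or the gap — a balance that should go through for $\Delta$ small (as it is in the minimax regime, $\Delta \asymp (uK/T)^{1/(1+\epsilon)} \cdot (\text{small})$) but deserves an explicit check.
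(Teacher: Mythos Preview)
Your plan has a genuine gap, and it is precisely the one you flag as the ``main obstacle'' but then resolve incorrectly. In a $K$-arm minimax construction, the base instance has arm $1$ optimal, but in each alternative instance it is arm $i$ (for some $i\neq 1$) that becomes optimal, and Assumption~\ref{ass:truncated_np} must hold for \emph{that} arm. If you keep the suboptimal arms as in the standard Bubeck et al.\ gadget (rare large \emph{positive} reward carrying the $u$-budget), then in the alternative instance you must perturb $\nu_0$ into something with mean above $\mu_1$ while keeping the KL per pull small; any small-KL perturbation of $\nu_0$ will retain the positive tail and violate Assumption~\ref{ass:truncated_np}. Your alternative (a) --- permuting which arm carries the ``good'' distribution $\nu_1$ --- forces two arms (arm $1$ and arm $i$) to change between instances, so the divergence-decomposition term picks up $\mathbb{E}[N_1(T)]\,D_{\mathrm{KL}}(\nu_1\|\nu_0)$, which is large whenever the algorithm is any good on the base instance; the pigeonhole breaks.

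The paper's fix is simpler than either of your branches and is really the union of your two half-ideas: make \emph{every} arm's distribution supported on $\{0,-B\}$ with $B=u^{1/\epsilon}\Delta^{-1/\epsilon}$ and vary only the mass placed on $-B$. Because all supports are nonpositive, $\mathbb{E}[X\mathbbm{1}_{\{|X|>M\}}]\le 0$ holds for every arm and every $M$, so Assumption~\ref{ass:truncated_np} is satisfied in every instance regardless of which arm is optimal. The ``optimal'' arm is the one with the \emph{least} mass on $-B$; arm $1$ is fixed across base and alternative instances, and only arm $i$'s mass on $-B$ is reduced in the alternative. The KL between $\nu_i$ and $\nu_i'$ is then of order $\Delta^{1+1/\epsilon}u^{-1/\epsilon}$, and the standard Bretagnolle--Huber plus pigeonhole argument (choosing $\Delta\asymp (Ku^{1/\epsilon}/T)^{\epsilon/(1+\epsilon)}$) yields the claimed $K^{\epsilon/(1+\epsilon)}(uT)^{1/(1+\epsilon)}$. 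Once you flip the sign of the rare event on \emph{all} arms rather than only on arm $1$, both of your anticipated difficulties (the $K$-way indistinguishability and the sign condition for all $M$) evaporate simultaneously.
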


Since ($i$) achieving the regret of Theorem~\ref{thr:standard_lb} without further assumptions is not possible (Theorems~\ref{thr:u_adaptive_lb}~\ref{thr:epsilon_adaptive_lb}) and ($ii$) Assumption~\ref{ass:truncated_np} does not change the complexity of the non-adaptive case (Theorem~\ref{thr:AssLowerBound}), it makes sense to search for adaptive algorithms matching Theorem~\ref{thr:standard_lb} under Assumption~\ref{ass:truncated_np}.

\section{Trimmed Mean Estimator with Empirical Threshold}\label{sec:tmest}
In this section, we present our novel \emph{trimmed mean with empirical threshold} estimator, in which the threshold is computed from data.
The trimmed mean estimator~\citep{bickel1965some}, common in heavy-tailed statistics, cuts off the observations outside a predefined interval $[-M, M]$ with $M \ge 0$, named \emph{trimming threshold}. Given a set of $s\in\mathbb{N}_{\ge 1}$ i.i.d. random variables $\mathbf{X} = \{X_1, \ldots, X_s\}$, with expected value $\mu \coloneqq \E[X_1]$,  the trimmed mean estimator with threshold $M$ is defined as:
\begin{equation}
    \label{eq:trimmed_mean}
    \widehat{\mu}_s(\mathbf{X};M) \coloneqq \frac{1}{s}\sum_{j\in[s]} X_j \mathbbm{1}_{\{|X_j|\le M\}}.
\end{equation}
The following result shows that, under truncated non-positivity (Assumption~\ref{ass:truncated_np}), it is possible to design an upper confidence bound on $\mu$ based on the trimmed mean estimator $\widehat{\mu}_s(\mathbf{X};M)$ that can be computed with no knowledge of $\epsilon$ and $u$, depending only on the trimming threshold $M$.

\begin{restatable}[$(\epsilon,u)$-free Upper Confidence Bound]{lemma}{thrConcentrationMGeneric}\label{thr:concentration_ineq}
Let $\delta \in (0,1/2)$ and $\mathbf{X} = \{X_1, \ldots, X_s\}$ be a set of $s \in \mathbb{N}_{\ge 2}$ i.i.d. random variables satisfying $X_1 \sim \nu \in \mathcal{P}_{\text{HT}}(\epsilon,u)$, $\mu \coloneqq \E[X_1]$, and $M >0$ be a (possibly random) trimming threshold independent of $\mathbf{X}$. Then, under Assumption \ref{ass:truncated_np}, it holds that:
\begin{equation}
    \label{eq:concentration_ineq}
    \mathbb{P} \Big( \mu - \widehat{\mu}_s(\mathbf{X};M) \leq \sqrt{\frac{2V_s(\mathbf{X};M) \log \delta^{-1}}{s}}+\frac{ 10 M \log \delta^{-1}}{s} \Big) \geq 1 - 2\delta,
\end{equation}
where $V_s(\mathbf{X};M)$ is the sample variance of the trimmed random variables, defined as:
\begin{align}\label{eq:sampleVariance}
    V_s(\mathbf{X};M) \coloneqq \frac{1}{s-1} \sum_{j \in [s]} (X_j \mathbbm{1}_{\{|X_j| \le M\}} - \widehat{\mu}_s(\mathbf{X};M))^2.
\end{align}
\end{restatable}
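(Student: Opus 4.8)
The plan is to reduce the claim to the one-sided empirical Bernstein inequality of \citet{maurer2009empirical} applied to the \emph{trimmed} random variables $Y_j \coloneqq X_j \mathbbm{1}_{\{|X_j| \le M\}}$, invoking Assumption~\ref{ass:truncated_np} only to show that the truncation bias has the favorable sign. Since $M$ is independent of $\mathbf{X}$, I would first condition on $M = m$ for an arbitrary fixed $m > 0$ and prove the inequality conditionally; the unconditional statement then follows by integrating over the law of $M$. Conditionally on $M = m$, the variables $Y_1,\dots,Y_s$ are i.i.d., lie in $[-m,m]$, have empirical mean $\widehat{\mu}_s(\mathbf{X};m)$ and sample variance $V_s(\mathbf{X};m)$; note also that $\nu \in \mathcal{P}_{\text{HT}}(\epsilon,u)$ with $\epsilon>0$ forces $\E[|X_1|]<+\infty$, so all the expectations below are well defined.

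The key observation about the bias is that $\mu - \E[Y_1] = \E_{X \sim \nu}[X \mathbbm{1}_{\{|X| > m\}}] \le 0$ by Assumption~\ref{ass:truncated_np} (applied with $M=m$), hence $\E[Y_1] \ge \mu$ and it suffices to upper bound $\E[Y_1] - \widehat{\mu}_s(\mathbf{X};m)$. To do so, I would rescale $Z_j \coloneqq (Y_j + m)/(2m) \in [0,1]$, using that the sample variance is covariant under affine transformations so that the sample variance of $(Z_j)_j$ equals $V_s(\mathbf{X};m)/(2m)^2$, and apply the empirical Bernstein bound of \citet{maurer2009empirical} to $(Z_j)_j$ with confidence parameter $2\delta$ (admissible since $\delta < 1/2$): with probability at least $1 - 2\delta$,
\[
\E[Z_1] - \frac{1}{s}\sum_{j \in [s]} Z_j \le \sqrt{\frac{2\, V_s(\mathbf{X};m)\log\delta^{-1}}{(2m)^2 \, s}} + \frac{7\log\delta^{-1}}{3(s-1)}.
\]
Multiplying through by $2m$, using $\E[Y_1] \ge \mu$, and bounding $\tfrac{14}{3(s-1)} \le \tfrac{28}{3s} \le \tfrac{10}{s}$ (valid for every $s \ge 2$) yields
\[
\mu - \widehat{\mu}_s(\mathbf{X};m) \le \sqrt{\frac{2\, V_s(\mathbf{X};m)\log\delta^{-1}}{s}} + \frac{10\, m \log\delta^{-1}}{s}
\]
with probability at least $1 - 2\delta$, which is exactly~\eqref{eq:concentration_ineq} conditionally on $M = m$. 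Since $m > 0$ was arbitrary and $M$ is independent of $\mathbf{X}$, averaging over $M$ gives the unconditional bound.

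I expect the only delicate points — rather than genuine obstacles — to be: (i) correctly handling the randomness of $M$ through the independence-and-conditioning argument, so that neither the bias step nor the empirical Bernstein step is corrupted; (ii) the passage to a $[0,1]$-valued sequence, which requires the scale-covariance of the sample variance together with careful bookkeeping of the factor $2m$ arising from the range $[-m,m]$ of the trimmed variables; and (iii) tracking the numerical constants, namely that selecting confidence level $2\delta$ converts the $\log(2/\delta)$ appearing in \citet{maurer2009empirical} into $\log\delta^{-1}$ (whence the hypothesis $\delta \in (0,1/2)$) and that the constant $7/3$, after rescaling and the crude bound $\tfrac{1}{s-1}\le \tfrac{2}{s}$, is absorbed into the stated factor $10$.
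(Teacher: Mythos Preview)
Your proposal is correct and follows essentially the same route as the paper: decompose $\mu-\widehat{\mu}_s$ into the truncation bias (handled by Assumption~\ref{ass:truncated_np}) plus the fluctuation of the trimmed sample mean, and control the latter via the empirical Bernstein inequality of \citet{maurer2009empirical} together with $\tfrac{1}{s-1}\le\tfrac{2}{s}$. Your treatment is in fact slightly more explicit than the paper's about the conditioning on $M$, the affine rescaling to $[0,1]$, and the substitution $\delta\mapsto 2\delta$ that converts $\log(2/\delta)$ into $\log\delta^{-1}$.
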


The result is obtained by applying the \emph{empirical Bernstein's inequality}~\citep{maurer2009empirical} and it is a \emph{one-sided inequality} because of the nature of Assumption~\ref{ass:truncated_np}. From an algorithmic perspective, this enables us to build an optimistic index that does not require knowing the values of $\epsilon$ and $u$ and represents the essential role of Assumption~\ref{ass:truncated_np} in our \algnameshort algorithm.

The next step consists of computing the trimming threshold $M$ in a fully data-driven way. Notice that the trimming threshold in \robustucb~\citep{bubeck2013bandits} is selected thanks to the knowledge of $\epsilon$ and $u$ as $\widetilde{M}_{s}(\delta) = \big( \frac{us}{\log \delta^{-1}} \big)^{\frac{1}{1+\epsilon}} $. Instead, we follow a procedure similar to that of~\cite{wang2021new} for Huber regression, and we estimate an \emph{empirical trimming threshold} via a root-finding problem. Specifically, given a set of $s \in \mathbb{N}_{\ge 1}$ i.i.d. random variables $\mathbf{X}' = \{X_1', \ldots, X_{s}'\}$ (independent of $\mathbf{X}$), the empirical threshold $\widehat{M}_s(\delta)$ is the solution of the equation:\footnote{The sets $\mathbf{X}$ and $\mathbf{X}'$ are chosen to have the same cardinality $s$ to provide more readable results.}
\begin{align}\label{eq:M_hat}
    f_{s}(\mathbf{X}';M,\delta) \coloneqq \frac{1}{s} \sum_{j\in[s]}\frac{\min \{(X'_j)^{2}, M^{2}\}}{M^2} - \frac{c \log \delta^{-1}}{s} = 0,
\end{align}
where $c>0$ is a hyperparameter that will be set later. If the number of non-zero samples $X_j'$ is sufficiently large, \emph{i.e.,} $\sum_{j\in [s]} \mathbbm{1}_{\{X_j' \neq 0\}} > c \log \delta^{-1}$ (see Proposition~\ref{thr:uniqueness} for details), Equation~\eqref{eq:M_hat} admits a unique positive solution, that we denote as $\widehat{M}_s(\delta)$.\footnote{An efficient algorithm for solving Equation~\eqref{eq:M_hat} and its computational complexity analysis are reported in Appendix~\ref{apx:comp}.}
A reader might notice that we are solving the ``sample version'' of the ``population version'' equation $\mathbb{E}[f_{s}(\mathbf{X}';M,\delta)] = 0$. Denoting with  ${M}_s(\delta)$ the solution (when it exists) of this latter equation, we can establish a meaningful relation between ${M}_s(\delta)$ and the threshold $\widetilde{M}_s(\delta)$ used by \robustucb~\citep{bubeck2013bandits}:
\begin{align}
    c \log \delta^{-1} & = \mathbb{E}[\min\{(X_1')^2/{M}_{s}(\delta)^2, 1\}] \le \mathbb{E}[|X_1'|^{1+\epsilon}] {M}_{s}(\delta)^{-1-\epsilon} \\
    & \implies  {M}_{s}(\delta) \le \Big( \frac{us}{c \log \delta^{-1}} \Big)^{\frac{1}{1+\epsilon}} = c^{-\frac{1}{1+\epsilon}} \widetilde{M}_{s}(\delta).
\end{align}
In practice, however, we cannot solve the ``population'' equation $\mathbb{E}[f_{s}(\mathbf{X}';M,\delta)] = 0$ and we need to resort to the ``sample version'', delivering $\widehat{M}_s(\delta)$. The following result shows that $\widehat{M}_s(\delta)$ behaves (in high probability) analogously to ${M}_s(\delta)$, for a suitable choice of $c$.
\begin{restatable}[Bounds on $\widehat {M}_{s}(\delta)$]{thr}{boundsonM}
\label{thr:M_hat_bound}
Let $\delta \in (0,1/2)$ and $\mathbf{X}' = \{X_1', \ldots, X_s'\}$ be a set of $s \in \mathbb{N}_{\ge 1}$ i.i.d. random variables satisfying $X_1' \sim \nu \in \mathcal{P}_{\text{HT}}(\epsilon,u)$, and let $\widehat{M}_{s}(\delta)$ be the (random) positive root of Equation~\eqref{eq:M_hat} with $c > 2$. Then, if  $\widehat {M}_{s}(\delta)$ exists, with probability at least $1-2\delta$, it holds that:
\begin{align}
    \widehat {M}_{s}(\delta) \leq \bigg(\dfrac{us}{(\sqrt{c}-\sqrt{2})^2\log \delta^{-1}}\bigg)^{\frac{1}{1+\epsilon}} \quad \text{and} \quad \mathbb{P}\left(|X_1| > \widehat {M}_{s}(\delta) \right) \le (\sqrt{c}+\sqrt{2})^2 \frac{\log \delta^{-1}}{s}. \label{eq:ineqM_1}
\end{align}
\end{restatable}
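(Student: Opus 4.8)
My plan is to control the empirical root-finding function $f_s(\mathbf{X}';M,\delta)$ uniformly enough to sandwich the empirical root $\widehat{M}_s(\delta)$ between two deterministic thresholds, using a concentration argument on the random quantity $g_s(M) \coloneqq \frac{1}{s}\sum_{j\in[s]}\frac{\min\{(X'_j)^2,M^2\}}{M^2}$. The key observation is that $g_s$ is a \emph{monotone non-increasing} function of $M$ (each summand $\min\{(X'_j)^2,M^2\}/M^2$ decreases in $M$), and the same holds for its expectation $\bar g(M) \coloneqq \E[\min\{(X'_1)^2,M^2\}/M^2]$. Since $\widehat{M}_s(\delta)$ is by definition the value where $g_s(M) = c\log\delta^{-1}/s$, monotonicity means that $\widehat{M}_s(\delta) \le M^\star$ for a candidate $M^\star$ iff $g_s(M^\star) \le c\log\delta^{-1}/s$, and $\widehat M_s(\delta) > M^\star$ iff $g_s(M^\star) > c\log\delta^{-1}/s$. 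So the whole lemma reduces to: at two well-chosen deterministic values of $M$, show the empirical average $g_s(M)$ is on the correct side of $c\log\delta^{-1}/s$ with probability $1-\delta$ each.

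For the concentration step I would use Bernstein's inequality on the bounded random variables $Z_j \coloneqq \min\{(X'_j)^2,M^2\}/M^2 \in [0,1]$. Their mean is $\bar g(M)$ and their variance is at most $\E[Z_j^2] \le \E[Z_j] = \bar g(M)$ (since $Z_j \in [0,1]$). Bernstein then gives, with probability $\ge 1-\delta$, bounds of the form $g_s(M) \le \bar g(M) + \sqrt{2\bar g(M)\log\delta^{-1}/s} + \tfrac{2}{3}\log\delta^{-1}/s$ and a matching lower deviation. The algebraic trick to get the clean $(\sqrt c \pm \sqrt 2)^2$ constants is to complete the square: if one picks $M$ so that $\bar g(M) = \alpha \log\delta^{-1}/s$ for a suitable $\alpha$, then $g_s(M) \lessgtr (\sqrt\alpha \pm \sqrt{2})^2 \log\delta^{-1}/s$ (absorbing the lower-order $\tfrac23$ term into the square, which holds for $c>2$). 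Combined with the population-level estimate $\bar g(M)\le u M^{-1-\epsilon}$ already derived in the excerpt — equivalently $M \le (us/(\alpha\log\delta^{-1}))^{1/(1+\epsilon)}$ when $\bar g(M)=\alpha\log\delta^{-1}/s$ — choosing $\alpha = (\sqrt c - \sqrt 2)^2$ and requiring $g_s(M^\star)\le c\log\delta^{-1}/s$ forces $\widehat M_s(\delta)\le M^\star \le (us/((\sqrt c-\sqrt2)^2\log\delta^{-1}))^{1/(1+\epsilon)}$, giving the first claimed inequality. For the second claim, note $\P(|X_1|>\widehat M_s(\delta)) = \E[\mathbbm 1_{\{|X'_1|>\widehat M_s(\delta)\}}]$ evaluated at the root: using that $\mathbbm 1_{\{|x|>M\}} \le \min\{x^2,M^2\}/M^2$ pointwise — wait, actually $\mathbbm 1_{\{|x|>M\}} = \min\{x^2,M^2\}/M^2$ precisely when $|x|>M$ and $0$ otherwise, so $\mathbbm 1_{\{|x|>M\}} \le \min\{x^2,M^2\}/M^2$ always — gives $\P(|X_1|>M) \le \bar g(M)$; then transferring $g_s(\widehat M_s(\delta)) = c\log\delta^{-1}/s$ back to $\bar g(\widehat M_s(\delta))$ via the reverse Bernstein deviation yields $\bar g(\widehat M_s(\delta)) \le (\sqrt c + \sqrt 2)^2 \log\delta^{-1}/s$, hence the bound on $\P(|X_1|>\widehat M_s(\delta))$.

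The subtle points I expect to be the main obstacles are twofold. First, $\widehat M_s(\delta)$ is a data-dependent (random) quantity, so I cannot directly apply Bernstein at $M = \widehat M_s(\delta)$; the monotonicity-plus-deterministic-threshold argument above is precisely what circumvents this, and I would need to state it carefully (the event $\{\widehat M_s(\delta) \le M^\star\}$ equals the event $\{g_s(M^\star)\le c\log\delta^{-1}/s\}$ by monotonicity of $g_s$, which is where all the rigor lives). Second, the clean constants $(\sqrt c \pm \sqrt 2)^2$ require showing that the $\tfrac23\log\delta^{-1}/s$ Bernstein term is dominated after completing the square, which is exactly where the hypothesis $c>2$ enters — I would verify $(\sqrt c - \sqrt 2)^2 > 0$ and that the quadratic-in-$\sqrt{\log\delta^{-1}/s}$ inequality closes. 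A union bound over the two deterministic evaluation points gives the overall $1-2\delta$ probability. I would also flag that the statement is conditional on existence of $\widehat M_s(\delta)$ (guaranteed when $\sum_j \mathbbm 1_{\{X'_j\neq 0\}} > c\log\delta^{-1}$ by Proposition~\ref{thr:uniqueness}), so no separate existence argument is needed here.
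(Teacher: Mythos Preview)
Your high-level strategy---exploit monotonicity of $g_s(M)$, reduce to checking $g_s$ against $c\log\delta^{-1}/s$ at two \emph{deterministic} thresholds, union-bound---is exactly the skeleton of the paper's proof, and it is sound. The genuine difference is the concentration tool: the paper does \emph{not} use Bernstein but the self-bounding concentration of \citet{maurer2006concentration}. Writing $Z_M(\mathbf{X}')=\sum_j \min\{(X'_j/M)^2,1\}$, the paper verifies the self-bounding conditions with $a=1$ and obtains directly the square-root form
\[
\bigl|\sqrt{\overline U_M(\mathbf X')}-\sqrt{\E[\overline U_M(\mathbf X')]}\bigr|\le\sqrt{2\log\delta^{-1}/s}
\]
(each tail with probability $\le\delta$). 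This makes the constants $(\sqrt c\pm\sqrt 2)^2$ fall out by a single algebraic rearrangement, with no lower-order term to absorb. Your Bernstein route also works---the variance bound $\mathrm{Var}(Z_j)\le\E[Z_j]$ is precisely what self-bounding encodes---but you pay the $\tfrac23\log\delta^{-1}/s$ term and must check it is swallowed by the square. You are right that this goes through (in fact for any $c>2/9$, not just $c>2$; the constraint $c>2$ is what makes $\sqrt c-\sqrt2>0$). So: same architecture, more elementary inequality on your side, cleaner bookkeeping on the paper's side.

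One point in your sketch to tighten. For the second claim you write ``transferring $g_s(\widehat M_s(\delta))=c\log\delta^{-1}/s$ back to $\bar g(\widehat M_s(\delta))$ via the reverse Bernstein deviation,'' which reads as applying concentration at the random root---exactly the pitfall you correctly warn against two sentences later. The fix, consistent with your own stated plan of two fixed evaluation points, is to pick the second deterministic threshold $M_-$ via $\bar g(M_-)=(\sqrt c+\sqrt 2)^2\log\delta^{-1}/s$ (or argue trivially when no such $M_-$ exists because the right-hand side already exceeds $\mathbb P(X'_1\neq 0)\ge\mathbb P(|X_1|>\widehat M_s(\delta))$), apply the lower-tail bound there to get $g_s(M_-)\ge c\log\delta^{-1}/s$, conclude $\widehat M_s(\delta)\ge M_-$ by monotonicity, and finish with $\mathbb P(|X_1|>\widehat M_s(\delta))\le\mathbb P(|X_1|>M_-)\le\bar g(M_-)$. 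This is also how the paper routes the argument (it bounds $\sqrt{\mathbb P(|X_1|\ge M^-(\delta))}$ first and only then replaces $M^-(\delta)$ by $\widehat M_s(\delta)$ using the monotone sandwich).
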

The proof relies on the concentration inequalities for \emph{self-bounding functions} \citep{maurer2006concentration, maurer2009empirical}. By selecting $c > (1+\sqrt{2})^2$, we have that, with probability $1-2\delta$, the empirical threshold $\widehat {M}_{s}(\delta)$ is smaller than $\widetilde {M}_{s}(\delta)$, used in \robustucb. Furthermore, in \citep{bubeck2013bandits}, the particular form of the \emph{deterministic} threshold $\widetilde {M}_{s}(\delta)$ allows the authors to apply \emph{Bernstein's inequality} and obtain a concentration bound explicitly depending on $\epsilon$ and $u$ (Lemma 1):
\begin{equation}
\label{eq:concentarion_ineq_no_M}
    \mathbb{P}\bigg( \left|\widehat{\mu}_s(\mathbf{X};\widetilde {M}_{s}(\delta)) - \mu \right| \le 4u^\frac{1}{1+\epsilon}\left(\frac{\log \delta^{-1}}{s}\right)^\frac{\epsilon}{1+\epsilon}\bigg) \ge 1-2\delta,
\end{equation}
We now show that using the \emph{random} threshold $\widehat {M}_{s}(\delta)$, instead, still allows achieving analogous guarantees with just a slightly larger constant.
\begin{restatable}[$(\epsilon,u)$-dependent Concentration Bound]{thr}{TheoremConc}\label{thr:conc_ineq_2}
Let $\delta \in (0,1/4)$, $\mathbf{X} = \{X_1, \ldots, X_{s/2}\}$, and $\mathbf{X}' = \{X_1', \ldots, X_{s/2}'\}$ be two independent sets of $s/2 \in \mathbb{N}_{\ge 2}$ i.i.d. random variables satisfying $X_1 \sim \nu \in \mathcal{P}_{\text{HT}}(\epsilon,u)$, $\mu \coloneqq \E[X_1]$, and let $\widehat{M}_{s}(\delta)$ be the (random) positive root of Equation~\eqref{eq:M_hat} with $c = (1+\sqrt{2})^2$. Then, if  $\widehat {M}_{s}(\delta)$ exists, it holds that:
\begin{equation}
    \label{eq:concentration_ineq2}
    \mathbb{P} \bigg( \left|\widehat{\mu}_s(\mathbf{X};\widehat{M}_{s}(\delta)) - \mu\right| \leq 8 u^{\frac{1}{1+\epsilon}} \left(\frac{\log \delta^{-1}}{s} \right)^{\frac{\epsilon}{1+\epsilon}} \bigg) \geq 1 - 4\delta.
\end{equation}
\end{restatable}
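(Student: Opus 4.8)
The plan is to combine the two tools already in hand: Lemma~\ref{thr:concentration_ineq} (the $(\epsilon,u)$-free upper confidence bound based on the sample variance $V_s$), the structural bounds on $\widehat{M}_s(\delta)$ from Theorem~\ref{thr:M_hat_bound}, and the deterministic-threshold concentration of \citet{bubeck2013bandits} in Equation~\eqref{eq:concentarion_ineq_no_M}, adapted to the random threshold. The key point is that $\widehat{M}_s(\delta)$ is computed from $\mathbf{X}'$ only, hence is independent of $\mathbf{X}$, so once we condition on $\mathbf{X}'$ we may treat $\widehat{M}_s(\delta)$ as a fixed constant $M$ and deploy any inequality that assumes a deterministic threshold. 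This independence is precisely why the sample was split into two halves.

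First I would handle the \emph{upper deviation} $\mu - \widehat{\mu}_s(\mathbf{X};\widehat{M}_s) $. Conditioning on $\mathbf{X}'$, apply Lemma~\ref{thr:concentration_ineq} with $M = \widehat{M}_s(\delta)$ (here using that $\widehat{M}_s$ is $\mathbf{X}'$-measurable, hence independent of $\mathbf{X}$), giving with probability $\ge 1 - 2\delta$
\begin{equation*}
    \mu - \widehat{\mu}_s(\mathbf{X};\widehat{M}_s(\delta)) \le \sqrt{\frac{2 V_s(\mathbf{X};\widehat{M}_s(\delta)) \log\delta^{-1}}{s}} + \frac{10\, \widehat{M}_s(\delta) \log\delta^{-1}}{s}.
\end{equation*}
Then I bound the two terms. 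For the second term, the first inequality of Theorem~\ref{thr:M_hat_bound} (with $c = (1+\sqrt{2})^2$, so $(\sqrt{c}-\sqrt2)^2 = 1$) gives $\widehat{M}_s(\delta) \le (us/\log\delta^{-1})^{1/(1+\epsilon)}$, whence $\widehat{M}_s(\delta)\log\delta^{-1}/s \le u^{1/(1+\epsilon)}(\log\delta^{-1}/s)^{\epsilon/(1+\epsilon)}$, already of the desired form up to a constant. For the first term I need $V_s(\mathbf{X};\widehat{M}_s) = O\big(u^{2/(1+\epsilon)}(s/\log\delta^{-1})^{(1-\epsilon)/(1+\epsilon)}\big)$ in high probability; this follows since $V_s$ is (up to the $s/(s-1)$ factor) at most the empirical second moment of the trimmed variables, which concentrates around $\mathbb{E}[\min\{X^2,M^2\}] \le M^{1-\epsilon}\mathbb{E}[|X|^{1+\epsilon}] \le u\,\widehat{M}_s^{1-\epsilon}$, and then plugging the upper bound on $\widehat{M}_s$ again. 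Substituting, the square-root term becomes $O(u^{1/(1+\epsilon)}(\log\delta^{-1}/s)^{\epsilon/(1+\epsilon)})$ as well.

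For the \emph{lower deviation} $\widehat{\mu}_s(\mathbf{X};\widehat{M}_s) - \mu$, I would again condition on $\mathbf{X}'$ and reuse the argument of \citet{bubeck2013bandits} behind Equation~\eqref{eq:concentarion_ineq_no_M}: the key inequality there is $|\mathbb{E}[X\mathbbm{1}_{\{|X|\le M\}}] - \mu| = |\mathbb{E}[X\mathbbm{1}_{\{|X|>M\}}]| \le u M^{-\epsilon}$ together with a Bernstein bound on $\widehat{\mu}_s$ around its mean, both of which only need $M$ deterministic. The bias term, with the upper bound on $\widehat{M}_s$, is $O(u^{1/(1+\epsilon)}(\log\delta^{-1}/s)^{\epsilon/(1+\epsilon)})$; the fluctuation term is controlled by the same second-moment estimate as above. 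Collecting the upper and lower deviations, taking a union bound over the $O(1)$ bad events (the $2\delta$ from Lemma~\ref{thr:concentration_ineq}, the $2\delta$ from Theorem~\ref{thr:M_hat_bound}, plus the Bernstein events), and absorbing constants into the factor $8$, yields Equation~\eqref{eq:concentration_ineq2} with probability $\ge 1 - 4\delta$; the $\delta < 1/4$ restriction is exactly what is needed for the union bound and for $\log\delta^{-1} \ge 1$.

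The main obstacle I anticipate is the control of the sample variance $V_s(\mathbf{X};\widehat{M}_s(\delta))$: one must show it is simultaneously (i) not too large, so the square-root term in Lemma~\ref{thr:concentration_ineq} is of order $u^{1/(1+\epsilon)}(\log\delta^{-1}/s)^{\epsilon/(1+\epsilon)}$, which requires a concentration argument for the trimmed empirical second moment — and crucially this must hold even though the trimming level $\widehat{M}_s$ is itself random (handled by conditioning on $\mathbf{X}'$), while only ever invoking the $(\epsilon,u)$ quantities through the moment bound $\mathbb{E}[|X|^{1+\epsilon}]\le u$ combined with the high-probability control of $\widehat{M}_s$ from Theorem~\ref{thr:M_hat_bound}. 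Keeping the probability budget at exactly $4\delta$ while juggling these nested events, and making sure the constant stays at $8$, is the bookkeeping-heavy part.
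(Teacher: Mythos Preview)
Your proposal has two genuine gaps. First, you invoke Lemma~\ref{thr:concentration_ineq} for the upper deviation $\mu-\widehat\mu_s$, but that lemma is stated \emph{under Assumption~\ref{ass:truncated_np}}, which is not a hypothesis of Theorem~\ref{thr:conc_ineq_2} (the paper explicitly remarks that this theorem holds without it). So your first step is unavailable. Second, and more seriously, your control of the bias for the other direction, $\lvert\mathbb{E}[X\mathbbm{1}_{\{|X|>M\}}]\rvert \le u\,M^{-\epsilon}$, needs a \emph{lower} bound on $\widehat{M}_s(\delta)$ in order to produce a term of order $u^{1/(1+\epsilon)}(\log\delta^{-1}/s)^{\epsilon/(1+\epsilon)}$. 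Theorem~\ref{thr:M_hat_bound} only supplies an \emph{upper} bound on $\widehat{M}_s(\delta)$; with that, $u\,\widehat{M}_s^{-\epsilon}$ is bounded \emph{below}, not above. You never invoke the second conclusion of Theorem~\ref{thr:M_hat_bound}, and that is precisely the missing ingredient.

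The paper's proof treats both deviations symmetrically via (non-empirical) Bernstein, conditional on $\widehat{M}_s$. The fluctuation term uses the true-variance bound $\mathbb{E}[X^2\mathbbm{1}_{\{|X|\le M\}}]\le u\,M^{1-\epsilon}$, which only needs the upper bound on $\widehat{M}_s$ since $1-\epsilon\ge 0$. The bias term is handled by H\"older,
\[
\mathbb{E}\big[|X|\mathbbm{1}_{\{|X|>M\}}\big]\;\le\; u^{\frac{1}{1+\epsilon}}\,\mathbb{P}(|X|>M)^{\frac{\epsilon}{1+\epsilon}},
\]
and then the \emph{second} inequality of Theorem~\ref{thr:M_hat_bound}, $\mathbb{P}(|X|>\widehat{M}_s)\le (\sqrt{c}+\sqrt{2})^2\log\delta^{-1}/s$, closes the estimate. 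This sidesteps any lower bound on $\widehat{M}_s$, does not require Assumption~\ref{ass:truncated_np}, and avoids the extra concentration step for $V_s$ altogether.
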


First, we notice that, at the price of a slightly larger constant, this concentration bound displays the same behavior as that of Equation~\eqref{eq:concentarion_ineq_no_M}. However, remarkably, our estimator is fully data-driven as the trimming threshold $\widehat{M}_{s}(\delta)$ is computed from dataset $\mathbf{X}'$ (\emph{i.e.,} half of the available samples) with no knowledge of $\epsilon$ and $u$. Second, differently from Lemma~\ref{thr:concentration_ineq}, this is a \emph{double-sided inequality} and holds even without Assumption~\ref{ass:truncated_np}.
From a technical perspective, this result is proved by resorting to \emph{Bernstein's inequality} and Theorem~\ref{thr:M_hat_bound} to control the values of the estimated threshold $\widehat{M}_{s}(\delta)$.

Summarizing, we conclude that the trimmed mean estimator $\widehat{\mu}_s(\mathbf{X};\widehat{M}_{s}(\delta))$ with the empirical threshold $\widehat{M}_{s}(\delta)$ fulfills two important properties: ($i$) under Assumption~\ref{ass:truncated_np}, it enjoys an upper confidence bound that is fully empirical and $(\epsilon,u)$-free (Lemma~\ref{thr:concentration_ineq}). This bound will be used in the implementation of the \algnameshort algorithm; ($ii$) it enjoys (up to constants) the same concentration properties as the truncated mean with the $(\epsilon,u)$-dependent threshold $\widetilde{M}_s(\delta)$ (Theorem~\ref{thr:conc_ineq_2}). This bound, instead, will be used in the analysis of the \algnameshort algorithm.



\section{An $(\epsilon,u)$-Adaptive Approach for Heavy-Tailed Bandits}
\label{sec:algo}
In this section, we address \ref{question2} 
by presenting \algname (\algnameshort, Algorithm \ref{alg:alg}), an $(\epsilon,u)$-adaptive \emph{anytime} regret minimization algorithm able to operate in the heavy-tailed bandit problem \textit{with no prior knowledge on $\epsilon$ or $u$}, and providing its regret analysis.

\subsection{The \algnameshort Algorithm}
 \algnameshort (Algorithm \ref{alg:alg}) is based on the optimism in-the-face-of-uncertainty principle, and built upon the \robustucb strategy from~\cite{bubeck2013bandits} leveraging the estimator presented in Section~\ref{sec:tmest}. \algnameshort keeps track of the number of times every arm $i \in [K]$ has been selected $N_i(\tau)$ and maintains two disjoint sets of rewards  $\mathbf{X}_i(\tau)$ and $\mathbf{X}_i'(\tau)$ (line~\ref{line:init}). Specifically, $\mathbf{X}_i'(\tau)$ will be employed to compute the empirical threshold, while $\mathbf{X}_i(\tau)$ for the trimmed mean estimator. The algorithm operates over $\lfloor T/2 \rfloor$ rounds, indexed by $\tau$, and, in every round $\tau \in \lfloor T/2 \rfloor$, it collects \emph{two} samples from the selected arm $I_\tau$ (the time index is $t=2\tau$). Specifically, \algnameshort first computes the \emph{upper confidence bound} index $B_i(\tau)$ for every arm $i \in [K]$. If the condition for the existence of the positive root of Equation~\eqref{eq:M_hat} is not verified (line~\ref{line:cond}), the index $B_i(\tau)$ is set to $+\infty$ (line~\ref{line:round-robin}), forcing the algorithm to pull arm $i$. Instead, if the condition is verified,  the empirical threshold is computed $\textcolor{blue}{\widehat{M}_{i}(\tau)} \leftarrow \widehat{M}_{N_i(\tau-1)}(\tau^{-3})$ (line~\ref{line:M_hatAlg}) according to Equation~\eqref{eq:M_hat} with $c=(1+\sqrt{2})^2$ using the dataset $\mathbf{X}_i'(\tau-1)$ and selecting $\delta=\tau^{-3}$. Then, the algorithm employs it (line~\ref{line:mean}) to compute the trimmed mean estimator $\textcolor{blue}{\widehat{\mu}_{i}(\tau)} \leftarrow \widehat{\mu}_{N_{i}(\tau-1)}(\mathbf{X}_i(\tau-1);\textcolor{blue}{\widehat{M}_{i}(\tau)})$ (as in Equation~\ref{eq:trimmed_mean}) and variance estimator $\textcolor{blue}{V_i(\tau)} \leftarrow {V}_{N_{i}(\tau-1)}(\mathbf{X}_i(\tau-1);\textcolor{blue}{\widehat{M}_{i}(\tau)})$ (as in Equation~\ref{eq:sampleVariance}) using the samples from the other dataset $\mathbf{X}_i(\tau-1)$. These quantities are then employed for the optimistic index computation $B_i(\tau)$ (line~\ref{line:ucbComp}) according to the empirical bound of Lemma~\ref{thr:concentration_ineq}. The optimistic arm $I_\tau$ is then played \emph{twice} (line~\ref{line:ucb}) and the two collected samples are used to augment the reward sets $\mathbf{X}_i(\tau)$ and $\mathbf{X}_i'(\tau)$, respectively (lines~\ref{line:u1}-\ref{line:u2}), and the arm pull counters $N_i(\tau)$ (line~\ref{line:u3}).

\RestyleAlgo{ruled}
\LinesNumbered
\begin{algorithm2e}[t]
\caption{\algname (\algnameshort).}\label{alg:alg}
\DontPrintSemicolon
{\small
\SetKwInOut{Input}{Input}

Initialize counters $N_{i}(0)= 0$, reward sets $\mathbf{X}_{i}(0) = \{\}$, $\mathbf{X}_{i}'(0) =\{\}$ for  every $i \in [K]$, $\tau \leftarrow 1$, $t \leftarrow 2\tau$ \label{line:init}\\
\While{$\tau \le \lfloor T/2 \rfloor$}{
    \For{$i \in [K]$}{
        \uIf{$N_{i}(\tau-1)=0$ \textbf{\emph{or}} $\sum_{X' \in \mathbf{X}_{i}'(\tau-1)}  \mathbbm{1}_{\{X' \neq 0\}} \le 4 \log\tau^{-3}$\label{line:cond}}
        {
            Compute the optimistic index: $B_{i}(\tau) = + \infty$ \label{line:round-robin}
        }
        \Else
        {
        Compute the trimming threshold: $\textcolor{blue}{\widehat{M}_{i}(\tau)} \leftarrow \widehat{M}_{N_i(\tau-1)}(\tau^{-3})$ solving the equation $f(\mathbf{X}_i(\tau-1);M,\tau^{-3}) = 0$ (Eq.~\ref{eq:M_hat}  with $c=(1+\sqrt{2})^2$) \label{line:M_hatAlg} 
        
        Compute the trimmed mean estimator: $\textcolor{blue}{\widehat{\mu}_{i}(\tau)} \leftarrow \widehat{\mu}_{N_{i}(\tau-1)}(\mathbf{X}_i(\tau-1);\textcolor{blue}{\widehat{M}_{i}(\tau)})$ (Eq.~\ref{eq:trimmed_mean}) and the variance estimator $\textcolor{blue}{V_i(\tau)} \leftarrow {V}_{N_{i}(\tau-1)}(\mathbf{X}_i(\tau-1);\textcolor{blue}{\widehat{M}_{i}(\tau)})$ (Eq.~\ref{eq:sampleVariance})\label{line:mean}\\
         Compute the optimistic index:\label{line:ucbComp} 
         \vspace{-.2cm}
         $$
            B_{i}(\tau) = \textcolor{blue}{\widehat{\mu}_{i}(\tau)} + \sqrt{\frac{\textcolor{blue}{2{V}_{i}(\tau)} \log \tau^3}{N_{i}(\tau-1)}}+ \frac{ 10\textcolor{blue}{\widehat{M}_{i}(\tau)} \log \tau^3}{N_{i}(\tau-1)}
         $$
         \vspace{-.2cm}
        }
    }
    Select arm ${I}_\tau \in \argmax_{i \in [K]} B_{i}(\tau)$, play it \textbf{twice}, and receive rewards $X$ and $X'$ \label{line:ucb} \\
    Update reward sets $\mathbf{X}_{I_\tau}(\tau) = \mathbf{X}_{I_\tau}(\tau-1) \cup \{X\}$, $\mathbf{X}_{i}(\tau) = \mathbf{X}_{i}(\tau-1)$ for every $i \neq I_\tau$ \label{line:u1} \\
     Update reward sets  $\mathbf{X}_{I_\tau}'(\tau) = \mathbf{X}_{I_\tau}'(\tau-1)\cup \{X'\}$, $\mathbf{X}_{i}'(\tau) = \mathbf{X}_{i}'(\tau-1)$ for every $i \neq I_\tau$  \label{line:u2} \\
    Update counters $N_{i}(\tau) = |\mathbf{X}_i(\tau)|$ for every $i \in [K]$, $\tau \leftarrow \tau +1$, $t \leftarrow 2\tau$\label{line:u3}\\
}
}
\end{algorithm2e}


\subsection{Regret Analysis}
In this section, we provide the regret analysis of \algnameshort under the truncated non-positivity assumption (Assumption~\ref{ass:truncated_np}). We start with the instance-dependent regret bound.
\begin{restatable}[Instance-Dependent Regret bound of \algnameshort]{thr}{TheoremID}\label{thr:upper_bound}
Let $\bm{\nu} \in \mathcal{P}_{\text{HT}}(\epsilon,u)^K$ and $T \in \mathbb{N}_{\ge 2}$ be the learning horizon. Under Assumption~\ref{ass:truncated_np}, \algnameshort suffers a regret bounded as:
\begin{equation}
    \label{eq:upper_bound}
   R_T (\text{\algnameshort}, \boldsymbol{\nu})\le \sum_{i:\Delta_i>0} \bigg[\bigg(120\bigg(\frac{u}{\Delta_i}\bigg)^\frac{1}{\epsilon}+\frac{24\Delta_i}{\mathbb{P}_{\nu_i}(X\neq 0)}\bigg)\log \frac{T}{2} + 20\Delta_i \bigg].
\end{equation}
\end{restatable}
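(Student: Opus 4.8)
The argument follows the classical optimism-under-uncertainty template, specialised to the fully empirical index of \algnameshort. Since each round $\tau$ plays the selected arm twice over $\lfloor T/2\rfloor$ rounds, it holds that $R_T(\text{\algnameshort},\boldsymbol{\nu}) \le 2\sum_{i:\Delta_i>0}\Delta_i\,\mathbb E[N_i(\lfloor T/2\rfloor)]$, so it suffices to bound $\mathbb E[N_i(\lfloor T/2\rfloor)]$ for every suboptimal arm $i$. I would split the pulls of arm $i$ into \emph{forced} pulls --- those rounds in which the condition of line~\ref{line:cond} triggers, so $B_i(\tau)=+\infty$ (line~\ref{line:round-robin}) --- and \emph{optimistic} pulls --- those in which $B_i(\tau)$ is finite and maximal --- writing $N_i(\lfloor T/2\rfloor)=F_i+G_i$.

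\textbf{Forced pulls.} A forced pull of arm $i$ at round $\tau$ requires the number of nonzero samples in $\mathbf X_i'(\tau-1)$ to stay below a $\Theta(\log(T/2))$ threshold, say $B$. Since the samples added to $\mathbf X_i'$ upon pulling arm $i$ form an i.i.d.\ sequence distributed as $\nu_i$, irrespective of the algorithm's history, $F_i$ is at most the first time such a sequence has accumulated $B+1$ nonzero entries; this is a negative-binomial-type stopping time with mean $(B+1)/\mathbb P_{\nu_i}(X\neq 0)$. Hence $2\Delta_i\,\mathbb E[F_i]$ contributes the term $\tfrac{24\Delta_i\log(T/2)}{\mathbb P_{\nu_i}(X\neq 0)}$ plus a lower-order additive correction absorbed into the final $\sum_i 20\Delta_i$.

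\textbf{Optimistic pulls and the good event.} For the optimistic pulls I would introduce, for each round $\tau$, a good event $\mathcal E_\tau$ on which the following hold simultaneously, each with probability $1-O(\tau^{-2})$ --- obtained after a union bound over the random value $N_i(\tau-1)\in[\tau]$, the standard device for random sample sizes, together with the choice $\delta=\tau^{-3}$: (a) $\mu_1\le B_1(\tau)$ (optimism), by Lemma~\ref{thr:concentration_ineq} applied to the optimal arm --- this is exactly where Assumption~\ref{ass:truncated_np} enters, and it is legitimate because $\widehat M_1(\tau)$, being computed from the disjoint set $\mathbf X_1'(\tau-1)$, is independent of $\mathbf X_1(\tau-1)$; (b) $\widehat\mu_i(\tau)\le\mu_i+O\big(u^{1/(1+\epsilon)}(\log\tau^3/N_i(\tau-1))^{\epsilon/(1+\epsilon)}\big)$, by Theorem~\ref{thr:conc_ineq_2}; (c) $\widehat M_i(\tau)\le\big(uN_i(\tau-1)/\log\tau^3\big)^{1/(1+\epsilon)}$, by Theorem~\ref{thr:M_hat_bound} (the choice $c=(1+\sqrt2)^2$ makes $(\sqrt c-\sqrt 2)^2=1$, which is precisely why this constant is hard-coded in Algorithm~\ref{alg:alg}); (d) $V_i(\tau)\le O\big(u^{2/(1+\epsilon)}(N_i(\tau-1)/\log\tau^3)^{(1-\epsilon)/(1+\epsilon)}\big)$, which I would get from a Bernstein bound on $\sum_j\min\{X_j^2,\widehat M_i(\tau)^2\}$ --- its summands are bounded by $\widehat M_i(\tau)^2$ and have conditional mean at most $u\,\widehat M_i(\tau)^{1-\epsilon}$ --- combined with (c). The choice $\delta=\tau^{-3}$ makes $\sum_{\tau\le\lfloor T/2\rfloor}\mathbb P(\mathcal E_\tau^c)=O(1)$.

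\textbf{Closing the argument, and the main obstacle.} On $\mathcal E_\tau$, substituting (b)--(d) into the optimistic index of line~\ref{line:ucbComp} makes both confidence terms $\sqrt{2V_i(\tau)\log\tau^3/N_i(\tau-1)}$ and $10\widehat M_i(\tau)\log\tau^3/N_i(\tau-1)$ collapse to the order $u^{1/(1+\epsilon)}(\log\tau^3/N_i(\tau-1))^{\epsilon/(1+\epsilon)}$; the exponent bookkeeping is the delicate point, e.g.\ $V_i\sim(N_i/\log)^{(1-\epsilon)/(1+\epsilon)}$ is exactly what turns the square-root term into the rate $(\log/N_i)^{\epsilon/(1+\epsilon)}$. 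Thus $B_i(\tau)\le\mu_i+C_0\,u^{1/(1+\epsilon)}(\log\tau^3/N_i(\tau-1))^{\epsilon/(1+\epsilon)}$ for an explicit constant $C_0$, and once $N_i(\tau-1)$ exceeds $n_i^\star=\lceil C_0^{(1+\epsilon)/\epsilon}u^{1/\epsilon}\,(3\log(T/2))/\Delta_i^{(1+\epsilon)/\epsilon}\rceil$ this is strictly below $\mu_1\le B_1(\tau)$, so arm $i$ cannot be the optimistic choice. Hence $G_i\le n_i^\star+\sum_{\tau\le\lfloor T/2\rfloor}\mathbbm 1\{\mathcal E_\tau^c\}$, so $2\Delta_i\,\mathbb E[G_i]\le 2\Delta_i n_i^\star+O(\Delta_i)$ with $\Delta_i n_i^\star=O\big((u/\Delta_i)^{1/\epsilon}\log(T/2)\big)$. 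Summing $2\Delta_i(\mathbb E[F_i]+\mathbb E[G_i])$ over suboptimal arms and collecting all the $O(\Delta_i)$ leftovers (bad-event and rounding terms) into $\sum_{i:\Delta_i>0}20\Delta_i$ yields \eqref{eq:upper_bound} once the constants from Lemma~\ref{thr:concentration_ineq} and Theorems~\ref{thr:M_hat_bound}--\ref{thr:conc_ineq_2} are tracked explicitly. The main obstacle is precisely this collapse: because \algnameshort never sees $\epsilon$ or $u$, one must certify that replacing the $(\epsilon,u)$-tuned trimming threshold of \robustucb by the data-driven $\widehat M_i$ (and by the empirical variance $V_i$) still reproduces the non-adaptive instance-dependent rate; a secondary difficulty is confining the $1/\mathbb P_{\nu_i}(X\neq0)$ factor to the forced-exploration term so that it does not contaminate the $(u/\Delta_i)^{1/\epsilon}\log T$ scaling.
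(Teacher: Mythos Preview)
Your proposal is correct and follows essentially the same route as the paper: split into forced and optimistic pulls, bound the former via a negative-binomial argument, and handle the latter by a five-way contradiction showing that if $I_\tau=i$ then either optimism fails for arm $1$ (Lemma~\ref{thr:concentration_ineq}), the estimator $\widehat\mu_i$ is anti-concentrated (Theorem~\ref{thr:conc_ineq_2}), $N_i(\tau-1)$ is small, the empirical threshold is too large (Theorem~\ref{thr:M_hat_bound}), or the sample variance is too large. The one minor deviation is your item (d): the paper controls $V_i(\tau)$ via the self-bounding concentration of sample variance \citep[Theorem~10]{maurer2009empirical}, bounding $\sqrt{V_i(\tau)}\le\sqrt{\mathbb E[V_i(\tau)]}+2\widehat M_i(\tau)\sqrt{\log\tau^3/N_i(\tau-1)}$ and then $\mathbb E[V_i(\tau)]\le u\,\widehat M_i(\tau)^{1-\epsilon}$, whereas you propose a Bernstein bound on the truncated squares; both collapse to the same $u^{1/(1+\epsilon)}(\log/N_i)^{\epsilon/(1+\epsilon)}$ rate, so the difference is cosmetic.
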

Some observations are in order. We notice that the dependence on $\epsilon$ and $u$ match the instance-dependent lower bound for the non-adaptive case~\citep{bubeck2013bandits}. Note that the trimming threshold estimation requires in \algnameshort the \emph{forced exploration} (line~\ref{line:round-robin}) and leads to the additional logarithmic term $\sum_{i:\Delta_i>0}\frac{24\Delta_i}{\mathbb{P}_{\nu_i}(X\neq 0)} \log \frac{T}{2}$ that grows proportionally to the suboptimality gap $\Delta_i$ and inversely with the probability $\mathbb{P}_{\nu_i}(X\neq 0)$ of sampling a non-zero reward. This is explained by the condition (line~\ref{line:cond}) for the existence of a positive trimming threshold that requires a sufficiently large number of non-zero rewards. It is worth noting that for \emph{absolutely continuous} reward distributions, \emph{i.e.,} the ones we are interested in the heavy-tail setting, we have $\mathbb{P}_{\nu_i}(X\neq 0)=1$. In such a case, this additional regret term reduces to $24 \sum_{i:\Delta_i>0} \Delta_i \log \frac{T}{2}$, a term that was present in the regret bound of \robustucb with the Catoni estimator too.\footnote{We remark that this second term, although logarithmic, does not depend on the reciprocal of the suboptimality gaps and it is negligible when $1/\Delta_i \gg 1$ compared to the first one.} In general, we are currently unsure whether this term is unavoidable or an artifact of our algorithm and/or analysis.
From a technical perspective, the proof of Theorem \ref{thr:upper_bound} follows similar steps to the result provided by \cite{bubeck2013bandits} concerning the upper bound on regret for \robustucb, although additional care is needed to control simultaneously the concentration of the empirical threshold and of the trimmed mean estimator. In conclusion, this result provides a positive answer to our \ref{question2}, showing how \algnameshort nearly matches the instance-dependent lower bound for the non-adaptive case.


Finally, to complement the analysis, we provide the worst-case regret bound for \algnameshort.
\begin{restatable}[Worst-Case Regret bound of \algnameshort]{thr}{TheoremWC}\label{thr:inst_indep_regret}
Let $\bm{\nu} \in \mathcal{P}_{\text{HT}}(\epsilon,u)^K$ and $T \in \mathbb{N}_{\ge 2}$ be the learning horizon. Under Assumption~\ref{ass:truncated_np}, \algnameshort suffers a regret bounded as:
\begin{equation*}
    \label{eq:inst_indep_upperbound}
    \begin{aligned}
        R_{T}(\text{\algnameshort},\boldsymbol{\nu}) & \le  46 \bigg( K\log \frac{T}{2}\bigg)^{\frac{\epsilon}{1+\epsilon}} (uT)^{\frac{1}{1+\epsilon}}+ \sum_{i: \Delta_i >0} \bigg( \frac{24\Delta_i}{\mathbb{P}_{\nu_i}(X\neq 0)}\log \frac{T}{2} + 20\Delta_i\bigg) .
    \end{aligned}
\end{equation*} 
\end{restatable}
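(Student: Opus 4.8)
The plan is to derive the worst-case bound from the instance-dependent bound of Theorem~\ref{thr:upper_bound} via a standard ``gap-splitting'' argument, exactly as in the passage from instance-dependent to minimax regret for \robustucb in~\cite{bubeck2013bandits}. Fix a threshold $\Delta>0$ to be optimized at the end. Starting from
\[
R_T(\text{\algnameshort},\bm\nu) \le \sum_{i:\Delta_i>0}\Bigl[\Bigl(120\bigl(\tfrac{u}{\Delta_i}\bigr)^{1/\epsilon}+\tfrac{24\Delta_i}{\mathbb{P}_{\nu_i}(X\neq 0)}\Bigr)\log\tfrac{T}{2}+20\Delta_i\Bigr],
\]
I split the arms into those with $0<\Delta_i\le\Delta$ and those with $\Delta_i>\Delta$. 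For the ``small-gap'' arms, I bound their total contribution to the regret trivially by $T\Delta$ (since each such arm contributes $\Delta_i$ per pull and there are at most $T$ pulls total), \emph{except} for the forced-exploration term $\tfrac{24\Delta_i}{\mathbb{P}_{\nu_i}(X\neq 0)}\log\tfrac{T}{2}$ and the $20\Delta_i$ term, which I carry along unchanged since they already appear in the claimed bound. For the ``large-gap'' arms, I use $\Delta_i>\Delta$ to bound $120(u/\Delta_i)^{1/\epsilon}\log\tfrac{T}{2}\le 120(u/\Delta)^{1/\epsilon}\log\tfrac{T}{2}$ and then sum over at most $K$ arms, giving a contribution at most $120K(u/\Delta)^{1/\epsilon}\log\tfrac{T}{2}$ from the main term.

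Combining, the ``minimax part'' of the bound is at most $T\Delta + 120K(u/\Delta)^{1/\epsilon}\log\tfrac{T}{2}$ (plus the forced-exploration and $20\Delta_i$ terms, which are already isolated in the statement). Now I optimize over $\Delta$: balancing $T\Delta$ against $K u^{1/\epsilon}\Delta^{-1/\epsilon}\log\tfrac{T}{2}$ yields $\Delta^{1+1/\epsilon}\asymp Ku^{1/\epsilon}(\log\tfrac{T}{2})/T$, i.e. $\Delta\asymp\bigl(Ku^{1/\epsilon}\log\tfrac{T}{2}/T\bigr)^{\epsilon/(1+\epsilon)} = u^{1/(1+\epsilon)}\bigl(K\log\tfrac{T}{2}/T\bigr)^{\epsilon/(1+\epsilon)}$. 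Plugging this choice of $\Delta$ back in, both terms become of order $(K\log\tfrac{T}{2})^{\epsilon/(1+\epsilon)}(uT)^{1/(1+\epsilon)}$, and tracking the constants carefully (the $120$, the factor from the AM-GM-style balancing, and rounding $\lfloor T/2\rfloor$ down to $T/2$ inside logarithms and up where it helps) should land at the stated constant $46$. Finally I reattach the two surviving terms $\sum_{i:\Delta_i>0}\bigl(\tfrac{24\Delta_i}{\mathbb{P}_{\nu_i}(X\neq 0)}\log\tfrac{T}{2}+20\Delta_i\bigr)$ verbatim.

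The only genuinely delicate point is bookkeeping the constant: one must check that the choice of $\Delta$ (which need not be an actual gap in the instance — it is just an optimization variable in the upper bound, so this is legitimate) together with the crude $T\Delta$ bound on small-gap arms and the $K$-fold sum on large-gap arms does not blow the constant past $46$; this is where one uses that $1+1/\epsilon\ge 2$ so the exponents behave, and possibly absorbs lower-order slack. A secondary subtlety is that Theorem~\ref{thr:upper_bound} is stated for the regret over horizon $T$ but \algnameshort internally runs for $\lfloor T/2\rfloor$ rounds pulling twice; since Theorem~\ref{thr:upper_bound} already accounts for this (its bound is in terms of $T$ with $\log\tfrac{T}{2}$), I can invoke it as a black box and no further care is needed there. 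I expect no conceptual obstacle — this is a routine corollary of Theorem~\ref{thr:upper_bound} — so the ``hard part'' is purely the constant-chasing to match the number $46$ in the statement.
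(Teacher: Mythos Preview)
Your proposal is correct and follows essentially the same route as the paper's proof: a gap-splitting argument with threshold $\Delta$, the trivial $T\Delta$ bound on the small-gap arms (via the regret decomposition $\sum_i \Delta_i \mathbb{E}[N_i^{\text{ALL}}]$), the pull-count bound from the proof of Theorem~\ref{thr:upper_bound} on the large-gap arms, the forced-exploration and $20\Delta_i$ terms carried separately, and optimization in $\Delta$ yielding $\Delta=(120Ku^{1/\epsilon}\log(T/2)/(\epsilon T))^{\epsilon/(1+\epsilon)}$ with final constant $120^{\epsilon/(1+\epsilon)}(1+\epsilon)\epsilon^{-\epsilon/(1+\epsilon)}\le 46$. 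The only cosmetic difference is that the paper works directly from the pull-count decomposition rather than ``starting from'' the statement of Theorem~\ref{thr:upper_bound}; since the instance-dependent bound can blow up for tiny $\Delta_i$, you correctly fall back to the per-pull argument for small-gap arms anyway, so the two presentations coincide.
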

This result matches the lower bound from \cite{bubeck2013bandits}, up to logarithmic terms. 
\section{Conclusions}
In this paper, we studied the $(\epsilon,u)$-\textit{adaptive} heavy-tailed bandit problem, where no information on moments of the reward distribution, not even which of them are finite, is provided to the learner. Focusing on two appealing research questions, we have: ($i$) shown that, with no further assumptions, no adaptive algorithm can achieve the same worst-case regret guarantees as in the non-adaptive case; ($ii$) devised a novel algorithm (\algnameshort), based on a fully data-driven estimator, enjoying nearly optimal instance-dependent and worst-case regret, under the truncated non-positivity assumption.

Future directions include: ($i$) investigating the role of the truncated non-positivity assumption, especially, whether weaker assumptions can be formulated; ($ii$) characterizing the \textit{limits of $\epsilon$-adaptivity}, \ie the best performance attainable by an $\epsilon$-adaptive algorithm \emph{without additional assumption}; ($iii$) understanding whether the forced exploration for the empirical threshold computation in \algnameshort (and the corresponding regret term) is unavoidable.


\bibliography{bibliography}

\appendix

\clearpage
\section{Additional Related Works}
\label{apx:add_related}
In this section, we provide additional related works concerning adaptivity in statistics via Lepskii method and adaptivity in the case of subgaussian bandits.

\subsection{Adaptivity via Lepskii Method}
In \cite{bhatt2022nearly}, authors provide a novel technique to extend Catoni's M-estimator \citep{catoni2012challenging} to the infinite variance setting. In principle, their procedure relies on the knowledge of both $\epsilon$ and the centered moment $v$, however, they propose a strategy based on the Lepskii method \citep{lepskii1992asymptotically} to adapt to unknown $v$. While the Lepskii method is a popular choice in the adaptive statistics literature, we point out how it requires an upper bound on the quantity to estimate. Indeed, this method can be safely applied when adapting to unknown $\epsilon$ (since it can be at most $1$), but when it comes to $u$ (or the centered moment $v$), requiring an upper bound makes the approach \textit{not} fully adaptive.

\subsection{Adaptivity in Subgaussian Bandits}
In the literature of subgaussian stochastic bandits, $\sigma$ (the subgaussian proxy) is usually assumed to be known by the agent. However, many works consider settings in which this quantity is unknown. In this section, we discuss standard approaches to adapt to $\sigma$ (or estimate it) in subgaussian bandits, and show the additional difficulties implied by the heavy-tailed setting.

The main difference between $\sigma$ and $u$ is that the former can be estimated from data while guaranteeing strong convergence properties. In \cite{audibert2009exploration}, authors introduce \ucbv, a variation of the well-known \ucbone algorithm capable of using a data-driven estimation of the variance while keeping optimal performance. As customary in most of the literature, rewards are assumed to be bounded in a known range. However, in heavy-tailed bandits, it is not possible to make such an assumption, and the estimation of $u$ cannot be carried on.  Other works try to relax the assumption of bounded rewards by the means of other assumptions, \emph{e.g.,} a known upper bound on kurtosis \citep{lattimore2017scale}, or Gaussian rewards \citep{cowan2018normal}. 

Without additional assumptions, dealing with both the unknown range of the rewards and unknown $\sigma$ comes at a cost. As shown in \cite{hadiji2023adaptation}, when the range of the rewards is unknown and no additional knowledge on the distributions is available, it is impossible to be simultaneously optimal in both the instance-dependent sense and the worst-case one. The existence of such a trade-off shows how difficult is, even in subgaussian bandits, to attain optimal performances when no knowledge is given on the environment. As a consequence, also in fully adaptive heavy-tailed bandits, such an impossibility result holds. However, as we have discussed, thanks to a specific assumption not involving $\epsilon$ nor $u$ we can provide optimal regret guarantees in both cases.

\clearpage
\section{Proofs and Derivations}
\label{apx:proofs}
In this section, we prove the main theoretical results outlined in the paper.

\subsection{Lower Bounds}
\label{apx:lower-bounds}

\uAdaptiveLowerBound*
\begin{proof}
We start by constructing two heavy-tailed bandit instances with a common maximum order of moment $\epsilon$, but where $u'\ge u$. We use $\delta_x$ to denote the Dirac delta distribution centered on $x$
\paragraph{Base instance}
\begin{equation}
    \bm{\nu} = \begin{cases}
        &\nu_1 = \delta_0,\\
        &\nu_2 =  \left( 1-\Delta^{1+\frac{1}{\epsilon}} u^{-\frac{1}{\epsilon}} \right) \delta_0 + \Delta^{1+\frac{1}{\epsilon}} u^{-\frac{1}{\epsilon}}\delta_{u^{\frac{1}{\epsilon}} \Delta^{-\frac{1}{\epsilon}}},
    \end{cases}
\end{equation}
where $\Delta \in (0, u^{\frac{1}{1+\epsilon}})$.
Thus, we have $\mu_1 = 0$ and $\mu_2 = \Delta$. Furthermore, $\mathbb{E}_{X\sim \nu_1}[|X|^{1+\epsilon}] = 0$ and $\mathbb{E}_{X\sim\nu_2}[|X|^{1+\epsilon}] = u$
Therefore, the optimal arm is arm $2$ and $\bm{\nu} \in\mathcal{P}(\epsilon,u)^2$.
\paragraph{Alternative instance}
\begin{equation}
   \bm{\nu}' = \begin{cases}
        &\nu_1' =  \left(1 - (2\Delta)^{1+\frac{1}{\epsilon}} (u')^{-\frac{1}{\epsilon}}\right) \delta_0 + (2\Delta)^{1+\frac{1}{\epsilon}} (u')^{-\frac{1}{\epsilon}}\delta_{(u')^{\frac{1}{\epsilon}} (2\Delta)^{-\frac{1}{\epsilon}}}, \\
        &\nu_2' = \nu_2,
    \end{cases}
\end{equation}
where $\Delta \in (0, \frac{1}{2}(u')^{\frac{1}{1+\epsilon}})$. Thus we have $\mu_1' = 2\Delta$ and $\mu_2'=\Delta$. Furthermore, $\mathbb{E}_{X\sim\nu_1'}[|X|^{1+\epsilon}] = u'$ and $\mathbb{E}_{X\sim\nu_2'}[|X|^{1+\epsilon}]=u$. Therefore, the optimal arm is arm $1$ and $\bm{\nu} \in\mathcal{P}(\epsilon,u')^2$. 

We seek to prove that for any algorithm \texttt{Alg}, it holds that:
\begin{align*}
     \max \left\{ \frac{R_T(\texttt{Alg},\bm{\nu})}{(uT)^{\frac{1}{1+\epsilon}}},  \frac{R_T(\texttt{Alg},\bm{\nu}')}{(u'T)^{\frac{1}{1+\epsilon}}}\right\} \ge f(T,\epsilon,u,u'),
\end{align*}
being $f$ a function increasing in $T$. The proof merges the approach of~\citep[][Theorem 5]{bubeck2013bounded} with that of~\citep[][Chapters 14.2, 14.3]{lattimore2020bandit}.

First, we observe that:
\begin{align}
\label{eq:proof_lb_u_1}
     \max \left\{ \frac{R_T(\texttt{Alg},\bm{\nu})}{(uT)^{\frac{1}{1+\epsilon}}},  \frac{R_T(\texttt{Alg},\bm{\nu}')}{(u'T)^{\frac{1}{1+\epsilon}}}\right\} \ge  \frac{R_T(\texttt{Alg},\bm{\nu})}{(uT)^{\frac{1}{1+\epsilon}}} = \frac{\Delta \mathbb{E}_{\texttt{Alg},\bm{\nu}}[N_1(T)]}{(uT)^{\frac{1}{1+\epsilon}}}, 
\end{align}
where $\mathbb{E}_{\texttt{Alg},\bm{\nu}}[N_1(T)]$ is the expected number of times arm $1$ is pulled over the horizon $T$. Second, recalling which are the optimal arms in the two instances and that $u'\ge u$, we have:
\begin{equation}
\begin{aligned} \label{eq:proof_lb_u_2}
    & \max \left\{ \frac{R_T(\texttt{Alg},\bm{\nu})}{(uT)^{\frac{1}{1+\epsilon}}},  \frac{R_T(\texttt{Alg},\bm{\nu}')}{(u'T)^{\frac{1}{1+\epsilon}}}\right\} \ge \\
    & \hspace{2cm} \ge (u'T)^{-\frac{1}{\epsilon+1}} \frac{\Delta T}{2}  \max \left\{\mathbb{P}_{\texttt{Alg},\bm{\nu}}\left( N_1(T) \ge T/2\right) , \mathbb{P}_{\texttt{Alg},\bm{\nu}'}\left( N_1(T) < T/2\right)  \right\} \\
    & \hspace{2cm} \ge \frac{\Delta}{4} (u')^{-\frac{1}{\epsilon+1}} T^{\frac{\epsilon}{\epsilon+1}} \left( \mathbb{P}_{\texttt{Alg},\bm{\nu}}\left( N_1(T) \ge T/2\right) +  \mathbb{P}_{\texttt{Alg},\bm{\nu}'}\left( N_1(T) < T/2\right) \right) \\
    & \hspace{2cm} \ge \frac{\Delta}{8} (u')^{-\frac{1}{\epsilon+1}}  T^{\frac{\epsilon}{\epsilon+1}}  \exp \left( - \mathbb{E}_{\texttt{Alg},\bm{\nu}}[N_1(T)] D_{\text{KL}}(\nu_1 \| \nu_1') \right).
\end{aligned}
\end{equation}
where we used Bretagnolle-Huber inequality and divergence decomposition, together with $\max\{a,b\} \ge \frac{1}{2}(a+b)$ for $a,b\ge 0$. Let us now compute the KL-divergence, noting that $\nu_1 
\ll \nu_1'$:
\begin{equation}
\begin{aligned} \label{eq:proof_lb_u_3}
    D_{\text{KL}}(\nu_1 \| \nu_1') & = \nu_1(0) \log \frac{\nu_1(0)}{\nu_1'(0)} \\
    & = \log \frac{1}{ 1-(2\Delta)^{1+\frac{1}{\epsilon}} (u')^{-\frac{1}{\epsilon}}} \le c (2\Delta)^{1+\frac{1}{\epsilon}} (u')^{-\frac{1}{\epsilon}},
\end{aligned}
\end{equation}
for $\Delta \in (0, \left(\frac{1}{2}\right)^{\frac{2\epsilon+1}{1+\epsilon}}(u')^{\frac{1}{1+\epsilon}})$ and some constant $c \in (1,2)$. Putting together Equations ~\eqref{eq:proof_lb_u_1}, ~\eqref{eq:proof_lb_u_2} and ~\eqref{eq:proof_lb_u_3}, we have:
\begin{align*}
     & \max \left\{ \frac{R_T(\texttt{Alg},\bm{\nu})}{(uT)^{\frac{1}{1+\epsilon}}},  \frac{R_T(\texttt{Alg},\bm{\nu}')}{(u'T)^{\frac{1}{1+\epsilon}}}\right\}  \ge \\
     & \hspace{1.5cm} \ge \max \left\{\frac{\Delta \mathbb{E}_{\texttt{Alg},\bm{\nu}}[N_1(T)]}{(uT)^{\frac{1}{1+\epsilon}}},  \frac{\Delta}{8} (u')^{-\frac{1}{\epsilon+1}}T^{\frac{\epsilon}{\epsilon+1}}  \exp \left( - c \mathbb{E}_{\texttt{Alg},\bm{\nu}}[N_1(T)] (2\Delta)^{1+\frac{1}{\epsilon}} (u')^{-\frac{1}{\epsilon}} \right) \right\} \\
     & \hspace{1.5cm} \ge  \frac{\Delta}{2} \left( \frac{ \mathbb{E}_{\texttt{Alg},\bm{\nu}}[N_1(T)]}{(uT)^{\frac{1}{1+\epsilon}}} +  \frac{1}{8} (u')^{-\frac{1}{\epsilon+1}} T^{\frac{\epsilon}{\epsilon+1}}  \exp \left( - c \mathbb{E}_{\texttt{Alg},\bm{\nu}}[N_1(T)] (2\Delta)^{\frac{1+\epsilon}{\epsilon}}(u')^{-\frac{1}{\epsilon}} \right) \right) \\
     & \hspace{1.5cm} \ge  \frac{\Delta}{2} \min_{x \in [0,T]} \left\{ \frac{x}{(uT)^{\frac{1}{1+\epsilon}}} +  \frac{1}{8} (u')^{-\frac{1}{\epsilon+1}} T^{\frac{\epsilon}{\epsilon+1}}  \exp \left( - cx (2\Delta)^{\frac{1+\epsilon}{\epsilon}}(u')^{-\frac{1}{\epsilon}} \right) \right\} \eqqcolon g(x)
\end{align*}
The latter is a convex function of $x$ and the minimization can be carried out in closed form, vanishing the derivative and finding: 
\begin{align*}
    x^* = c^{-1} (2\Delta)^{-\frac{1+\epsilon}{\epsilon}} (u')^{\frac{1}{\epsilon}} \log \left( \frac{T u^{\frac{1}{\epsilon+1}}}{8(u')^{\frac{1}{\epsilon}+ \frac{1}{\epsilon+1}}} c (2\Delta)^{\frac{1+\epsilon}{\epsilon}}  \right),
\end{align*}
which leads to:
\begin{align*}
    g(x^*) 
    &= \frac{\Delta}{2} (uT)^{-\frac{1}{\epsilon+1}} c^{-1} (2\Delta)^{-\frac{1+\epsilon}{\epsilon}} (u')^{\frac{1}{\epsilon}}  \log \left( \frac{T u^{\frac{1}{\epsilon+1}}}{8(u')^{\frac{1}{\epsilon}+   \frac{1}{\epsilon+1}}} e c (2\Delta)^{\frac{1+\epsilon}{\epsilon}}  \right).
\end{align*}
We choose $\Delta$ such that:
\begin{align*}
    \frac{T u^{\frac{1}{\epsilon+1}}}{8(u')^{\frac{1}{\epsilon}+   \frac{1}{\epsilon+1}}} c (2\Delta)^{\frac{1+\epsilon}{\epsilon}}  = e^\epsilon,
\end{align*}
resulting in $\Delta = 2^{\frac{2\epsilon-1}{1+\epsilon}}e^{\frac{\epsilon^2}{1+\epsilon} } (cT)^{-\frac{\epsilon}{\epsilon+1}} u^{-\frac{\epsilon}{(\epsilon+1)^2}} (u')^{\frac{1+2\epsilon}{(\epsilon+1)^2}}$.
This implies, after some calculations, that:
\begin{align*}
    g(x^*) & = c^{-\frac{\epsilon}{\epsilon+1}}  2^{-\frac{2\epsilon+5}{\epsilon+1}} (1+\epsilon)e^{-\frac{\epsilon}{\epsilon+1}} u^{-\frac{\epsilon}{(\epsilon+1)^2} } (u')^{\frac{\epsilon}{(\epsilon+1)^2}} \ge c_1 \left( \frac{u'}{u} \right)^{\frac{\epsilon}{(\epsilon+1)^2}},
\end{align*}
where $c_1>0$ is a value independent of $T$ and both $u$ and $u'$.
Finally, we have that
\begin{align*} 
    \max \left\{ \frac{R_T(\texttt{Alg},\bm{\nu})}{(uT)^{\frac{1}{1+\epsilon}}},  \frac{R_T(\texttt{Alg},\bm{\nu}')}{(u'T)^{\frac{1}{1+\epsilon}}}\right\} \ge  c_1 \left( \frac{u'}{u} \right)^{\frac{\epsilon}{(\epsilon+1)^2}}.
\end{align*}
We observe that $\Delta <\left(\frac{1}{2}\right)^{\frac{2\epsilon+1}{1+\epsilon}}(u')^{\frac{1}{1+\epsilon}}$  for sufficiently large $T$. This concludes the proof of the second statement. For the first statement, we observe that, since $u'\ge u$ can be taken arbitrarily large, the right-hand side of this inequality can be arbitrarily large. 
\end{proof}

\epsAdaptiveLowerBound*
\begin{proof}
We start by constructing two heavy-tailed bandit instances with different maximum orders of moment $\epsilon$ and $\epsilon'$, where $0 < \epsilon' < \epsilon < 1$. For the sake of simplicity, but without loss of generality, we will assume a common (and known to the algorithm) maximum moment of $u=1$.
\paragraph{Base instance}
\begin{equation}
   \bm{\nu}= \begin{cases}
        &\nu_1 = \delta_0,\\
        &\nu_2 = (1+\Delta\gamma - \gamma^{1+\epsilon}) \delta_0 + (\gamma^{1+\epsilon}-\Delta \gamma) \delta_{1/\gamma}
    \end{cases},
\end{equation}
where $\Delta \in [0, 1/2]$ and $\gamma = (2\Delta)^{\frac{1}{\epsilon}}$.
Thus, we have $\mu_1 = 0$ and $\mu_2 = \Delta$. Furthermore, $\mathbb{E}_{X\sim\nu_1}[|X|^\alpha] = 0$ and $\mathbb{E}_{X\sim\nu_2}[|X|^\alpha]=2^{\frac{1-\alpha}{\epsilon}} \Delta^{\frac{1+\epsilon-\alpha}{\epsilon}}$,
which are guaranteed to be bounded by a constant smaller than $1$ only if $\alpha \le \epsilon+1$. Thus, this instance admits moments finite only up to order $\epsilon+1$, \emph{i.e.,} $\bm{\nu} \in \mathcal{P}(\epsilon,1)^2$. Moreover, the optimal arm is arm $2$.
\paragraph{Alternative instance}
\begin{equation}
   \bm{\nu}'= \begin{cases}
        &\nu_1' =  (1- (\gamma')^{1+\epsilon'}) \delta_0 + (\gamma')^{1+\epsilon'} \delta_{1/\gamma'}, \\
        &\nu_2' = \nu_2
    \end{cases},
\end{equation}
where $\Delta \in [0,1/2]$ and $\gamma' = (2\Delta)^{\frac{1}{\epsilon'}}$.
Thus, we have $\mu_1' = 2\Delta$ and $\mu_2' = \Delta$. Furthermore, $\mathbb{E}_{X\sim\nu_1'}[|x|^\alpha] = (2\Delta)^{\frac{1+\epsilon'-\alpha}{\epsilon'}}$ and $\mathbb{E}_{X\sim\nu_2'}[|x|^\alpha]=2^{\frac{1-\alpha}{\epsilon}} \Delta^{\frac{1+\epsilon-\alpha}{\epsilon}}$, which are guaranteed to be bounded by a constant smaller than $1$ only if $\alpha \le \epsilon'+1$. Thus, this instance admits moments finite only up to order $\epsilon'+1$, \emph{i.e.,} $\bm{\nu} \in \mathcal{P}(\epsilon',1)^2$. Moreover, the optimal arm is arm $1$. 

We will prove, that for any algorithm \texttt{Alg} it holds that:
\begin{align*}
     \max \left\{ \frac{R_T(\texttt{Alg},\bm{\nu})}{T^{\frac{1}{1+\epsilon}}},  \frac{R_T(\texttt{Alg},\bm{\nu}')}{T^{\frac{1}{1+\epsilon'}}}\right\} \ge f(T,\epsilon,\epsilon'),
\end{align*}
being $f$ a function increasing in $T$. The proof emulates the analyses and steps performed to prove Theorem \ref{thr:u_adaptive_lb}.
First, we observe that:
\begin{align}\label{eq:proof_lb_eps_1}
      \max \left\{ \frac{R_T(\texttt{Alg},\bm{\nu})}{T^{\frac{1}{1+\epsilon}}},  \frac{R_T(\texttt{Alg},\bm{\nu}')}{T^{\frac{1}{1+\epsilon'}}}\right\} \ge  \frac{R_T(\texttt{Alg},\bm{\nu})}{T^{\frac{1}{1+\epsilon}}} = \frac{\Delta \mathbb{E}_{\texttt{Alg},\bm{\nu}}[N_1(T)]}{T^{\frac{1}{1+\epsilon}}}, 
\end{align}
where $\mathbb{E}_{\texttt{Alg},\bm{\nu}}[N_1(T)]$ is the expected number of times arm $1$ is pulled over the horizon $T$. 

Second, recalling which are the optimal arms in the two instances and that $\epsilon' < \epsilon$, we have:
\begin{equation}
\begin{aligned}\label{eq:proof_lb_eps_2}
    & \max \left\{ \frac{R_T(\texttt{Alg},\bm{\nu})}{T^{\frac{1}{1+\epsilon}}},  \frac{R_T(\texttt{Alg},\bm{\nu}')}{T^{\frac{1}{1+\epsilon'}}}\right\}  \ge \\
    & \hspace{1.5cm} \ge T^{-\frac{1}{\epsilon'+1}} \max \left\{ \frac{\Delta T}{2} \mathbb{P}_{\texttt{Alg},\bm{\nu}}\left( N_1(T) \ge \dfrac{T}{2}\right) , \frac{\Delta T}{2} \mathbb{P}_{\texttt{Alg},\bm{\nu}'}\left( N_1(T) < \dfrac{T}{2}\right)  \right\} \\
    & \hspace{1.5cm} \ge \frac{\Delta}{4} T^{\frac{\epsilon'}{\epsilon'+1}} \left( \mathbb{P}_{\texttt{Alg},\bm{\nu}}\left( N_1(T) \ge \dfrac{T}{2}\right) +  \mathbb{P}_{\texttt{Alg},\bm{\nu}'}\left( N_1(T) < \dfrac{T}{2}\right) \right) \\
    & \hspace{1.5cm} \ge \frac{\Delta}{8} T^{\frac{\epsilon'}{\epsilon'+1}}  \exp \left( - \mathbb{E}_{\texttt{Alg},\bm{\nu}}[N_1(T)] D_{\text{KL}}(\nu_1 \| \nu_1') \right).
\end{aligned}
\end{equation}
where we used Bretagnolle-Huber inequality and divergence decomposition, together with  $\max\{a,b\} \ge \frac{1}{2}(a+b)$ for $a,b\ge 0$. Let us now compute the KL-divergence, noting that $\nu_1 
\ll \nu_1'$:
\begin{equation}
\begin{aligned}\label{eq:proof_lb_eps_3}
    D_{\text{KL}}(\nu_1 \| \nu_1') & = \nu_1(0) \log \frac{\nu_1(0)}{\nu_1'(0)} \\
    & = \log \frac{1}{1-(2\Delta)^{\frac{1+\epsilon'}{\epsilon'}}} \le c (2\Delta)^{\frac{1+\epsilon'}{\epsilon'}},
\end{aligned}
\end{equation}
for $\Delta \in [0,1/4]$ and some constant $c\in(1,2)$.
Putting together Equations~\eqref{eq:proof_lb_eps_1}, \eqref{eq:proof_lb_eps_2} and~\eqref{eq:proof_lb_eps_3}, we have:
\begin{align*}
     \max &\left\{ \frac{R_T(\texttt{Alg},\bm{\nu})}{T^{\frac{1}{1+\epsilon}}},  \frac{R_T(\texttt{Alg},\bm{\nu}')}{T^{\frac{1}{1+\epsilon'}}}\right\} \\ 
     & \ge \max \left\{\frac{\Delta \mathbb{E}_{\texttt{Alg},\bm{\nu}}[N_1(T)]}{T^{\frac{1}{1+\epsilon}}},  \frac{\Delta}{8} T^{\frac{\epsilon'}{\epsilon'+1}}  \exp \left( - c \mathbb{E}_{\texttt{Alg},\bm{\nu}}[N_1(T)] (2\Delta)^{\frac{1+\epsilon'}{\epsilon'}} \right) \right\} \\
     & \ge  \frac{\Delta}{2} \left( \frac{ \mathbb{E}[N_1(T)]}{T^{\frac{1}{1+\epsilon}}} +  \frac{1}{8} T^{\frac{\epsilon'}{\epsilon'+1}}  \exp \left( - c \mathbb{E}[N_1(T)] (2\Delta)^{\frac{1+\epsilon'}{\epsilon'}} \right) \right) \\
     & \ge \frac{\Delta}{2} \min_{x \in [0,T]} \left\{  \frac{ x}{T^{\frac{1}{1+\epsilon}}} +  \frac{1}{8} T^{\frac{\epsilon'}{\epsilon'+1}}  \exp \left( - c x (2\Delta)^{\frac{1+\epsilon'}{\epsilon'}} \right) \right\} \eqqcolon g(x).
\end{align*}
The latter is a convex function of $x$ and the minimization can be carried out in closed form vanishing the derivative and obtaining: 
\[
   x^* = c^{-1} (2\Delta)^{-\frac{1+\epsilon'}{\epsilon'}} \log \left( \frac{T^{\frac{1}{\epsilon+1} + \frac{\epsilon'}{1+\epsilon'}}}{8} c (2\Delta)^{\frac{1+\epsilon'}{\epsilon'}}  \right),
\]
which leads to:
\begin{align*}
    g(x^*) 
    & = \frac{\Delta}{2} T^{-\frac{1}{\epsilon+1}} c^{-1} (2\Delta)^{-\frac{1+\epsilon'}{\epsilon'}} \log \left( \frac{T^{\frac{1}{\epsilon+1} + \frac{\epsilon'}{1+\epsilon'}}}{8} e c (2\Delta)^{\frac{1+\epsilon'}{\epsilon'}}  \right).
\end{align*}
We take $\Delta$ such that:
\begin{align*}
    & \frac{T^{\frac{1}{\epsilon+1} + \frac{\epsilon'}{1+\epsilon'}}}{8}  c (2\Delta)^{\frac{1+\epsilon'}{\epsilon'}} = 1 ,
\end{align*}
resulting in $ \Delta = 2^{\frac{2\epsilon'-1}{1+\epsilon'}}c^{-\frac{\epsilon'}{1+\epsilon'}} T^{-\frac{\epsilon'}{1+\epsilon'} \left(\frac{1}{\epsilon+1} + \frac{\epsilon'}{1+\epsilon'} \right)}$.
This imply, after some calculations, that:
\begin{align*}
    g(x^*) 
    & =2^{\frac{-2\epsilon'-5}{1+\epsilon'}}c^{-\frac{\epsilon'}{1+\epsilon'}} T^{\frac{\epsilon'(\epsilon-\epsilon')}{(1+\epsilon')^2(1+\epsilon)}} \ge c_2 T^{\frac{\epsilon'(\epsilon-\epsilon')}{(1+\epsilon')^2(1+\epsilon)}}.
\end{align*}
where $c_2>0$ is a value independent of $T$ and can be always selected to be $\epsilon$ and $\epsilon'$.
Finally, we have that:
\begin{align*}
     \max \left\{ \frac{R_T(\texttt{Alg},\bm{\nu})}{T^{\frac{1}{1+\epsilon}}},  \frac{R_T(\texttt{Alg},\bm{\nu}')}{T^{\frac{1}{1+\epsilon'}}}\right\} \ge  c_2 T^{\frac{\epsilon'(\epsilon-\epsilon')}{(1+\epsilon')^2(1+\epsilon)}}.
\end{align*}
We observe that $\Delta < 1/4$ for sufficiently large $T$. We conclude by observing that the exponent of $T$ is maximized by taking $\epsilon=1$ and $\epsilon'=1/3$.
\end{proof}

\AssLowerBound*
\begin{proof}
We will construct instances using the following prototype of reward distribution, defined for $y \in (0, u^{\frac{1}{1+\epsilon}})$ and $\Delta \in (0, u^{\frac{1}{1+\epsilon}})$ : 
\begin{align}
\label{eq:instance_construction}
    \rho_y = \left( 1-y^{1+\frac{1}{\epsilon}} u^{-\frac{1}{\epsilon}} \right)\delta_0 +  \left(y^{1+\frac{1}{\epsilon}} u^{-\frac{1}{\epsilon}}\right)\delta_{-u^{\frac{1}{\epsilon}}\Delta^{-\frac{1}{\epsilon}}}. 
\end{align}
The two instances are constructed by the means of Equation \eqref{eq:instance_construction}. Note that we have:
\begin{align}
    &\E_{X \sim \rho_y}[X] = -y^{1+\frac{1}{\epsilon}}\Delta^{-\frac{1}{\epsilon}},\\
    & \E_{X \sim \rho_{y}}[|X|^{1+\epsilon}] = y^{1+\frac{1}{\epsilon}}\Delta^{-1-\frac{1}{\epsilon}}u \le u,
\end{align}
for every $0 \le y \le \Delta$.
\paragraph{Base instance}
\begin{align*}
\bm{\nu} = \begin{cases}
    & \nu_1 = \rho_{\left(\frac{2}{3}\right)^{\frac{\epsilon}{1+\epsilon}}\Delta}, \\
    & \nu_j = \rho_{\Delta}, \qquad\qquad j \in [K] \setminus \{1\}.
\end{cases}
\end{align*}
\paragraph{Alternative instance}
\begin{align*}
\bm{\nu}' = \begin{cases}
    & \nu_1' =  \rho_{\left(\frac{2}{3}\right)^{\frac{\epsilon}{1+\epsilon}}\Delta}, \\
    & \nu_i' = \rho_{\left(\frac{1}{3}\right)^{\frac{\epsilon}{1+\epsilon}}\Delta}, \\
    & \nu_j' = \rho_{\Delta}, \qquad\qquad j \in [K] \setminus \{1,i\},
\end{cases}
\end{align*}
where $i \in \text{argmin}_{j \neq 1} \mathbb{E}_{\texttt{Alg},\boldsymbol{\nu'}}[N_j(T)]$. For the base instance, we have $\mu_1 = -2\Delta/3$ and $\mu_j = -\Delta$ for all $j \neq 1$; whereas for the alternative instance $\mu_j'=\mu_j$ for all $j \neq i$ and $\mu_i'=-\Delta/3$. Both instances satisfy Assumption \ref{ass:truncated_np}, being the support a subset made of non-positive numbers. Moreover, for the base instance, the optimal arm is $1$ and for the alternative instance, the optimal arm is $i$.
Using the Bretagnolle-Huber inequality, we obtain:
\begin{align*}
    R_T (\texttt{Alg},\bm{\nu}) + R_T(\texttt{Alg},\bm{\nu}') 
    & \ge \frac{\Delta T}{6}\left(\mathbb{P}_{\texttt{Alg},\bm{\nu}}\left(N_1 \le \frac{T}{2}\right) + \mathbb{P}_{\texttt{Alg},\bm{\nu}'}\left(N_1 > \frac{T}{2}\right)\right) \\
    &\ge \frac{\Delta T}{6}\text{exp}\left(-\E_{\texttt{Alg},\bm{\nu}}[N_i(T)]D_{\text{KL}}({\nu}_i || {\nu}'_i)\right)
\end{align*}
We recall that by the definition of $i$, we have that $\E_{\texttt{Alg},\bm{\nu}}[N_i(T)] \le \frac{T}{K-1}$. We now compute the Kullback-Leibler divergence between the two instances:
\begin{align*}
    D_{\text{KL}}(\nu_i || \nu_i') 
    &= \Delta^{1+\frac{1}{\epsilon}}u^{-\frac{1}{\epsilon}}\log\left(\frac{\Delta^{1+\frac{1}{\epsilon}}u^{-\frac{1}{\epsilon}}}{\frac{1}{3}\Delta^{1+\frac{1}{\epsilon}}u^{-\frac{1}{\epsilon}}}\right) +  \underbrace{ (1-\Delta^{1+\frac{1}{\epsilon}}u^{-\frac{1}{\epsilon}})\log\left(\frac{1-\Delta^{1+\frac{1}{\epsilon}}u^{-\frac{1}{\epsilon}}}{1-\frac{1}{3}\Delta^{1+\frac{1}{\epsilon}}u^{-\frac{1}{\epsilon}}}\right)}_{\le 0} \\
    &\le \Delta^{1+\frac{1}{\epsilon}}u^{-\frac{1}{\epsilon}}\log 3.
\end{align*}
Plugging this result, we finally get:
\begin{align*}
     R_T (\texttt{Alg},\bm{\nu}) + R_T(\texttt{Alg},\bm{\nu}') \ge \frac{\Delta T}{6}\exp\left(-\frac{T}{K-1}\Delta^{1+\frac{1}{\epsilon}}u^{-\frac{1}{\epsilon}}\log 3\right).
\end{align*}
We conclude the proof by noting that $\max\{x,y\} > \frac{1}{2}(x+y)$ and setting $\Delta = \frac{1}{2}\left(\frac{K-1}{T}u^\frac{1}{\epsilon}\frac{1}{\log 3}\right)^{\frac{\epsilon}{1+\epsilon}}$. Finally, we have:
\begin{align*}
    \max\{R_T (\texttt{Alg},\bm{\nu}), R_T(\texttt{Alg},\bm{\nu}')\} \ge c_3K^\frac{\epsilon}{1+\epsilon}(uT)^{\frac{1}{1+\epsilon}},
\end{align*}
for some constant $c_3>0$ independent of $T$, $u$, $\epsilon$ and $K$. 
\end{proof}
\subsection{Estimator}
\label{apx:estimator}
\thrConcentrationMGeneric*
\begin{proof}
Since $M$ is computed independently of $\mathbf{X}$, the trimmed samples $X_i \mathbbm{1}_{\{|X_i| \le M\}}$ remain independent. Thus, with probability at least $1-\delta$, we have:
\begin{align*}
    \mu - \widehat{\mu}_s(\mathbf{X};M) & = \mathbb{E}[X_1] - \frac{1}{s} \sum_{t=1}^{s} X_t \mathbbm{1}_{|X_t| \leq M} \\
    & = \frac{1}{s} \sum_{t=1}^{s}\left(\mathbb{E} [X_1]-\mathbb{E}\left[X_t \mathbbm{1}_{|X_t| \leq M} \right] \right)+\frac{1}{s} \sum_{t=1}^{s}\left(\mathbb{E}\left[X_t \mathbbm{1}_{|X_t| \leq M} \right]- X_t \mathbbm{1}_{|X_t| \leq M} \right) \\
    & = \frac{1}{s} \sum_{t=1}^{s} \mathbb{E} [X_t \mathbbm{1}_{\left|X_{t}\right| > M}] + \frac{1}{s} \sum_{t=1}^{s}\left(\mathbb{E}\left[X_t \mathbbm{1}_{|X_t| \leq M} \right]- X_t \mathbbm{1}_{|X_t| \leq M} \right)  \\
    & \stackrel{(*)}{\leq} \frac{1}{s} \sum_{t=1}^{s}\left(\mathbb{E}\left[X_t \mathbbm{1}_{|X_t| \leq M} \right]- X_t \mathbbm{1}_{|X_t| \leq M} \right) \\
    & \stackrel{(**)}{\leq} \sqrt{\frac{2 V_s(\mathbf{Y}) \log 2\delta^{-1}}{s}}+\frac{ 14 M \log 2\delta^{-1}}{3(s-1)} \\
    & \le \sqrt{\frac{2 V_s(\mathbf{Y}) \log 2\delta^{-1}}{s}}+\frac{10 M \log 2\delta^{-1}}{s}
\end{align*}
Note that in step (\raisebox{-0.5ex}{*}) we used Assumption \ref{ass:truncated_np} to make the first term vanish. In step (\raisebox{-0.5ex}{**}), instead, we used \emph{empirical Bernstein inequality} \citep{maurer2009empirical} recalling that the trimmed random variables range in $[-M,M]$. We also use $\frac{1}{s-1}\le \frac{2}{s}$ in the last step for $s \ge 2$.
\end{proof}

\begin{restatable}[Uniqueness of Solution of Equation \eqref{eq:M_hat},  \citet{wang2021new}]{prop}{Proposition}\label{thr:uniqueness}
Let $\mathbf{X} = \{X_1, \ldots, X_s\}$ be a set of real numbers. If:
    \begin{equation} \label{eq:uniqueness_M}
        0 < c\log\delta^{-1} <\sum_{j \in [s]} \mathbbm{1}_{\{X_i\neq 0\}},
    \end{equation}
then Equation \eqref{eq:M_hat} admits a unique positive solution.
\end{restatable}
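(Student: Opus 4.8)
The plan is to turn the root-finding problem into an elementary monotonicity-plus-intermediate-value argument. First I would clear denominators in Equation~\eqref{eq:M_hat}: multiplying by $s$, a positive $M$ solves $f_s(\mathbf{X};M,\delta)=0$ if and only if $g(M) = c\log\delta^{-1}$, where I set $g(M) \coloneqq \sum_{j\in[s]} \min\{X_j^2/M^2,\,1\}$ for $M\in(0,+\infty)$. The crucial structural fact is the behaviour of each summand $\phi_j(M)\coloneqq\min\{X_j^2/M^2,1\}$: if $X_j=0$ then $\phi_j\equiv 0$; otherwise $\phi_j$ is continuous, equal to $1$ on $(0,|X_j|]$ and strictly decreasing (as $X_j^2/M^2$) on $[|X_j|,+\infty)$, with $\phi_j(M)\to 0$ as $M\to+\infty$ --- note in particular that continuity at the kink $M=|X_j|$ holds since the left value $1$ matches the right limit $X_j^2/|X_j|^2 = 1$. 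Consequently $g$ is continuous and non-increasing on $(0,+\infty)$.

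Next I would localise where $g$ can attain the target level. Let $n_0\coloneqq\sum_{j\in[s]}\mathbbm{1}_{\{X_j\neq 0\}}$, which is $\ge 1$ under the hypothesis $c\log\delta^{-1}<n_0$, and let $m\coloneqq\min_{j:\,X_j\neq 0}|X_j|$. For every $M\in(0,m]$ we have $\min\{X_j^2/M^2,1\}=1$ for each nonzero index and hence $g(M)=n_0$; on $(m,+\infty)$ the summand(s) realising $|X_j|=m$ take the form $X_j^2/M^2$ and are strictly decreasing, while all other summands are non-increasing, so $g$ is \emph{strictly} decreasing there, with $g(m)=n_0$ and $\lim_{M\to+\infty}g(M)=0$.

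Finally I would conclude with the Intermediate Value Theorem: since by hypothesis $0<c\log\delta^{-1}<n_0$, the level $c\log\delta^{-1}$ lies strictly between $\lim_{M\to+\infty}g(M)=0$ and $g(m)=n_0$, so the continuous strictly decreasing restriction $g|_{(m,+\infty)}$ attains it at exactly one point $M^\star\in(m,+\infty)$; and since $g\equiv n_0>c\log\delta^{-1}$ on $(0,m]$, there is no other positive root. Hence $M^\star$ is the unique positive solution of Equation~\eqref{eq:M_hat}. All the ingredients are elementary, so I do not foresee a real obstacle; the only points requiring mild care are verifying continuity at the kinks $M=|X_j|$ and establishing \emph{strict} monotonicity on $(m,+\infty)$ --- the latter being exactly what upgrades existence to uniqueness.
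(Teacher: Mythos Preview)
Your proof is correct. The paper itself does not provide a proof of this proposition---it is stated with attribution to \citet{wang2021new} and left unproved---so there is nothing to compare against; your elementary monotonicity-plus-IVT argument is exactly the natural way to establish it, and the care you take with continuity at the kinks and strict monotonicity past $m=\min_{j:X_j\neq 0}|X_j|$ is precisely what is needed.
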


\label{proof:M_properties}




\boundsonM*
\begin{proof}
The proof makes use of the concentration inequality for self-bounding random variables \citep{maurer2006concentration, maurer2009empirical}. Let $M > 0$, for every $i \in \dsb{s}$, we define the random variable:
\begin{align*}
    U_{i,M} \coloneqq \min\left\{\left(\frac{X_i}{M}\right)^2, 1\right\},
\end{align*}
that ranges in $[0,1]$. Furthermore, let: $Z_M(\mathbf{X}) \coloneqq \sum_{i=1}^s U_{i,M}$, ranging in $[0,s]$. Let us denote $\overline{U}_{M}(\mathbf{X}) \coloneqq Z_M(\mathbf{X}) /s$, we observe that, given these definitions, the equation we want to solve for non-zero roots becomes:
\begin{align}
    \overline{U}_{M}(\mathbf{X})  - \frac{c \log \delta^{-1}}{s} = 0.
\end{align}
We start by showing that $Z_M(\mathbf{X})$ satisfies the assumptions of Theorem 13 of \cite{maurer2006concentration}, in particular, let $a\ge 1$, we have:
\begin{align}
        Z_M(\mathbf{X}) - \inf_{y\in\mathbb{R}} Z_M(\mathbf{X}_{y,k}) \le 1, \quad \forall k \in[s], \label{eq:ass1}\\
        \sum_{k=1}^{s} \left( Z_M(\mathbf{X}) - \inf_{y\in\mathbb{R}} Z_M(\mathbf{X}_{y,k}) \right)^2 \le aZ_M(\mathbf{X}), \label{eq:ass2}
\end{align}
where $\mathbf{X}_{y,k}$ is obtained by replacing with $y$ the $k$-th element $X_k$ of the set $\mathbf{X}$. Indeed, Equation \eqref{eq:ass1} follows as:
\begin{align*}
    Z_M(\mathbf{X}) - \inf_{y\in\mathbb{R}} Z_M(\mathbf{X}_{y,k}) &= U_{k,M} -  \inf_{y\in\mathbb{R}} \min\left\{\left(\frac{y}{M}\right)^2, 1\right\} = U_{k,M} \le 1, \quad \forall k \in [s].
\end{align*}
Similarly, we set $a=1$ and obtain Equation \eqref{eq:ass2} as follows:
\begin{align*}
   \sum_{k=1}^{s} \left( Z_M(\mathbf{X}) - \inf_{y\in\mathbb{R}} Z_M(\mathbf{X}_{y,k}) \right)^2 &= \sum_{k=1}^{s} \left(U_{k,M} -  \inf_{y\in\mathbb{R}} \min\left\{\left(\frac{y}{M}\right)^2, 1\right\}\right)^2\\
   &\le \sum_{k=1}^{s} U_{k,M}^2 \\
   &\le \sum_{k=1}^{s} U_{k,M} \\
   &= Z_M(\mathbf{X}),
\end{align*}
since $U_{k,M} \le 1$.
Using Theorem 13 from \cite{maurer2006concentration} with $a=1$, for the right tail of the distribution, we have for every $\epsilon>0$:
\begin{align*}
    \mathbb{P}\left(\mathbb{E}[Z_M(\mathbf{X})] - Z_M(\mathbf{X}) > s\epsilon \right) \le \exp\left(\dfrac{-\epsilon^2s^2}{2\mathbb{E}[Z_M(\mathbf{X})]}\right)
\end{align*}
By letting $\epsilon =
\sqrt{\dfrac{2\mathbb{E}[\overline{U}_{M}(\mathbf{X})]\log 2\delta^{-1}}{s}}$ and recalling the definition of $\overline{U}_{M}(\mathbf{X})$, we obtain:
\begin{align*}
    \mathbb{P}\left(\mathbb{E}[\overline{U}_{M}(\mathbf{X})] - \overline{U}_{M}(\mathbf{X}) > \sqrt{\dfrac{2\mathbb{E}[\overline{U}_{M}(\mathbf{X})]\log \delta^{-1}}{s}} \right) \le \delta,
\end{align*}
which implies, after some algebraic manipulations (see Theorem 10 of \citep{maurer2009empirical}), the following:
\begin{align*}
    \mathbb{P}\left(\sqrt{\mathbb{E}[\overline{U}_{M}(\mathbf{X})]} - \sqrt{\overline{U}_{M}(\mathbf{X})} > \sqrt{\frac{2 \log \delta^{-1}}{s}}\right) \le {\delta}.
\end{align*}
A similar inequality holds for the left tail:
\begin{align*}
    \mathbb{P}\left(Z_M(\mathbf{X})-\mathbb{E}[Z_M(\mathbf{X})]  > s\epsilon \right) \le \exp\left(\dfrac{-\epsilon^2s^2}{2\mathbb{E}[Z_M(\mathbf{X})]+\epsilon s}\right),
\end{align*}
with similar steps, we obtain:
\begin{align*}
    \mathbb{P}\left(\sqrt{\overline{U}_{M}(\mathbf{X})} - \sqrt{\mathbb{E}[\overline{U}_{M}(\mathbf{X})]} > \sqrt{\frac{2\log \delta^{-1}}{s}} \right) \le {\delta}.
\end{align*}
With a union bound over the two inequalities on the left and the right tail, we finally get:
\begin{align}
\label{eq:conc_ineq_M}
    \mathbb{P}\left(\left|\sqrt{\overline{U}_{M}(\mathbf{X})} - \sqrt{\mathbb{E}[\overline{U}_{M}(\mathbf{X})]}\right| > \sqrt{\frac{2\log \delta^{-1}}{s}} \right) \le {2\delta}.
\end{align}
Let us now define $\widehat{M}_s(\delta)$ random variable corresponding to the solution of the equation:
\begin{align*}
    \overline{U}_{\widehat{M}_s(\delta)}(\mathbf{X}) = \frac{c\log \delta^{-1}}{s},
\end{align*}
where $c>0$. To control the bounds on $\widehat{M}$, we define the following auxiliary (non-random) quantities:
\begin{align}
     \sqrt{\overline{U}_{M}^+} \coloneqq \sqrt{\mathbb{E}[\overline{U}_{M}(\mathbf{X})]} + \sqrt{\frac{2\log \delta^{-1}}{s}} \quad \text{and} \quad  \sqrt{\overline{U}_{M}^-} \coloneqq \sqrt{\mathbb{E}[\overline{U}_{M}(\mathbf{X})]} - \sqrt{\frac{2\log \delta^{-1}}{s}}. 
\end{align}
Thanks to Equation~\ref{eq:conc_ineq_M}, we have, for every $M \ge 0$,  that $\mathbb{P}(\overline{U}_{M}^- \le \overline{U}_{M}(\mathbf{X}) \le \overline{U}_{M}^+) \ge 1-2\delta$. Furthermore, let $M^+(\delta),M^-(\delta)>0$, the solutions of the following (non-random) equations:
\begin{align}
    U_{M^+(\delta)}^+ = \frac{c\log \delta^{-1}}{s} \qquad \text{and} \qquad  U_{M^-(\delta)}^- = \frac{c\log \delta^{-1}}{s}.
\end{align}
Since $\mathbb{P}(\overline{U}_{M}^- \le \overline{U}_{M}(\mathbf{X}) \le \overline{U}_{M}^+) \ge 1-2\delta$, it follows that $\mathbb{P}(M^-(\delta) \le \widehat{M}_s(\delta) \le M^+(\delta))\ge 1-2\delta$. We now proceed at lower bounding $M^-(\delta)$ and upper bounding $M^+(\delta)$:
\begin{align}
     \sqrt{\frac{c \log \delta^{-1}}{s}} &= \sqrt{U_{M^-(\delta)}^-}   \\
     & = \sqrt{\mathbb{E}[\overline{U}_{M^-(\delta)}(\mathbf{X})]} - \sqrt{\frac{2\log \delta^{-1}}{s}}\\
     & \ge \sqrt{\mathbb{P}\left( |X_1| \ge M^-(\delta) \right)} - \sqrt{\frac{2\log \delta^{-1}}{s}} \\
     & \ge \sqrt{\mathbb{P}\left( |X_1| \ge \widehat{M}_s(\delta) \right)} - \sqrt{\frac{2\log \delta^{-1}}{s}},
\end{align}
where the last but one inequality follows from:
\begin{align}
    \mathbb{E}[\overline{U}_{M}(\mathbf{X})] = \mathbb{E}\left[ \min \left\{\left(\frac{X_1}{M}\right)^2,1 \right\}\right] \ge \mathbb{P}\left(\left(\frac{X_1}{M}\right)^2 \ge 1 \right) = \mathbb{P}\left(|X_1| \ge M \right),
\end{align}
and the last inequality holds with probability $1-\delta$ and follows from the fact that $\widehat{M}_s(\delta) \ge  M^-(\delta)$. Similarly, we have:
\begin{align}
    \sqrt{\frac{c \log \delta^{-1}}{s}} &= \sqrt{U_{M^+(\delta)}^+}   \\
     & = \sqrt{\mathbb{E}[\overline{U}_{M^-(\delta)}(\mathbf{X})]} + \sqrt{\frac{2\log \delta^{-1}}{s}}\\
     & \le \sqrt{\frac{u}{(M^+(\delta))^{1+\epsilon}}} + \sqrt{\frac{2\log \delta^{-1}}{s}} \\
     & \le \sqrt{\frac{u}{(\widehat{M}_s(\delta))^{1+\epsilon}}} + \sqrt{\frac{2\log \delta^{-1}}{s}},
\end{align}
where the last but one inequality follows from:
\begin{align}
    \mathbb{E}[\overline{U}_{M}(\mathbf{X})] = \mathbb{E}\left[\min \left\{\left(\frac{X_1}{M}\right)^2,1 \right\}\right] \le M^{-1-\epsilon}\mathbb{E}\left[|X_1|^{1+\epsilon}\right]\le M^{-1-\epsilon} u,
\end{align}
and the last inequality holds with probability $1-\delta$ and follows from the fact that $\widehat{M}_s(\delta) \le  M^+(\delta)$. Thus, with probability $1-2\delta$, we have for $c > 2$:
\begin{align}
    \mathbb{P}\left( |X_1| > \widehat{M}_s(\delta) \right)\le (\sqrt{c}+\sqrt{2})^2 {\frac{\log \delta^{-1}}{s}} \quad \text{and} \quad \widehat{M}_s(\delta) \le \left( \frac{us}{(\sqrt{c}-\sqrt{2})^2\log \delta^{-1}} \right)^{\frac{1}{1+\epsilon}}.
\end{align}
\end{proof}

\TheoremConc*
\begin{proof}
The result is obtained by combining an application of Bernstein's inequality and the bounds on the threshold $\widehat{M}_s(\delta)$ of Lemma~\ref{thr:M_hat_bound}. Furthermore since $\widehat{M}_s(\delta) $ is independent of $\mathbf{X}$, we can condition on the value of $\widehat{M}_s(\delta)$. With probability $1-\delta$, we have:
\begin{align*}
    \widehat{\mu}_s&(\mathbf{X}; \widehat{M}_s(\delta)) - \mu  = \frac{1}{s} \sum_{i=1}^{s} X_i \mathbbm{1}_{|X_i| \leq \widehat{M}_s(\delta)} - \mathbb{E}[X_1] \\
    & = \frac{1}{s} \sum_{i=1}^{s}\left(X_i \mathbbm{1}_{|X_i| \leq \widehat{M}_s(\delta)}- \mathbb{E}\left[X_i \mathbbm{1}_{|X_i| \leq \widehat{M}_s(\delta)} \right] \right) - \frac{1}{s} \sum_{i=1}^{s}\left(\mathbb{E} [X_1]-\mathbb{E}\left[X_t \mathbbm{1}_{|X_i| \leq \widehat{M}_s(\delta)} \right] \right) \\
    & = \frac{1}{s} \sum_{i=1}^{s}\left(X_i \mathbbm{1}_{|X_i| \leq \widehat{M}_s(\delta)}- \mathbb{E}\left[X_i \mathbbm{1}_{|X_i| \leq \widehat{M}_s(\delta)} \right] \right) -\frac{1}{s} \sum_{i=1}^{s} \mathbb{E} [X_i\mathbbm{1}_{\left|X_{i}\right| > \widehat{M}_s(\delta)}] \\
    & \le \frac{1}{s} \sum_{i=1}^{s}\left(X_i \mathbbm{1}_{|X_i| \leq \widehat{M}_s(\delta)}- \mathbb{E}\left[X_i \mathbbm{1}_{|X_i| \leq \widehat{M}_s(\delta)} \right] \right) + \frac{1}{s} \sum_{i=1}^{s} \mathbb{E} [|X_i|\mathbbm{1}_{\left|X_{i}\right| > \widehat{M}_s(\delta)}] \\
    & \stackrel{(*)}\le \frac{1}{s} \sum_{i=1}^{s}\left(X_i \mathbbm{1}_{|X_i| \leq \widehat{M}_s(\delta)}- \mathbb{E}\left[X_i \mathbbm{1}_{|X_i| \leq \widehat{M}_s(\delta)} \right] \right) + \\
    & \quad \quad +\frac{1}{s}\sum_{i=1}^{s}\left(\mathbb{E}\left[|X_i|^{1+\epsilon}\right]^{\frac{1}{1+\epsilon}}\right)\left(\mathbb{E}\left[(\mathbbm{1}_{|X_i|>\widehat{M}_s(\delta)})^\frac{1+\epsilon}{\epsilon}\right]^\frac{\epsilon}{1+\epsilon}\right) \\
    & \stackrel{(**)}{\le} \sqrt{\frac{2 \widehat{M}_s(\delta)^{1-\epsilon} u \log \left(\delta^{-1}\right)}{s}}+\frac{\widehat{M}_s(\delta) \log \left(\delta^{-1}\right)}{3 s}  + \frac{1}{s} \sum_{i=1}^{s} \left(u^{\frac{1}{1+\varepsilon}}\right) \left( \mathbb{E}\left[ \mathbbm{1}_{\left|X_{i}\right| >\widehat{M}_s(\delta)}\right]^{\frac{\epsilon}{1+\varepsilon}}\right) \\
    & \leq \sqrt{\frac{2 \widehat{M}_s(\delta)^{1-\epsilon} u \log \left(\delta^{-1}\right)}{s}}+\frac{\widehat{M}_s(\delta) \log \left(\delta^{-1}\right)}{3 s} + u^{\frac{1}{1+\varepsilon}}  \mathbb{P}\left(|X_{i}| > \widehat{M}_s(\delta)\right)^{\frac{\epsilon}{1+\varepsilon}},
\end{align*}
where step (\raisebox{-0.5ex}{*}) follows from H\"older inequality, while step  (\raisebox{-0.5ex}{**}) is a consequence of Bernstein's inequality for bounded random variables.
To proceed further, we use Lemma \ref{thr:M_hat_bound} in union bound with the previously applied inequality. Thus, with probability at least $1-3\delta$, we have:
\begin{align*}
    \widehat{\mu}_s& (\mathbf{X};  \widehat{M}_s(\delta)) - \mu \le \\
    &  {\leq} \sqrt{{\frac{2 \left(\frac{us}{(\sqrt{c}-\sqrt{2})^2\log \delta^{-1}}\right)^{\frac{1-\epsilon}{1+\epsilon}} u \log \left(\delta^{-1}\right)}{s}}} + \frac{\left(\frac{us}{(\sqrt{c}-\sqrt{2})^2\log \delta^{-1}}\right)^{\frac{1}{1+\epsilon}} \log \left(\delta^{-1}\right)}{3 s}  \\
    & \hspace{5.35cm} + u^{\frac{1}{1+\varepsilon}} \left( (\sqrt{c}+\sqrt{2})^2 \frac{\log \delta^{-1}}{s} \right)^{\frac{\epsilon}{1+\epsilon}} \\
    & \leq \left( \frac{\sqrt{2}}{(\sqrt{c}-\sqrt{2})^{\frac{1-\epsilon}{1+\epsilon}}}+\frac{1}{3(\sqrt{c}-\sqrt{2})^{\frac{2}{1+\epsilon}}} + (\sqrt {c}+\sqrt{2})^\frac{2\epsilon}{1+\epsilon}\right)u^{\frac{1}{1+\epsilon}} \left(\frac{\log \delta^{-1}}{n} \right)^{\frac{\epsilon}{1+\epsilon}}\\
    & \leq 5.6 u^{\frac{1}{1+\epsilon}} \left(\frac{\log \delta^{-1}}{s} \right)^{\frac{\epsilon}{1+\epsilon}},
\end{align*}
where in the last passage we set $c=(1+\sqrt{2})^2$ and bounded the resulting expression for $\epsilon \in (0,1]$. A symmetric derivation leads to the second inequality. A union bound combined with renaming $s \leftarrow s/2$ and using $5.6\sqrt{2} \le 8$, concludes the proof.
\end{proof}
\TheoremID*
\begin{proof}
For notational convenience, in this derivation, we will perform the substitution $T \leftarrow \lfloor T/2 \rfloor$ and $t \leftarrow \tau$. For every arm $i\in[K]$ and round $t \in [T]$, let us define the event:
\begin{align}
    \mathcal{E}_{i,t} \coloneqq \left\{ \sum_{X \in \mathbf{X}_{i}'(t-1)} \mathbbm{1}_{\{X\neq 0\}} - 4\log t^3 > 0 \right\}.
\end{align}
Under event  $\mathcal{E}_{i,t}$ we do not incur in the forced exploration (FE) in line~\ref{line:cond} ensuring that every arm has collected at least $4\log t^3$ nonzero samples in $\mathbf{X}_i'$. Thus, we can decompose the expected number of pulls as follows:
\begin{align}
    \E[N_i^{\text{ALL}}(T)] & = \E\left[ \sum_{t\in[T]} \mathbbm{1}_{\{I_t=i \text{ and } \mathcal{E}_{i,t} \}} \right] + \E\left[ \sum_{t\in[T]} \mathbbm{1}_{\{I_t=i \text{ and } \mathcal{E}_{i,t}^\complement\}} \right] \\
    & = \E[N_i(T)] + \mathbb{E}[N_i^{\text{FE}}(T)].
\end{align}

\paragraph{Part I: Bounding the expected number of pulls for forced exploration.}
We first bound the expected number of pulls $\E[N_i^{\text{FE}}(T)]$ due to the forced exploration. Considering only the samples collected due to forced exploration, thanks to independence among these samples, we can see the required number of pulls as a sum of geometric random variables. Thus, we can compute an upper bound on the expectation as:
\begin{equation}
\label{eq:rr-pulls}
    \mathbb{E}_{\nu_i}[N_i^{\text{FE}}(T)] \le  \frac{4 \log T^3}{\mathbb{P}_{\nu_i}(|X|>0)}.
\end{equation}

\paragraph{Part II: Bounding the expected number of pulls for optimistic exploration.}
We define for every arm $i\in[K]$ and every round $t\in[T]$, the upper confidence bound as:
\begin{equation*}
    B_{i}(t) = \widehat{\mu}_{i}(t) + \sqrt{\frac{2V_{i}(t) \log t^3}{N_{i}(t-1)}}+  \frac{10 \widehat{M}_{i}(t) \log t^3}{N_{i}(t-1)},
\end{equation*}
where $N_i(t-1)$ is the number of times arm $i$ has been pulled up to time $t-1$, i.e., $N_i(t-1) = |\mathbf{X}_i(t-1)|$. 
We now show that if $I_{t}=i$, for an arm $i$ such that $\Delta_{i}>0$, then, one of the following four inequalities is true: 
\begin{align}
    & \text{either} \ B_{1}(t) \leq \mu_1, \label{eq:robust1} \\
    & \quad \text{or} \quad \widehat{\mu}_{i}(t)>\mu_{i}+ 5.6 u^{\frac{1}{1+\epsilon}} \left(\frac{\log t^3}{N_{i}(t-1)} \right)^{\frac{\epsilon}{1+\epsilon}}, \label{eq:robust2} \\
    & \quad \text{or} \quad N_{i}(t-1) < 20 \left(\dfrac{ u}{\Delta_{i}^{1+\epsilon}}\right)^\frac{1}{\epsilon}\log{t^3} \label{eq:robust3}, \\
    & \quad \text{or} \ \sqrt{V_{i}(t)} > \sqrt{\mathbb{E} [V_{i}(t)]}+2\widehat{M}_{i}(t)\sqrt{\frac{ \log t^3}{N_i(t-1)}} \label{eq:robust4}, \\
    & \quad \text{or} \ \widehat{M}_{i}(t) \geq \left(\dfrac{u N_i(t-1)}{\log t^3}\right)^{\frac{1}{1+\epsilon}} .\label{eq:robust5}
\end{align}
Indeed, assume that all five inequalities are false. Then we have
\begin{align*}
    B_{1}(t) & \stackrel{(\ref{eq:robust1})}{>} \mu_1 = \mu_{i}+\Delta_{i} \\
    & \stackrel{(\ref{eq:robust2})}{\geq} \widehat{\mu}_{i}(t) - 5.6u^{\frac{1}{1+\epsilon}} \left(\frac{\log t^3}{N_{i}(t-1)} \right)^{\frac{\epsilon}{1+\epsilon}} +\Delta_{i} \\
    & \stackrel{(*)}{\geq} \ \widehat{\mu}_{i}(t)+ \sqrt{\frac{2V_{i}(t) \log t^3}{N_{i}(t-1)}} +  \frac{ 10\widehat{M}_{i}(t) \log t^3}{N_{i}(t-1)} \\
    & = B_{i}(t).
\end{align*}
 The step marked with (\raisebox{-0.5ex}{*}) is a consequence of the fact that both \eqref{eq:robust3}, \eqref{eq:robust4} and \eqref{eq:robust5} are false.
In particular, we need to show that
\begin{align}
\label{eq:delta_bound}
    \Delta_{i} \geq 5.6 u^{\frac{1}{1+\epsilon}} \left(\frac{\log t^3}{N_{i}(t-1)} \right)^{\frac{\epsilon}{1+\epsilon}} + \sqrt{\frac{2V_{ i}(t) \log t^3}{N_{i}(t-1)}}+  \frac{ 10 \widehat{M}_{i}(t) \log t^3}{N_{i}(t-1)}. \tag{\raisebox{-0.5ex}{*}}
\end{align}
To do so, we make use of the following inequality derived by exploiting the independence between $\mathbf{X}_i(t-1)$ and $\mathbf{X}_i'(t-1)$:
\begin{align}
\label{eq:sigma_bound}
    \mathbb{E} [V_{i}(t)] & \leq \mathbb{E}\left[X^{2} \mathbbm{1}_{|X| \leq \widehat{M}_i(t)}\right]  \le \mathbb{E}\left[|X|^{1+\epsilon} \right]  \widehat{M}_i(t)^{1-\varepsilon}  \leq u\widehat{M}_i(t)^{1-\epsilon}.
\end{align}
Now, we make use of the fact that \eqref{eq:robust3}, \eqref{eq:robust4}, and \eqref{eq:robust5} are false together with \eqref{eq:sigma_bound}:
\begin{align*}
    \Delta_{i} & \stackrel{\eqref{eq:robust3}}\ge 20 u^{\frac{1}{1+\epsilon}} \left(\frac{\log t^3}{N_{i}(t-1)} \right)^{\frac{\epsilon}{1+\epsilon}} \\
    & \ge ({5.6}+\sqrt{2}+ 10 + 2\sqrt{2}) u^{\frac{1}{1+\epsilon}} \left(\frac{\log t^3}{N_{i}(t-1)} \right)^{\frac{\epsilon}{1+\epsilon}} \\
    & = 5.6 u^{\frac{1}{1+\epsilon}} \left(\frac{\log t^3}{N_{i}(t-1)} \right)^{\frac{\epsilon}{1+\epsilon}} + \sqrt{\dfrac{2\log t^3 u \left(\frac{u N_{i}(t-1)}{\log t^3}\right)^{\frac{1-\epsilon}{1+\epsilon}} }{N_{i}(t-1)}} +  \dfrac{(10 + 2\sqrt{2})\left(\frac{u N_{i}(t-1)}{\log t^3}\right)^{\frac{1}{1+\epsilon}}\log t^3}{N_{i}(t-1)} \\
    & \stackrel{\eqref{eq:robust5}}\ge 5.6 u^{\frac{1}{1+\epsilon}} \left(\frac{\log t^3}{N_{i}(t-1)} \right)^{\frac{\epsilon}{1+\epsilon}} + \sqrt{\dfrac{2 \log t^3 u \widehat{M}_{i}(t)^{1-\epsilon}}{N_{i}(t-1)} } + \dfrac{(10 + 2\sqrt{2})\widehat{M}_{i}(t) \log t^3}{N_{i}(t-1)} \\
    & \stackrel{\eqref{eq:sigma_bound}} \ge 5.6u^{\frac{1}{1+\epsilon}} \left(\frac{\log t^3}{N_{i}(t-1)} \right)^{\frac{\epsilon}{1+\epsilon}} + \sqrt{\frac{2\mathbb{E} [V_{i}(t)] \log t^3}{N_{i}(t-1)}} +  \frac{(10 + 2\sqrt{2}) \widehat{M}_{i}(t) \log t^3}{N_{i}(t-1)} \\
    & \stackrel{\eqref{eq:robust4}} \ge 5.6 u^{\frac{1}{1+\epsilon}} \left(\frac{\log t^3}{N_{i}(t-1)} \right)^{\frac{\epsilon}{1+\epsilon}} + \sqrt{\frac{2\log t^3}{N_{i}(t-1)}}\left[ \sqrt{V_{i}(t)} -2\widehat{M}_{i}(t)\sqrt{\frac{ \log t^3}{N_{i}(t-1)}} \right] \\
    & \qquad +\frac{ (10 + 2\sqrt{2}) \widehat{M}_{i}(t) \log t^3}{N_{i}(t-1)} \\
    &  \ge 5.6 u^{\frac{1}{1+\epsilon}} \left[\frac{\log t^3}{N_{i}(t-1)} \right]^{\frac{\epsilon}{1+\epsilon}} + 2\sqrt{\frac{V_i(t) \log t^3}{N_{i}(t-1)}}+  \frac{10 \widehat{M}_{i}(t)\log t^3}{N_i(t-1)}. \tag{\raisebox{-0.5ex}{*}}
\end{align*}
Finally, as a consequence of (\raisebox{-0.5ex}{*}), we have $
    B_{1}(t) > B_{i}(t)    $
but this is a contradiction since $T_t = i$. Thus, statements \eqref{eq:robust1} to \eqref{eq:robust5} cannot be false simultaneously. We now proceed with a union bound over all the possible values of $N_i(t-1)$ and of the previously introduced concentration inequalities to bound with $\frac{1}{t^3}$ the probabilities of events \eqref{eq:robust1}, \eqref{eq:robust2}, \eqref{eq:robust4}, and \eqref{eq:robust5} to be true:
\begin{align*}
    &\mathbb{P}\left(\exists N_{i}(t-1) \in [t] \,:\,\{\eqref{eq:robust1} \text{ is true} \}  \text{ or } \{\eqref{eq:robust2} \text{ is true}\}  \text{ or }  \{\eqref{eq:robust4} \text{ is true}\} \text{ or } \{\eqref{eq:robust5} \text{ is true}\}\right) \le \\
    & \qquad \le 6 \sum_{s=1}^t \frac{1}{t^3} = \frac{6}{t^2},
\end{align*}
where for \eqref{eq:robust4}, we used the second inequality of Theorem 10 of \citep{maurer2009empirical} (bounding $1/(n-1) \le 2/n$) and for \eqref{eq:robust5}, we used Theorem~\ref{thr:M_hat_bound}. To proceed, we introduce the quantity:
\begin{align*}
    v \coloneqq \left\lceil 60 \left(\dfrac{ u}{\Delta_{i}^{1+\varepsilon}} \right)^{{\frac{1}{\varepsilon}}}\log{T} \right\rceil.
\end{align*}
It's now time to bound the expected number of times each arm is pulled:
\begin{equation}
\begin{aligned}
\label{eq:n_pulls_bound}
\mathbb{E} [N_{i}(T)] =\mathbb{E} \left[\sum_{t=1}^{T} \mathbbm{1}_{\{I_{t}=i \text{and} \mathcal{E}_{i,t} \}}\right] & \leq v +\mathbb{E} \left[\sum_{t=v+1}^{T} \mathbbm{1}_{\{I_{t}=i \text { and } \{\eqref{eq:robust3} \text { is false }\}\}}\right] \\
& \le v +\mathbb{E} \left[\sum_{t=v+1}^{T}  \mathbbm{1}_{\{I_{t}=i \text { and }\{\eqref{eq:robust1} \text { or }\eqref{eq:robust2} \text{ or } \eqref{eq:robust4} \text{ or } \eqref{eq:robust5} \text { is true}\}\}}\right]\\
& \leq v +\sum_{t=v+1}^{T} \frac{6}{t^{2}}\\
& \leq v + 10.
\end{aligned}
\end{equation}
We now conclude the proof using the regret decomposition, considering the forced exploration through Equation \eqref{eq:rr-pulls} and that the effective number of pulls is doubled:
\begin{align*}
    R_T (\text{\algnameshort}, \boldsymbol{\nu})\le \sum_{i:\Delta_i>0} \left[\left(120\left(\frac{u}{\Delta_i}\right)^\frac{1}{\epsilon}+\frac{24\Delta_i}{\mathbb{P}_{\nu_i}(X\neq 0)}\right)\log \frac{T}{2} + 20\Delta_i \right].
\end{align*}

\end{proof}

\TheoremWC*
\begin{proof}
Let us fix $\Delta>0$, to be chosen later. We have:
\begin{align*}
    R_{T}&(\text{\algnameshort}, \boldsymbol{\nu}) = \sum_{i \in [K]} \Delta_i \left(2\mathbb{E}[N_i(T/2)] + \mathbb{E}_{\nu_i}[N_i^{\text{FE}}(T/2)]\right) \notag \\
    &= \sum_{i: \Delta_i \le \Delta} 2\Delta_i \mathbb{E}[N_i(T/2)] + \sum_{i: \Delta_i > \Delta} 2\Delta_i \mathbb{E}[N_i(T/2)] + \sum_{i: \Delta_i >0} \frac{27\Delta_i}{\mathbb{P}_{\nu_i}(X\neq 0)}\log \frac{T}{2} \notag \\
    &\le \Delta T + \sum_{i: \Delta_i > \Delta} 2\Delta_i \left(60\left(\frac{u}{\Delta_i^{{1+\epsilon}}}\right)^{{\frac{1}{\epsilon}}}\log \frac{T}{2} + 10\right)+ \sum_{i: \Delta_i >0} \frac{24\Delta_i}{\mathbb{P}_{\nu_i}(X\neq 0)}\log \frac{T}{2} \notag \\
    &\le \Delta T + 2 K \left(60\left(\frac{u}{\Delta}\right)^{{\frac{1}{\epsilon}}}\log \frac{T}{2} \right) + \sum_{i: \Delta_i >0} \left( \frac{24\Delta_i}{\mathbb{P}_{\nu_i}(X\neq 0)}\log \frac{T}{2} + 20\Delta_i\right) \notag \\
    &\stackrel{(*)}{\le} 120^{\frac{\epsilon}{1+\epsilon}} (1+\epsilon) \epsilon^{-\frac{\epsilon}{1+\epsilon}}\left( K\log \frac{T}{2}\right)^{\frac{\epsilon}{1+\epsilon}} (uT)^{\frac{1}{1+\epsilon}} + \sum_{i: \Delta_i >0} \left( \frac{24\Delta_i}{\mathbb{P}_{\nu_i}(X\neq 0)}\log \frac{T}{2} + 20\Delta_i\right)\\
    &  \stackrel{(**)}{\le}  46 \left( K\log \frac{T}{2}\right)^{\frac{\epsilon}{1+\epsilon}} (uT)^{\frac{1}{1+\epsilon}}+ \sum_{i: \Delta_i >0} \left( \frac{24\Delta_i}{\mathbb{P}_{\nu_i}(X\neq 0)}\log \frac{T}{2} + 20\Delta_i\right),
\end{align*}
where the step marked with (\raisebox{-0.5ex}{*}) follows by a proper choice of $\Delta$ minimizing the bound:
\begin{align*}
T - 120Ku^\frac{1}{\epsilon}\epsilon^{-1}\Delta^{-\frac{1+\epsilon}{\epsilon}}\log \frac{T}{2} = 0 \implies \Delta = \left( \frac{120 Ku^\frac{1}{\epsilon}\log \frac{T}{2}}{\epsilon T} \right)^{\frac{\epsilon}{1+\epsilon}},
\end{align*}
and step marked with (\raisebox{-0.5ex}{**}) follows by bounding simple numerical bounds.
\end{proof}

\clearpage
\section{Efficient Numerical Resolution of Equation \eqref{eq:M_hat}}\label{apx:comp}
In this appendix, we present a computationally efficient strategy that can be implemented in Algorithm \ref{alg:alg} to execute line \ref{line:M_hatAlg}, \ie the solution of the root-finding problem. In particular, to solve the equation:
\begin{align}
    f_{s}(\mathbf{X}';M,\delta) \coloneqq \frac{1}{s} \sum_{j\in[s]}\frac{\min \{(X'_j)^{2}, M^{2}\}}{M^2} - \frac{c \log \delta^{-1}}{s} = 0.\tag{\ref{eq:M_hat}}
\end{align}
\RestyleAlgo{ruled}
\LinesNumbered
\begin{algorithm2e}[h!]
\caption{Computationally Efficient Threshold Estimation.}\label{alg:numerical}
\DontPrintSemicolon
{\small
\SetKwInOut{Input}{input}
}
Reward set $\mathbf{X}' = \{X_1', \ldots, X_s'\}$, time counter $\tau$, machine tolerance $\eta>0$. \label{line:input}\\
Initialize counter $h\leftarrow 0$, initial guess $x_0 \leftarrow \eta$, initial value $y_0 \leftarrow f_s(\mathbf{X}'; x_0, \tau^{-3})$. \\
\While{$y_h > 0$}{ 
$x_{h+1} \leftarrow 2x_{h}$ \label{line:start2}\\
$y_{h+1} \leftarrow f_s(\mathbf{X}'; x_{h+1}, \tau^{-3})$ \\
$h \leftarrow h + 1$ \label{line:end2} \\
}
Return $x_{h}$ \label{line:return}.
\end{algorithm2e}

We propose Algorithm \ref{alg:numerical} to find an upper bound $\bar{M}_s(\tau^{-3})$ on the true solution $\widehat{M}_s(\tau^{-3})$ which is based on \emph{bisection}. The strategy works as follows. We provide the minimum numerical tolerance of our machine $\eta>0$, start from an initial guess $x_0 = \eta$, then, if this guess is an underestimation (\ie $f_s(\cdot,x_0)$ yields a positive value $y_0$) we proceed to iteratively double our guess until the real threshold has been passed (lines \ref{line:start2}-\ref{line:end2}). 
In line \ref{line:return}, we return the final guess $x_{h}$. If the initial guess is already an overestimation of the threshold (\ie $f_s(\cdot, x_0)$ yields a negative value $y_0$), we simply have $x_0 = x_h = \eta$.

We point out that, by construction, the output of Algorithm \ref{alg:numerical} can be \textit{at most} two times the true solution to Equation \eqref{eq:M_hat}, \ie $\bar{M}_s(\tau^{-3}) \le 2\widehat{M}_s(\tau^{-3})$. Thus, regret guarantees for Algorithm \ref{alg:alg} remain the same (up to numerical constants) even when performing this approximation of the threshold. In particular, in the proof of Theorem \ref{thr:upper_bound}, we can modify \eqref{eq:robust5} as follows:
\begin{align*}
    \bar{M}_{i}(t) \geq 2\left(\dfrac{u N_i(t-1)}{\log t^3}\right)^{\frac{1}{1+\epsilon}},
\end{align*}
and the final result remains the same up to multiplicative constants.

We now characterize the computational complexity of Algorithm \ref{alg:numerical}, \ie the maximum number of steps to be performed before returning a solution.
\begin{restatable}[Upper Bound on Algorithm \ref{alg:numerical} Number of Steps]{prop}{NumericalApprx}
\label{prop:numerical}
Let $\eta$ be the minimum numerical tolerance, and assume $\eta \le \widehat{M}_s(\tau^{-3})$. Then, in at most $\bar{h}_{\eta, \tau}(\epsilon,u)$ steps such that:
\begin{align*}
    \bar{h}_{\eta, \tau}(\epsilon,u) = \log_2\left(\frac{1}{\eta}\left(\frac{us}{\log(\tau^3)}\right)^\frac{1}{1+\epsilon}\right),
\end{align*}
Algorithm \ref{alg:numerical}, returns a solution $x_{\bar{h}_{\eta, \tau}(\epsilon,u)}$ s.t.
\begin{align*}
    \mathbb{P}\left( \frac{x_{\bar{h}_{\eta, \tau}(\epsilon,u)}}{\widehat{M}_s(\tau^{-3})} \in [1,2] \right) \ge 1-\frac{2}{\tau^3}.
\end{align*}
\end{restatable}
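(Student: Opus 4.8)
The plan is to combine the monotonicity of the root-defining function $f_s$ with the high-probability upper bound on $\widehat{M}_s(\tau^{-3})$ already furnished by Lemma~\ref{thr:M_hat_bound}, keeping the accuracy of the returned value and the iteration count as two separate issues. First I would record the elementary structural fact underlying the whole algorithm: for every fixed sample set $\mathbf{X}'$ and every $\delta$, the map $M \mapsto f_s(\mathbf{X}';M,\delta)$ is continuous and non-increasing on $(0,+\infty)$, since each summand $\min\{(X'_j)^2,M^2\}/M^2$ equals the constant $1$ for $M \le |X'_j|$ and the strictly decreasing quantity $(X'_j)^2/M^2$ for $M > |X'_j|$ (this is also what yields uniqueness of the positive root, cf.\ Proposition~\ref{thr:uniqueness}). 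Under the standing hypothesis $\eta \le \widehat{M}_s(\tau^{-3})$ — which presupposes the root exists, so that $f_s(\mathbf{X}';\widehat{M}_s(\tau^{-3}),\tau^{-3}) = 0$ — monotonicity gives $f_s(\mathbf{X}';\eta,\tau^{-3}) \ge 0$, hence Algorithm~\ref{alg:numerical} enters its doubling loop (unless $\eta$ already equals the root, a trivial case).

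Next, since $x_h = 2^h\eta$ increases in $h$ and $f_s$ is non-increasing, the loop terminates at the first index $h^\star$ with $2^{h^\star}\eta \ge \widehat{M}_s(\tau^{-3})$, and minimality of $h^\star$ forces $2^{h^\star-1}\eta < \widehat{M}_s(\tau^{-3})$. Consequently
\begin{align*}
    \widehat{M}_s(\tau^{-3}) \;\le\; x_{h^\star} \;=\; 2^{h^\star}\eta \;<\; 2\,\widehat{M}_s(\tau^{-3}),
    \qquad
    h^\star = \big\lceil \log_2\!\big(\widehat{M}_s(\tau^{-3})/\eta\big)\big\rceil,
\end{align*}
so the accuracy claim $x_{h^\star}/\widehat{M}_s(\tau^{-3}) \in [1,2)$ holds \emph{deterministically} whenever the root exists, with no appeal to concentration.

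The remaining step is to convert the bound on $h^\star$ — random because $\widehat{M}_s(\tau^{-3})$ is — into the stated expression $\bar h_{\eta,\tau}(\epsilon,u)$. Here I would invoke Lemma~\ref{thr:M_hat_bound} with $\delta = \tau^{-3}$ and the algorithm's constant $c = (1+\sqrt2)^2$, for which $(\sqrt{c}-\sqrt{2})^2 = 1$, obtaining that with probability at least $1-2\tau^{-3}$,
\begin{align*}
    \widehat{M}_s(\tau^{-3}) \;\le\; \left(\frac{us}{\log(\tau^3)}\right)^{\frac{1}{1+\epsilon}}.
\end{align*}
On this event $h^\star \le \log_2\!\big(\tfrac{1}{\eta}(us/\log(\tau^3))^{1/(1+\epsilon)}\big) = \bar h_{\eta,\tau}(\epsilon,u)$ (up to rounding the ceiling to the nearest integer), i.e.\ the algorithm has stopped within $\bar h_{\eta,\tau}(\epsilon,u)$ steps, and its output then satisfies $x_{h^\star}/\widehat{M}_s(\tau^{-3}) \in [1,2]$ by the previous step. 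Intersecting the deterministic accuracy guarantee with this probability-$(\ge 1-2/\tau^3)$ bound on the iteration count yields exactly the statement.

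I do not expect a genuine obstacle here: the argument is a routine pairing of the deterministic geometry of a doubling search with the concentration bound of Lemma~\ref{thr:M_hat_bound}. The only care needed is bookkeeping — keeping the deterministic factor-$2$ accuracy (valid once $\widehat{M}_s(\tau^{-3})$ exists) separate from the probabilistic bound on the number of iterations, which is where the $1-2/\tau^3$ originates — together with the observation, made in Appendix~\ref{apx:comp}, that returning any overestimate of $\widehat{M}_s(\tau^{-3})$ within a factor $2$ suffices to preserve the regret guarantees of \algnameshort up to constants, so the output of Algorithm~\ref{alg:numerical} is fit for purpose.
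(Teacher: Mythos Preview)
Your argument is correct. The paper itself does not supply a proof of Proposition~\ref{prop:numerical}; it is stated in Appendix~\ref{apx:comp} and followed only by a discussion of the resulting bound, so there is nothing to compare against. Your route---monotonicity of $M\mapsto f_s(\mathbf{X}';M,\delta)$ to obtain the deterministic sandwich $\widehat{M}_s(\tau^{-3})\le x_{h^\star}<2\widehat{M}_s(\tau^{-3})$, then Lemma~\ref{thr:M_hat_bound} with $c=(1+\sqrt{2})^2$ (so that $(\sqrt{c}-\sqrt{2})^2=1$) and $\delta=\tau^{-3}$ to bound $h^\star$ with probability at least $1-2/\tau^3$---is exactly the natural one and fills the gap cleanly. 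The separation you maintain between the deterministic accuracy guarantee and the probabilistic iteration-count bound is the right bookkeeping, and your remark that the factor-$2$ overestimate only affects constants in the downstream regret analysis matches the paper's own observation just above the proposition.
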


Proposition \ref{prop:numerical} states an upper bound for the number of steps of Algorithm \ref{alg:numerical} as a function of both $\epsilon$ and $u$. However, we remark that these two are not required as input to the numerical solver. Moreover, it emerges a dependence on the inverse of the numerical tolerance of the machine on which the algorithm is run. Thanks to the logarithm, this dependence hardly becomes an issue. If we consider a very small tolerance of $10^{-16}$ (which is the standard tolerance of many programming languages) the number of steps becomes:
\begin{align*}
    \bar{h}_{\eta, \tau}(\epsilon,u) = \log_2\left(\left(\frac{us}{\log(\tau^3)}\right)^\frac{1}{1+\epsilon}\right) + 16\log_2(10),
\end{align*}
which is totally reasonable.

\end{document}